\newtheorem{theorem}{Theorem}[section]
\DeclareRobustCommand{\mybox}[2][gray!25]{%
	\vspace{0.2cm}
	\begin{tcolorbox}[   
		breakable,
		left=0pt,
		right=0pt,
		top=0pt,
		bottom=0pt,
		colback=#1,
		colframe=#1,
		width=1.01\dimexpr\textwidth\relax, 
		enlarge left by=-1mm,
		boxsep=5pt,
		arc=0pt,outer arc=0pt,
		]
		#2
	\end{tcolorbox}
}
\DeclareSymbolFont{bfletters}{OT1}{cmr}{bx}{n}
\DeclareSymbolFontAlphabet{\mathbf}{bfletters}
\DeclareMathSymbol{X}{\mathalpha}{bfletters}{`X}
\DeclareMathSymbol{L}{\mathalpha}{bfletters}{`L}
\DeclareMathSymbol{W}{\mathalpha}{bfletters}{`W}
\DeclareMathSymbol{I}{\mathalpha}{bfletters}{`I}
\DeclareMathSymbol{U}{\mathalpha}{bfletters}{`U}
\DeclareMathSymbol{D}{\mathalpha}{bfletters}{`D}
\DeclareMathSymbol{P}{\mathalpha}{bfletters}{`P}
\DeclareMathSymbol{K}{\mathalpha}{bfletters}{`K}
\DeclareMathSymbol{A}{\mathalpha}{bfletters}{`A}
\DeclareMathSymbol{B}{\mathalpha}{bfletters}{`B}
\DeclareMathSymbol{M}{\mathalpha}{bfletters}{`M}
\DeclareMathSymbol{R}{\mathalpha}{bfletters}{`R}
\DeclareMathSymbol{u}{\mathalpha}{bfletters}{`u}
\DeclareMathSymbol{x}{\mathalpha}{bfletters}{`x}
\DeclareMathSymbol{c}{\mathalpha}{bfletters}{`c}
\DeclareMathSymbol{y}{\mathalpha}{bfletters}{`y}
\DeclareMathSymbol{z}{\mathalpha}{bfletters}{`z}
\DeclareMathSymbol{p}{\mathalpha}{bfletters}{`p}
\DeclareMathOperator*{\argmin}{arg\,min}
\DeclareMathOperator*{\argmax}{arg\,max}
\renewcommand{\c}{\textit{c}}
\newcommand{\pp}{\textit{p}}
\newcommand{\trace}[1]{\text{tr}\hspace{-1px}\left(#1\right)}
\newcommand{\ncut}{\texttt{ncut}}
\newcommand{\rcut}{\texttt{rcut}}
\newcommand{\spanning}[1]{\text{span}(#1)}
\newcommand{\Rbb}{\mathbb{R}}
\renewcommand{\sf}[1]{\mathsf{#1}} 
\renewcommand{\O}{\mathcal{O}}
\newcommand{\todo}[1]{{\color{red}\textbf{TODO:} #1}}
\newtheorem{definition}[theorem]{Definition}
\newtheorem{lemma}[theorem]{Lemma}
\newtheorem{remark}[theorem]{Remark}
\title{Approximating Spectral Clustering via Sampling: \\a Review}
\author{Nicolas Tremblay{\small$^1$} and Andreas Loukas{\small$^2$}}
\date{%
	\small $^1$ CNRS, Univ. Grenoble Alpes, Grenoble-INP, GIPSA-lab, France\\%
	$^2$ Ecole Polytechnique F\'ed\'erale de Lausanne, Switzerland\\[2ex]%
}
\begin{document}

    
	
	%
	%
	\maketitle
	
	
	\begin{abstract}
	Spectral clustering refers to a family of unsupervised learning algorithms that compute a spectral embedding of the original data based on the eigenvectors of a similarity graph. This non-linear transformation of the data is both the key of these algorithms' success and their Achilles heel: forming a graph and computing its dominant eigenvectors can indeed be computationally prohibitive when dealing with more that a few tens of thousands of points. 
    In this paper, we review the principal research efforts aiming to reduce this computational cost. We focus on methods that come with a theoretical control on the clustering performance and incorporate some form of sampling in their operation. Such methods abound in the machine learning, numerical linear algebra, and graph signal processing literature and, amongst others, include Nystr\"om-approximation, landmarks, coarsening, coresets, and compressive spectral clustering. We present the approximation guarantees available for each and discuss practical merits and limitations. Surprisingly, despite the breadth of the literature explored, we conclude that there is still a gap between theory and practice: the most scalable methods are only intuitively motivated or loosely controlled, whereas those that come with end-to-end guarantees rely on strong assumptions or enable a limited gain of computation time.
	\end{abstract}

	\section{Introduction}

	Clustering is a cornerstone of our learning process and, thus, of our understanding of the world. Indeed, we can all distinguish between a rose and a tulip precisely because we have learned what these flowers \emph{are}.
	Plato would say that we learned the Idea --or Form~\cite{sedley_introduction_2016}-- of both the rose and the tulip, which then enables us to recognize all instances of such flowers. A machine learner would say that we learned two \emph{classes}: their most discriminating features (shape, size, number of petals, smell, etc.) as well as their possible intra-class variability. 
	
	Mathematically speaking, the first step on the road to classifying objects (such as flowers) is to create an abstract representation of these objects: with each object $i$ we associate a feature vector $p_i\in\mathbb{R}^d$, where the dimension $d$ of the vector corresponds to the number of features one chooses to select for the classification task. The space $\mathbb{R}^d$ in this context is sometimes called the \emph{feature space}. The choice of representation will obviously have a strong impact on the subsequent classification performance. 
	Say that in the flower example we choose to represent each flower by only $d=3$ features: the average color of each RGB channel (level of red, green and blue) of its petals. This choice is not fail-proof: even though the archetype of the rose is red and the archetype of the tulip is yellow, we know that some varieties of both flowers can have very similar colors and thus a classification solely based on the color will necessarily lead to confusion. In fact, there are many different ways of choosing features: from features based on the expert knowledge of a botanist, to features learned by a deep learning architecture from many instances of labeled images of roses and tulips, via features obtained by hybrid methods more-or-less based on human intelligence (such as the first few components of a Principal Component Analysis of expert-based features). 
	
	The second step on the road to classifying $n$ objects is to choose a machine learning algorithm that groups the set of $n$ points $\sf{P}=(p_1, \ldots, p_n)$ in $k$ classes ($k$ may be known in advance or determined by the algorithm itself). Choosing an appropriate algorithm depends on the context:
	\begin{itemize}
		\item \textbf{Availability of pre-labeled data.} Classifying the points $\sf{P}$ in $k$ classes may be seen as assigning a label (such as ``rose" or "tulip" in our $k=2$ example) to each of the points. If one has access to some pre-labeled data, we are in the case of \emph{supervised learning}: a more-or-less parametrized model is first learned from the pre-labeled data and then applied to the unlabeled points that need classification. If one does not have access to any pre-labeled data, we are in the case of \emph{unsupervised learning} where classes are typically inferred only via geometrical consideration of the distribution of points in the feature space. If one has only access to a few labeled data, we are in the in-between case of \emph{semi-supervised learning} where the known labels are typically propagated in one form or another in the feature space. 
		\item \textbf{Inductive vs transductive learning.} Another important characteristic of a classification algorithm is whether it can be used to classify only the set of points $\sf{P}$ at hand (transductive), or if it can also be directly used to classify any never-seen data point $p_{n+1}$ (inductive).
	\end{itemize}
	This review focuses on the family of algorithms jointly referred to as \emph{spectral clustering}. These algorithms are unsupervised and transductive: no label is known in advance and one may not naturally\footnote{Out-of-sample extensions of spectral clustering do exist (see for instance Section 5.3.6 of~\cite{wierzchon_spectral_2018}), but they require additional work. } extend the results obtained on $\sf{P}$ to never-seen data points. Another particularity of spectral clustering algorithms is that the number of classes $k$ is known in advance. 
	
	Spectral clustering algorithms have received a large attention in the last two decades due to their good performance on a wide range of different datasets, as well as their ease of implementation. In a nutshell, they combine three steps:
	\begin{enumerate}
		\item  \textbf{Graph construction.} A sparse similarity graph is built between the $n$ points. 
		\item  \textbf{Spectral embedding.} The first $k$ eigenvectors of a graph representative matrix (such as the Laplacian) are computed.
		\item \textbf{Clustering.} $k$-means is performed on these spectral features, to obtain $k$ clusters.  
	\end{enumerate}
	For background information about spectral clustering, such as several justifications of its performance, out-of-sample extensions, as well as comparisons with local methods, the interested reader is referred to the recent book chapter~\cite{wierzchon_spectral_2018}.
	
	One of the drawbacks of spectral clustering is its computational cost as $n$, $d$, and/or $k$ become large (see Section~\ref{subsec:computational_complexity} for a discussion on the cost). Since the turn of the century, a large number of authors have striven to reduce the computational cost while keeping the high level of classification performance. The majority of such accelerating methods are based on sampling: they reduce the dimension of a sub-problem of spectral clustering, compute a low-cost solution in small dimension, and lift the result back to the original space.

	The goal of this paper is to review existing sampling methods for spectral clustering, focusing especially on their approximation guarantees. Some of the fundamental questions we are interested in are: \emph{where is the sampling performed and what is sampled precisely? how should the reduced approximate solutions be lifted back to the original space? what is the computational gain? what is the control on performances---if it exists?} Given the breadth of the literature on the subject, we do not try to be exhaustive, but rather to illustrate the key ways that sampling can be used to provide acceleration, paying special attention on recent developments on the subject. \\  
	
	
	\noindent \textbf{Paper organization.} We begin by recalling in Section~\ref{sec:SC} the prototypical spectral clustering algorithm. We also provide some intuitive and formal justification of why it works. The next three sections classify the different methods of the literature depending on where the sampling is performed with respect to the three steps of spectral clustering: 
	\begin{itemize}
		\item Section~\ref{sec:from_scratch} details methods that sample  directly in the original feature space.  
		\item Section~\ref{sec:spectral_embedding} assumes that the similarity graph is given and details methods that sample nodes and/or edges to approximate the spectral embedding.
		\item Section~\ref{sec:kmeans} assumes that the spectral embedding is given and details methods to accelerate the $k$-means step. 
	\end{itemize}
	Finally, Section~\ref{sec::perspectives} gives perspective on the limitations of existing works and discusses {key open problems}. \\ 
	
	\noindent \textbf{Notation.} Scalars, such as $\lambda$ or $d$, are written with low-case letters. Vectors, such as $u$, $z$ or the all-one vector $\mathbf{1}$, are denoted by low-case bold letters. Matrices, such as $W$ or $L$ are denoted with bold capital letters. Ensembles are denoted by serif font capital letters, such as $\sf{C}$ or $\mathsf{X}$. The ``tilde'' will denote approximations, such as in $\tilde{z}$ or $\tilde{U}_k$. We use so-called Matlab notations to slice matrices: given a set of indices $\sf{S}$ of size $m$ and an $n\times n$ matrix $W$, $W(\sf{S},:)\in\mathbb{R}^{m\times n}$ is $W$ reduced to the lines indexed by $\sf{S}$, $W(:,\sf{S})\in\mathbb{R}^{n\times m}$ is $W$ reduced to the columns indexed by $\sf{S}$, and $W(\sf{S},\sf{S})\in\mathbb{R}^{m\times m}$ is $W$ reduced to the lines and columns indexed by $\sf{S}$. The equation $U_k=U(:,:k)$ defines $U_k$ as the reduction of $U$ to its first $k$ columns. Also, $\bf{C}^\top$ is the transpose of matrix $\bf{C}$ and  $\bf{C}^+$ its Moore-Penrose pseudo-inverse. The operator $X = \text{diag}(x)$ takes as an input a vector $x \in \mathbb{R}^n$ and returns an $n\times n$ diagonal matrix $X$ featuring $x$ in its main diagonal, i.e., $X(i,j) = x(i)$ if $i=j$ and $X(i,j) = 0$, otherwise. Finally, we will consider graphs in a large part of this paper. We will denote by $G=(\sf{V}, \sf{E}, W)$ the undirected weighted graph of $|\sf{V}|=n$ nodes interconnected by $|\sf{E}|=e$ edges: $e_{ij}\in\sf{E}$ is the edge connecting nodes $i$ and $j$, with weight $W(i,j) \geq 0$. Matrix $W$ is the adjacency matrix of $G$. As $G$ is considered undirected, $W$ is also symmetric.

	
	\section{Spectral clustering}
	\label{sec:SC}

	The input of spectral clustering algorithms consists of (i) a set of points $\sf{P}=(p_1, p_2, \ldots, p_n)$ (also called featured vectors) representing $n$ objects in a feature space of dimension $d$, and (ii) the number of classes $k$ in which to classify these objects. The output is a partition of the $n$ objects in $k$ disjoint clusters.
	The prototypical spectral clustering algorithm~\cite{shi_normalized_2000, ng_spectral_2002}, dates back in fact to fundamental ideas by Fiedler~\cite{fiedler_algebraic_1973} and entails the following steps:

	\mybox{
	\textbf{Algorithm 1. The prototypical Spectral Clustering algorithm}\\
	\noindent\textbf{Input:} A set of $n$ points $\sf{P}=(p_1, p_2, \ldots, p_n)$ in dimension $d$ and a number of desired clusters  $k$.
	\begin{enumerate}[itemsep=-0.2ex]
		\item \text{Graph construction} (optional)
		\begin{enumerate}[topsep=-1cm,itemsep=-0.2ex]
			\item Compute the kernel matrix $K\in\mathbb{R}^{n\times n}$: $~~\forall(i,j),~ K(i,j) =  \kappa(\|p_i - p_j\|_2)$.
			\item Compute $W = s(K)$, a sparsified version of $K$.  
			\item Interpret $W$ as the adjacency matrix of a weighted undirected graph $G$. 
		\end{enumerate}
		\item \text{Spectral embedding} 
		\begin{enumerate}[topsep=-1cm,itemsep=-0.2ex]
			\item Compute the eigenvectors $u_1,\:u_2,\:\cdots,\:u_{k}$ associated with the $k$ smallest eigenvalues of a graph  representative matrix $R$ of $G$.
			\item Set $U_k = [ \:u_1|\:u_2|\:\cdots|\:u_{k}\: ]\in\mathbb{R}^{n\times k}$.
			\item Embed the $i$-th node to $x_i = \frac{U_k(i,:)^\top}{q(\|U_k(i,:)\|_2)}$, with $q(\cdot)$ a normalizing function. 
		\end{enumerate}
		\item \text{Clustering} 
		\begin{enumerate}[topsep=-1cm,itemsep=-0.2ex]
			\item Use $k$-means on $x_1, \ldots, x_n$ in order to identify $k$ centroids $c_1, \ldots, c_k$.
			\item Voronoi tesselation: construct one cluster 
			per centroid $c_\ell$ and assign each object $i$ to the cluster of the centroid closest to $x_i$.
		\end{enumerate}
	\end{enumerate}
	\noindent\textbf{Output:} A partition of the $n$ points in $k$ clusters.
}
	
	\noindent A few comments are in order:
	\begin{itemize}
		\item A common choice of kernel in step 1a is the radial basis function (RBF) kernel $\kappa(\|p_i - p_j \|_2) = \exp\left(-\|p_i - p_j \|_2^2/\sigma^2\right)$ for some user-defined $\sigma$. 
		The sparsification $s$ of $K$ usually entails setting the diagonal to $0$ and keeping only the $k$ largest entries of each column (i.e., set all others to $0$). The obtained matrix $K_{\text{sp}}$ is not symmetric in general and a final ``symmetrization'' step $W=K_{\text{sp}}+K_{\text{sp}}^\top$ is necessary to obtain a matrix $W$ interpretable as the adjacency matrix of a weighted undirected graph\footnote{Each node $i$ of $\sf{V}$ represents a point $p_i$, an undirected edge exists between nodes $i$ and $j$ if and only if $W(i,j)\neq0$, and the weight of that connection is $W(i,j)$.} $G=(\sf{V},\sf{E}, W)$.  This graph is called the $k$ nearest neighbour ($k$-NN) similarity graph (note that the $k$ used in this paragraph has nothing to do with the number of clusters). Other kernel functions $\kappa$ and sparsification methods are possible (see Section 2 of~\cite{von2007tutorial} for instance). 
		\item There are several possibilities for choosing the graph representative matrix $R$ in step 2a. We consider three main choices~\cite{von2007tutorial}. Let us denote by $D$ the diagonal degree matrix such that $D(i,i)=\sum_j W(i,j)$ is the (weighted) degree of node $i$. We define the \emph{combinatorial} graph Laplacian matrix $L = D-W$, the \emph{normalized} graph Laplacian matrix $L_n = I - D^{-1/2} W D^{-1/2}$, and the \emph{random walk} Laplacian $L_{rw} = I - D^{-1}W$. Other popular choices include\footnote{In some of these examples, the $k$ largest eigenvalues (instead of the $k$ lowest in the Laplacian cases) of the representative matrix, and especially their corresponding eigenvectors, are of interest. This is only a matter of sign of the matrix $R$ and has no impact on the general discussion.} the non-backtracking matrix~\cite{krzakala_spectral_2013}, degree-corrected versions of the modularity matrix~\cite{ali_improved_nodate}, the Bethe-Hessian matrix~\cite{saade_spectral_nodate} or similar deformed Laplacians~\cite{dallamico_optimized_2019}. 
		\item The normalizing function $q(\cdot)$ depends on which  representative matrix is chosen. In the case of the Laplacians, experimental evidence as well as some theoretical arguments~\cite{von2007tutorial} support using a unit norm normalization for the eigenvectors of $L_n$ (i.e. $q$ is the identity function), and no normalization for the eigenvectors of $L$ and $L_{rw}$ (i.e. $q(\cdot)=1$). 
		\item Step 1 of the algorithm is ``optional'' in the sense that in some cases the input is not a set of points but directly a graph. For instance, it could be a graph representing a social network between $n$ individuals, where each node is an individual and there is an edge between two nodes if they know each other. The weight on each edge can represent the strength of their relation (for instance close to 0 if they barely know each other, and close to 1 if they are best friends). The goal is then to classify individuals based on the structure of these social connections and is usually referred to as \emph{community detection} in this context~\cite{fortunato_community_2010}. Given the input graph, and the number $k$ of communities to identify, one can run spectral algorithms starting directly at step 2. Readers only interested in such applications can skip Section~\ref{sec:from_scratch}, which is devoted to sampling techniques designed to accelerate step 1. 
	\end{itemize}
	
	After the spectral embedding $\sf{X}=(\:x_1,\:\ldots,\:x_n\:)$ has been identified, spectral clustering uses $k$-means in order to find the set of $k$ centroids $\sf{C} = (\:c_1,\:\ldots,\:c_k\:)$ that best represents the data. Formally, the $k$-means cost function to minimize reads:
	\begin{align}
	\label{eq:def_k_means_cost}
	f(\sf{C}; \sf{X}) = \sum_{x\in\sf{X}}\:\: \min_{c\in\sf{C}} \|x - c\|^2.  
	\end{align}
	We would ideally hope to identify the set of $k$ centroids $\sf{C}^*$ 
    minimizing $f(\sf{C}; \sf{X})$. Solving exactly this problem is NP-hard~\cite{drineas_clustering_1999}, so one commonly resorts to approximation and heuristic solutions (see for instance \cite{tarsitano_computational_2003} for details on different such  heuristics). The most famous is the so-called Lloyd-Max heuristic algorithm:
	\mybox{
		\textbf{Algorithm 2. The Lloyd-Max algorithm~\cite{lloyd_least_1982}}\\
		\noindent\textbf{Input:} A set of $n$ points $\sf{X}=(x_1, x_2, \ldots, x_n)$ and a number of desired clusters  $k$.
		\begin{enumerate}[itemsep=-0.2ex]
			\item Start from an initial guess $\sf{C}_{\text{ini}}$ of $k$ centroids
			\item Iterate until convergence:
			\begin{enumerate}[topsep=-1cm,itemsep=-0.2ex]
				\item Assign each point $x_i$ to its closest centroid to obtain a partition of $\sf{X}$ in $k$ components.
				\item Update each centroid $c_\ell$ as the average position of all points $x$ in component $\ell$.
			\end{enumerate}
		\end{enumerate}
		\noindent\textbf{Output:} A set of $k$ centroids $\sf{C} = (c_1 , \ldots, c_k)$.
	}
	When the clusters are sufficiently separated and $\sf{C}_{ini}$ is not too far from the optimal centroids, then the Lloyd-Max algorithm converges to the correct solution~\cite{kumar2010clustering}. Otherwise, it typically ends up in a local minimum. \\
	
	%
	
	\noindent\textbf{A remark on notation.}  Two quantities of fundamental importance in spectral clustering are the eigenvalues $\lambda_i$ and especially the eigenvectors $u_i$ of the graph Laplacian matrix. We adopt the graph theoretic convention of sorting eigenvalues in non-decreasing order: $0=\lambda_1 \leq \lambda_2 \leq \ldots \leq  \lambda_n.$ Also, for reasons of brevity, we overload notation and use the same symbol for the spectrum of the three Laplacians $L$, $L_n$ and $L_{rw}$. Thus, we advise the reader to rely on the context in order to discern which Laplacian gives rise to the eigenvalues and eigenvectors. Finally, the reader should keep in mind that the largest eigenvalue if always bounded by $2$ for $L_n$ and $L_{rw}$. 
	
	\subsection{An illustration of spectral clustering}
	
	The first two steps of the algorithm can be understood as a non-linear transformation from the initial feature space to another feature space (that we call spectral feature space or spectral embedding): a transformation of features $p_i$ in $\Rbb^d$ to spectral features $x_i$ in $\Rbb^k$.
	The first natural question that arises is why do we run $k$-means on the spectral features $\sf{X}=(x_1, \ldots, x_n)$ that are subject to parameter tuning and costly to compute, rather than directly run $k$-means on the original $\sf{P}$? Figures~\ref{fig:kmeans_moons} and~\ref{fig:SC_moons} illustrate the answer. 
	
	In Figure~\ref{fig:kmeans_moons}, we show the result of $k$-means directly on a set of artificial features $\sf{P}$ known as the two-half moons dataset. In this example, the intuitive ground truth is that each half-moon corresponds to a class, that we want to recover. Running $k$-means directly in this $2D$ feature space will necessarily output a linear separation between the two obtained Voronoi cells and will thus necessarily fail, as no straight line can separate the two  half-moons. 
	
	\begin{figure}
		\begin{center}
			\includegraphics[width=0.55\textwidth]{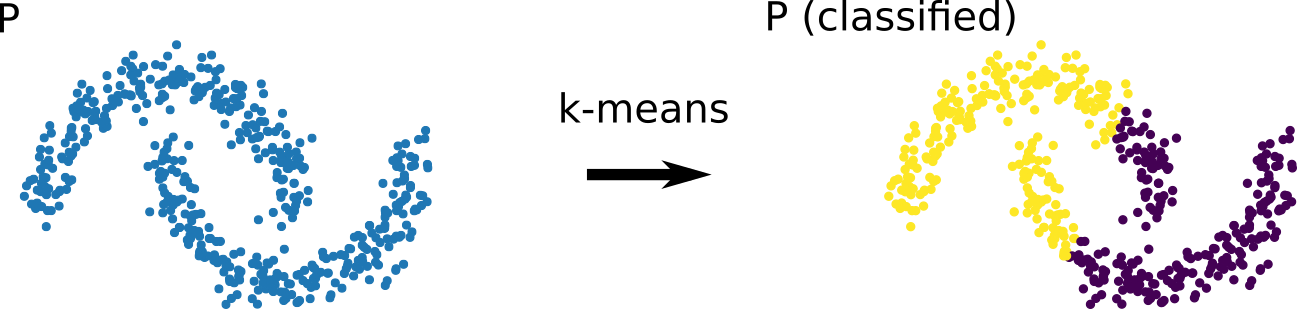}
		\end{center}
		\caption{ Left: the two half-moons synthetic dataset ($n=500$, $d=2$, $k=2$). Right: $k$-means with $k=2$ directly on $\sf{P}$ is unsuccessful to separate the two half-moons.}
		\label{fig:kmeans_moons}
	\end{figure}
	
	Spectral clustering, via the computation of the spectral features of a similarity graph, transforms these original features $\sf{P}$ in spectral features $\sf{X}$ that are typically linearly separable by $k$-means: the two half-moons are succesfully recovered! We illustrate this in Figure~\ref{fig:SC_moons}. To understand why this works, there are several theoretical arguments of varying rigour. We propose a few in the following. 
	
	\begin{figure}
		\begin{center}
		\includegraphics[width=0.85\textwidth]{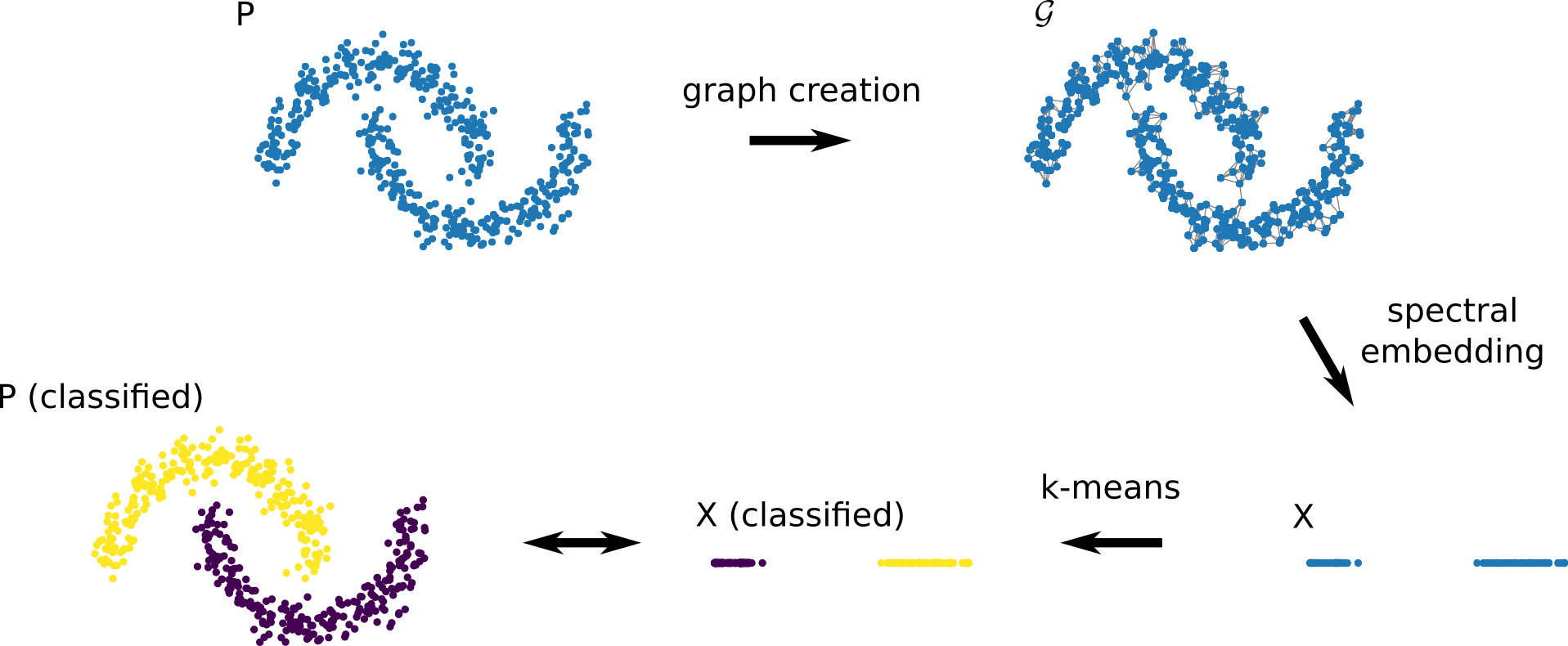}
	\end{center}
		\caption{ Illustration of the spectral clustering algorithm on the two half-moons dataset ($n=500$, $d=2$, $k=2$). The graph is created with a RBF kernel and via a sparsification done with $k$-nearest neighbours (with $k=5$). The spectral embedding is done with the two eigenvectors associated to the two smallest eigenvalues of the combinatorial Laplacian matrix $L$. The embedding $\sf{X}$ is here in practice in $1D$ as the first eigenvector of $L$ is always constant and thus not discriminative. Observe how the two clusters are now linearly separable in the spectral feature space. $k$-means on \emph{these} features successfully recovers the two half-moons. }
		\label{fig:SC_moons}
	\end{figure}

	\subsection{Justification of spectral clustering}
	\label{subsec:graph_cut}
	A popular approach --and by no means the only one, see Section~\ref{subsubsec:relax_bro}-- to justify spectral clustering algorithms stems from its connection to graph partitioning. 
	Suppose that the similarity graph $G=(\sf{V}, \sf{E}, W)$ has been obtained and we want to compute a partition\footnote{By definition, a \emph{partition} $\mathcal{P}=\{\sf{V}_1, \sf{V}_2, \ldots, \sf{V}_k\}$ of the nodes $\sf{V}$ is such that  $\cup_{\ell=1,\ldots,k}\sf{V}_\ell=\sf{V}$ and $\forall \ell\neq \ell', \sf{V}_\ell\cap\sf{V}_{\ell'}=\emptyset$} $\mathcal{P}=\{\sf{V}_1, \sf{V}_2, \ldots, \sf{V}_k\}$ of the nodes $\sf{V}$ in $k$ groups. Intuitively, a good clustering objective function should favor strongly connected nodes to end up in the same subset, and nodes that are far apart in the graph to end up in different subsets. This intuition can be formalized with graph cuts.
	
	Considering two groups $\sf{V}_1$ and $\sf{V}_2$, define $w(\sf{V}_1,\sf{V}_2)=\sum_{i\in \sf{V}_1}\sum_{j\in \sf{V}_2} W(i,j)$ to be the total weight of all links connecting  $\sf{V}_1$ to  $\sf{V}_2$. Also, denote by $\bar{\sf{V}}_\ell$ the complement of $\sf{V}_\ell$ in $\sf{V}$, such that $w(\sf{V}_\ell,\bar{\sf{V}}_\ell)$ is the total weight one needs to cut in order to disconnect $\sf{V}_\ell$ from the rest of the graph. 
	Given these definitions, the simplest graph cut objective function, denoted by $\texttt{cut}$, is:
	\begin{align}
	\texttt{cut}\left(\mathcal{P}=\{\sf{V}_1, \ldots, \sf{V}_k\}\right)	 = \frac{1}{2}\sum_{\ell=1}^k w(\sf{V}_\ell, \bar{\sf{V}}_\ell).
	\end{align}
	The best partition according to the $\texttt{cut}$ criterion is 
	$\mathcal{P}^* = \text{argmin}_{\mathcal{P}} ~~\texttt{cut}(\mathcal{P})$. 
	For $k=2$, solving this problem can be done exactly in $O(n e + n^2 \log(n))$\ amortized time using the Stoer-Wagner algorithm~\cite{stoer1997simple} and approximated in nearly linear time~\cite{karger2000minimum}. Nevertheless, this criterion is not satisfactory as it often separates an individual node from the rest of the graph, with no attention to the balance of the sizes or volumes of the groups. In clustering, one usually wants to partition into groups that are ``large enough". There are two famous ways to balance the previous cost in the machine learning literature\footnote{The reader should note that in the graph theory literature, the measure of conductance is preferred over \ncut. Conductance is $\max_{\ell}  w(\sf{V}_\ell, \bar{\sf{V}}_\ell) / w(\sf{V}_\ell) $. The two measures are equivalent when $k=2$.}: the \emph{ratio cut}~\cite{wei_towards_nodate} and  \emph{normalized cut}~\cite{shi_normalized_2000} cost functions, respectively defined as: 
	\begin{align}
	\rcut(\mathcal{P})	 = \frac{1}{2}\sum_{\ell=1}^k \frac{w(\sf{V}_\ell, \bar{\sf{V}}_\ell)}{|\sf{V}_\ell|} ~~~~\text{and}~~~~\ncut(\mathcal{P})	 = \frac{1}{2}\sum_{\ell=1}^k \frac{w(\sf{V}_\ell, \bar{\sf{V}}_\ell)}{\text{vol}(\sf{V}_\ell)},
	\end{align}
	where $|\sf{V}_\ell|$ is the number of nodes in $\sf{V}_\ell$ and 
	$\text{vol}(\sf{V}_\ell)=\sum_{i\in\sf{V}_\ell} \sum_{j\in\sf{V}} W(i,j)$ is the so-called volume of $\sf{V}_\ell$. 
	The difference between them is that \ncut\ favors clusters of large volume, whereas \rcut\ only considers cluster size---though for a $d$-regular graph with unit weights the two measures match (up to multiplication by $1/d$). 
	Unfortunately, it is hard to minimize these cost functions directly: minimizing these two balanced costs is NP-hard~\cite{wagner_between_1993, shi_normalized_2000} and one needs to search over the space of all possible partitions which is of exponential size. \\
	
	\noindent\textbf{A continuous relaxation.} Spectral clustering may be interpreted as a continuous relaxation of the above minimization problems. Without loss of generality, in the following we concentrate on relaxing the \rcut\ minimization problem (\ncut\ is relaxed almost identically). 
	Given a partition $\mathcal{P}=(\sf{V}_1, \ldots, \sf{V}_k)$, let us define
	\begin{align}
	\label{eq:X}
	{\bf{C}} = \left(\frac{z_1}{\sqrt{|\sf{V}_1|}}|\ldots|\frac{z_k}{\sqrt{|\sf{V}_k|}}\right) \in\mathbb{R}^{n\times k},
	\end{align}
	where $z_\ell\in\mathbb{R}^n$ is the indicator vector of $\sf{V}_\ell$:
	\begin{align}
	z_\ell(i) &= 
	\begin{cases}
	1 & \text{if node } i\in\sf{V}_\ell, \\
	0 & \text{otherwise.}
	\end{cases}
	\end{align}
	It will prove useful in the following to remark that, independently of how the partitions are chosen, we always have that ${\bf{C}}^\top {\bf{C}}=I$, the identity matrix in dimension $k$. With this in place, the problem of minimizing \rcut\ can be rewritten as (see discussion in~\cite{von2007tutorial}):
	\begin{align}
	\min_{{\bf{C}} \in \Rbb^{n\times k}} \trace {{\bf{C}}^\top L {\bf{C}}}   
	\quad \text{s.t.} \quad {\bf{C}}^\top {\bf{C}} = I \quad \text{and $~~~{\bf{C}}$ as in Eq.~\eqref{eq:X}}
	\label{eq:rcut_opt}
	\end{align}
	%
	%
	%
	To understand why this equivalence holds, one should simply note that 
	\begin{align} 
	\trace {{\bf{C}}^\top L {\bf{C}}} = \sum_{\ell=1}^k \frac{1}{|\sf{V}_\ell|} z_\ell^\top L z_\ell
	&= \sum_{\ell=1}^k \frac{1}{|\sf{V}_\ell|}\sum_{i>j} W(i,j) (z_\ell(i)-  z_\ell(j))^2 \notag \\ 
	&= \sum_{\ell=1}^k \frac{w(\sf{V}_\ell, \bar{\sf{V}}_{\ell})}{|\sf{V}_\ell|} = 2 \;\rcut(\mathcal{P}). \notag 
	\end{align}
	%
	%
	%
	Solving Eq.~\eqref{eq:rcut_opt} is obviously still NP-hard as the only thing we have achieved is to rewrite the \rcut\ minimization problem in matrix form. Yet, in this form, it is easier to realize that one may find an approximate solution by relaxing the discreteness constraint ``$\bf{C}$ as in Eq.~\eqref{eq:X}''. In the absence of the hard-to-deal-with constraint, the relaxed problem is not only polynomially solvable but also possesses a closed-form solution! 
	By the Courant–Fischer–Weyl (min-max) theorem, the solution is given by the first $k$ eigenvectors $U_k = [u_1,u_2, \ldots, u_k]$ of $L$:
	\begin{align}
	U_k = \argmin_{{\bf{C}} \in \Rbb^{n\times k}} \;\trace{{\bf{C}}^\top L {\bf{C}}} \quad \text{subject to} \quad {\bf{C}}^\top {\bf{C}} = I. \notag
	\end{align}
	This relaxation is not unique to the combinatorial Laplacian. In the same spirit, the minimum \ncut\ optimization problem can be formulated in terms of the  normalized Laplacian matrix $L_n$, and the relaxed problem's solution is given by the first $k$ eigenvectors of $L_n$. 
	%
	%
	%
	
	A difficulty still lies before us: how do we go from a real-valued $U_k$ to a partition of the nodes? The two next subsections aim to motivate the use of $k$-means as a rounding heuristic. The exposition starts from the simple case when there are only two clusters ($k=2$) before considering the general case (arbitrary $k$). 
	
	\subsubsection{The case of two clusters: thresholding suffices}
	
	%
	For simplicity, we first consider the case of two clusters. If one constructs a partitioning $\mathcal{P}_t$ with $\sf{V}_1 = \{ v_i : u_2(i)>t\}$ and $\sf{V}_2 = \{ v_i : u_2(i) \leq t\}$ for every level set $t \in (-1,1)$, then it is a folklore result that 
	\begin{align}
	\rcut(\mathcal{P}^*) \leq \min_{t}\; \rcut(\mathcal{P}_t) \leq 2 \sqrt{ \rcut(\mathcal{P}^*) \, \left( d_{\text{max}} - \frac{\lambda_2}{2}\right)},
	\label{eq:rcut_guarantee}
	\end{align}
	with $\mathcal{P}^* = \argmin_\mathcal{P} \ \rcut(\mathcal{P})$ being the optimal partitioning, $d_{\text{max}}$ is the maximum degree of any node in $\sf{V}$, and $\lambda_2$ the second smallest eigenvalue of $L$. The upper bound is achieved by the tree-cross-path graph constructed by Guattery and Miller~\cite{guattery1998quality}. 
	In an analogous manner, if $\mathcal{P}^* = \argmin_\mathcal{P} \ \ncut(\mathcal{P})$ is the optimal partitioning w.r.t. the \ncut\ cost and every $\mathcal{P}_t$ has been constructed by thresholding the second eigenvector of $L_n$, then
	\begin{align}
	\ncut(\mathcal{P}^*) \leq \min_{t} \;\ncut(\mathcal{P}_t) \leq 2 \sqrt{ \ncut(\mathcal{P}^*)}. 
	\label{eq:ncut_guarantee}
	\end{align}
	Equation~\eqref{eq:ncut_guarantee} can be derived as a consequence of the \emph{Cheeger inequality}, a key result of spectral graph theory~\cite{chung1997spectral}, which for the normalized Laplacian reads:
	\begin{align}
	\frac{\lambda_2}{2} \leq \ncut(\mathcal{P}^*) \leq \min_{\sf{V}} \frac{w(\sf{V}, \bar{\sf{V}})}{ \min\{ w(\sf{V}), w(\bar{\sf{V}})\} }  \leq \min_{t} \;\ncut(\mathcal{P}_t) \leq \sqrt{ 2 \lambda_2 }. \notag
	\end{align}
    %
    %
    As a consequence, we have
	$$ \ncut(\mathcal{P}^*) \leq \min_{t} \;\ncut(\mathcal{P}_t) \leq \sqrt{ 2 \lambda_2 } \leq \sqrt{ 4 \ncut(\mathcal{P}^*)} =  2 \sqrt{ \ncut(\mathcal{P}^*)},$$
    as desired.
	The derivation of the \rcut\ bound given in equation~\eqref{eq:rcut_guarantee} follows similarly.

	\subsubsection{More than two clusters: use $k$-means} 
	
	As the number of clusters $k$ increases, the brute-force approach of testing every level set becomes quickly prohibitive. 
	But why is $k$-means the right way to obtain the clusters in the spectral embedding? Though a plethora of experimental evidence advocate the use of $k$-means, a rigorous justification is still lacking. The interested reader may refer to~\cite{lee2014multiway} for an example of an analysis of spectral partitioning without $k$-means.  
	
	More recently, Peng et al.~\cite{peng2015partitioning} came up with a mathematical argument showing that, if $G$ is well clusterable and we use a $k$-means algorithm (e.g., ~\cite{kumar2004simple}) which guarantees that the identified solution $\tilde{\sf{C}}$ abides to
	\begin{align}
	f(\tilde{\sf{C}}; \sf{X}) \leq (1 + \epsilon) f(\sf{C}^*; \sf{X}),  \notag  
	\end{align}
	where $\sf{C}^*$ is the optimal solution of the $k$-means problem, then the partitioning $\tilde{\mathcal{P}}$ produced by spectral clustering when using $L_{n}$ has \ncut\ cost provably close to that of the optimal partitioning $\mathcal{P}^*$. In particular, it was shown that, as long as $\lambda_{k+1} \geq \text{c} k^2 \ncut(\mathcal{P}^*)$, then 
    %
	%
	\begin{align}
	\ncut(\mathcal{P}^*) \leq  \ncut(\tilde{\mathcal{P}}) \leq \zeta \, \ncut(\mathcal{P}^*)  \left(1 + \epsilon \, \frac{k^3}{\lambda_{k+1}}\right), \notag
	\end{align}
	for some constants $\text{c},\zeta>0$ that are independent of $n$ and $k$ (see also~\cite{kolev2015note}). Note that, using the  higher-order Cheeger inequality~\cite{lee2014multiway} $\lambda_k/2 \leq \ncut(\mathcal{P}^*)$, the condition $\lambda_{k+1} \geq \text{c} k^2 \ncut(\mathcal{P}^*)$ implies 
	$$ \frac{\lambda_{k+1}}{\lambda_k} \geq \frac{\text{c} k^2}{2} = \Omega(k^2).$$ 

	Though hopefully milder than this one\footnote{To construct an example possibly verifying such a strong gap assumption, consider $k$ cliques of size $k$ connected together via only $k-1$ edges, so as to form a loosely connected chain. Even though this is a straightforward clustering problem known to be easy for spectral clustering algorithms, the above theorem's assumption implies
	$\lambda_{k+1} = \Omega( k^2 \ncut(\mathcal{P}^*)) =  \Omega(k)$ which, independently of $n$, can only be satisfied when $k$ is a small (recall that the eigenvalues of $L_n$ are necessarily between $0$ and $2$).},
	such gap assumptions are very common in the analysis of spectral clustering. Simply put, the larger the gap $\lambda_{k+1} - \lambda_k$ is, the stronger the cluster structure and the easier it is to identify a good clustering. Besides quantifying the difficulty of the clustering problem, the gap also encodes the robustness of the spectral embedding to errors induced by approximation algorithms~\cite{davis1963rotation}. The eigenvectors of a \emph{perturbed} Hermitian matrix exhibit an interesting property: instead of changing arbitrarily, the leakage of information is localized w.r.t. the eigenvalue axis~\cite{pmlr-v70-loukas17a}. More precisely, if $\tilde{u}_i$ is the $i$-th eigenvector of $L$ after perturbation, then the inner products $(\tilde{u}_i^\top u_j)^2$ decrease proportionally with $|\lambda_i - \lambda_j|^2$. As such, demanding that $\lambda_{k+1} - \lambda_k$ is large is often helpful in the analysis of spectral clustering algorithms in order to ensure that the majority of useful information (contained within $U_k$) is preserved (in $\tilde{U}_k$) despite approximation errors\footnote{Usually, one needs to ensure that $\sum_{i\leq k, j>k} (\tilde{u}_i^\top u_j)^2 / k$ remains bounded.}. 
	
	
	\subsubsection{Choice of relaxation}
	\label{subsubsec:relax_bro}
	The presented relaxation approach is not unique and other relaxation could be equally valid (see for instance~\cite{bie2006fast, bresson_multiclass_2013, rangapuram_tight_2014}). 
	This relaxation has nevertheless the double advantage of being theoretically simple and computationally easy to implement. Also, justification of spectral clustering algorithms does not only come from this graph cut perspective and in fact encompasses several approaches that we will not detail here: perturbation approaches or hitting time considerations~\cite{von2007tutorial}, a polarization theorem~\cite{brand_unifying_2003},  consistency derivations~\cite{ulrike_von_luxburg_consistency_2008, lei_consistency_2015}, etc. Interestingly, recent studies (for instance \cite{bordenave_non-backtracking_2015-1}) on the Stochastic Block Models have shown that spectral clustering (on other matrices than the Laplacian, such as the non-backtracking matrix~\cite{krzakala_spectral_2013}, the Bethe-Hessian matrix~\cite{saade_spectral_nodate} or other similar deformed Laplacians~\cite{dallamico_optimized_2019}) 
	perform well up to the detectability threshold of the block structure.

	\subsection{Computational complexity considerations}
	\label{subsec:computational_complexity}
	
	What is the computational complexity of spectral clustering as a function of the number of points $n$, their dimension $d$ and the number of desired clusters $k$? Let us examine the three steps involved one by one. 
	
	The first step entails the construction of a sparse similarity graph from the input points, which is dominated by the kernel computation and costs $\O(dn^2)$. 
	%
	In the second step, given the graph $G$ consisting of $n$ nodes and $e$ edges\footnote{with $e$ of the order of $n$ if the sparsification step was well conducted}, one needs to compute the spectral embedding (step 2 of Algorithm 1).  Without exploiting the special structure of a graph Laplacian ---other than its sparsity that is--- there are two main options:
	\begin{itemize}
		\item Using power iterations, one may identify sequentially each non-trivial eigenvector $u_\ell$ in time $\O( e / \delta_{\ell})$, where $\delta_{\ell} = \lambda_\ell - \lambda_{\ell-1}$ is the $\ell$-th eigenvalue gap and $e$ is the number of edges of the graph~\cite{vishnoi2013lx}. Computing the spectral embedding therefore takes $\O(k e / \delta) $ with $\delta = \min_{\ell} \delta_\ell$. Unfortunately, there exist graphs\footnote{The combinatorial Laplacian of a complete balanced binary tree on $k \geq 3$ levels and $n = 2^k-1$ nodes has $\frac{1}{n} \leq \lambda_2 \leq \frac{2}{n}$~\cite{guattery1995performance}.} such that $\delta = \O(1/n)$, bringing the overall worst-case complexity to $\O(k n e)$.
		\item The Lanczos method can be used to approximate the first $k$ eigenvectors in roughly $\O(e k + nk^2)$ time. This procedure is often numerically unstable resulting to a loss of orthogonality in the computed Krylov subspace basis. The most common way to circumvent this problem is by implicit restart~\cite{calvetti1994implicitly}, whose computational complexity is not easily derived. The number of restarts, empirically, depend heavily on the eigenvalue distribution in the vicinity of $\lambda_k$: if $\lambda_k$ is in an eigenvalue bulk, the algorithms takes longer than when $\lambda_k$ is isolated. We decide to write the complexity of restarted Arnoldi as $\O(t(e k + nk^2))$ with $t$ modeling the number of restarts. Note that throughout this paper, $t$ will generically refer to a number of iterations in algorithm complexities. We refer the interested reader to~\cite{bai2000templates} for an in-depth perspective on Lanczos methods. 
	\end{itemize}
	The third step entails solving the $k$-means problem, typically by using the Lloyd-Max algorithm to converge to a local minimum of $f(\sf{C}; \sf{X})$. Since there is no guarantee that this procedure will find a good local minimum, it is usually rerun multiple times, starting in each case from randomly selected centroids $\sf{C}_{\text{ini}}$. 
	The computational complexity of this third step is $\O(t n k^2)$, where $t$ is a bound on the number of iterations required until convergence multiplied by the number of retries (typically around 10). 
	
	\subsection{A taxonomy of sampling methods for spectral clustering }
	For the remainder of the paper, we propose to classify sampling methods aiming at accelerating one or more of these three steps according to when  they sample. If they sample before step 1, they are detailed in Section~\ref{sec:from_scratch}. Methods that assume that the similarity graph is given or well-approximated and sample between steps 1 and 2 will be found in Section~\ref{sec:spectral_embedding}. Finally, methods that assume that the spectral embedding has been exactly computed or well-approximated and sample before the $k$-means step are explained in Section~\ref{sec:kmeans}. This classification of methods, like all classification systems, bears a few flaws. For instance, Nyström methods can be applied to both the context of Sections~\ref{sec:from_scratch} and~\ref{sec:spectral_embedding} and are thus mentioned in both. Also, we decided to include the pseudo-code of only a few chosen algorithms that we think are illustrative of the literature. This choice is of course subjective and debatable. Notwithstanding these inherent flaws, we hope that this classification clarifies the landscape of existing methods. 
	%
	
	\section{Sampling in the original feature space}
	\label{sec:from_scratch} 
	
	This section is devoted to methods that ambitiously aim to reduce the dimension of the spectral clustering problem even before the graph has been formed. Indeed, the naive way of building the similarity graph (step 1 of spectral clustering algorithms) costs $\O(dn^2)$ and, as such, is one of the the key bottlenecks of spectral clustering. 
    %
	It should be remarked that the present discussion fits into the wider realm of kernel approximation, a proper review of which cannot fit in this paper: we will thus concentrate on methods that were in practice used for spectral clustering. 
		
	\subsection{Nystr\"om-based methods}
	\label{subsec:Nystrom}
	
	The methods of this section aim to obtain an approximation $\tilde{U}_k$ of the exact spectral embedding $U_k$ via a sampling procedure in the original feature space. \\
	
	\noindent\textbf{The Nystr\"om method} is a well known algorithm for obtaining a rough low rank approximation of a positive semi-definite (PSD) matrix $A$. Here is a high level description of the steps entailed:
	\mybox{
		\textbf{Algorithm 3. Nystr\"om's method}
		\\
		\textbf{Input:} PSD matrix $A \in \Rbb^{n \times n}$, number of samples $m$,  desired rank $k$
		\begin{enumerate}[itemsep=-0.2ex]
			\item Let $\mathsf{S}$ be $m$ column indices chosen by some sampling procedure. 
			\item Denote by $B = A(\mathsf{S},\mathsf{S}) \in \Rbb^{m\times m}$ and $\mathbf{C} = A(:,\mathsf{S}) \in \Rbb^{n\times m}$ the sub-matrices indexed by $\mathsf{S}$.
			\item Let $B = \mathbf{Q} \mathbf{\Sigma} \mathbf{Q}^\top$ be the eigen-decomposition of $B$ with the diagonal of $\mathbf{\Sigma}$ sorted in decreasing magnitude. 
			\item Compute the rank-$k$ approximation of $B$ as $B_k = \mathbf{Q}_k \mathbf{\Sigma}_k \mathbf{Q}_k^\top$, where $\mathbf{Q}_k  = \mathbf{Q}(:,:k) \in \Rbb^{n\times k}$ and $\mathbf{\Sigma}_k = \mathbf{\Sigma}(:k,:k)$.  
		\end{enumerate}
		\textbf{Possible outputs:} \begin{itemize}[topsep=-1cm,itemsep=-0.2ex]
		    \item A low-rank approximation $\tilde{A}=\mathbf{C} B^+ \mathbf{C}^\top \in \Rbb^{n \times n}$ of $A$
		    \item A rank-$k$ approximation $\tilde{A}_k = \mathbf{C} B^+_k \mathbf{C}^\top \in \Rbb^{n \times n}$ of $A$
		    \item The top $k$ eigenvectors of $\tilde{A}_k$, stacked as columns in matrix $\tilde{{\bf{V}}}_k\in\mathbb{R}^{n\times k}$, obtained by orthonormalizing the columns of $\tilde{\mathbf{Q}}_k=\mathbf{C} \mathbf{Q}_k \mathbf{\Sigma}_k^{-1}\in\mathbb{R}^{n\times k}$
		\end{itemize} 
	}
	Various guarantees are known for the quality of $\tilde{A}$ depending on the type of sampling utilized and the preferred notion of error (spectral $\|.\|_2$ vs frobenius $\|.\|_F$ vs trace $\|.\|_*$ norm)~\cite{gittens2016revisiting,kumar2009sampling,frieze2004fast,zhang2008improved}. For instance:
	\begin{theorem}[Lemma 8 for $q=1$ in~\cite{gittens2016revisiting}] 
		Let $\epsilon\in (0,1)$ and $\delta\in (0,1)$. Consider $m$ columns drawn i.i.d. uniformly at random (with or without replacement). Then:
		$$ 
		\| A - \tilde{A}\|_2 \leq \left( 1 + \frac{n}{(1 - \epsilon) m} \right) \| A - A_k\|_2
		$$
		holds with probability at least $1-3\delta$, provided that $m\geq 2\epsilon^{-2}\mu k\log{(k/\delta)}$; 
		where $$\mu = \frac{n}{k} \max_{i = 1, \ldots, n} \|\mathbf{V}_k(i,:)\|^2_2$$ 
		is the coherence associated with the first $k$ eigenvectors $\mathbf{V}_k$ of $A$, and $A_k$ is the best rank-$k$ approximation of $A$.  
	\end{theorem}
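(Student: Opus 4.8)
\emph{Proof strategy.} The plan is to split the argument into a deterministic, linear-algebraic step and a probabilistic, concentration step, linked through the two ``interaction'' matrices
\[
\Omega_1 = \mathbf{V}_k^\top \mathbf{S} \in \Rbb^{k\times m}
\quad\text{and}\quad
\Omega_2 = \mathbf{V}_\perp^\top \mathbf{S} \in \Rbb^{(n-k)\times m},
\]
where $A = \mathbf{V}\Lambda\mathbf{V}^\top$ is the eigendecomposition with eigenvalues in decreasing order, $\mathbf{V}=[\mathbf{V}_k\,|\,\mathbf{V}_\perp]$ and $\Lambda=\mathrm{diag}(\Lambda_k,\Lambda_\perp)$ split off the top $k$ components, and $\mathbf{S}\in\Rbb^{n\times m}$ is the column-selection matrix realizing $\mathbf{C}=A\mathbf{S}$ and $B=\mathbf{S}^\top A\mathbf{S}$. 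Since $\tilde A=\mathbf{C}B^+\mathbf{C}^\top$ is invariant under rescaling $\mathbf{S}$, I would first normalize $\mathbf{S}$ so that $\mathbb{E}[\mathbf{S}\mathbf{S}^\top]=I$ for uniform sampling, which makes the concentration bookkeeping transparent.

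\textbf{Structural step.} First I would recast the Nystr\"om approximation as an orthogonal projection: writing $A=A^{1/2}A^{1/2}$ and $\mathbf{Y}=A^{1/2}\mathbf{S}$, a direct computation gives $\tilde A = A^{1/2}\Pi A^{1/2}$ with $\Pi=\mathbf{Y}(\mathbf{Y}^\top\mathbf{Y})^+\mathbf{Y}^\top$ the projector onto $\mathrm{range}(\mathbf{Y})$, so that $A-\tilde A=A^{1/2}(I-\Pi)A^{1/2}$ is PSD and $\|A-\tilde A\|_2=\|(I-\Pi)A^{1/2}\|_2^2$. Because $\Pi$ projects onto $\mathrm{range}(\mathbf{Y})$, one has $\|(I-\Pi)A^{1/2}\|_2\le\|A^{1/2}-\mathbf{Y}\mathbf{F}\|_2$ for every $\mathbf{F}$. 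Choosing $\mathbf{F}=\Omega_1^+\mathbf{V}_k^\top$ (which needs $\Omega_1$ to have full row rank, so $\Omega_1\Omega_1^+=I_k$) and expanding in the orthonormal basis $\mathbf{V}$ kills the top-$k$ block exactly and leaves
\[
\|A^{1/2}-\mathbf{Y}\mathbf{F}\|_2^2 = \big\| \Lambda_\perp + \Lambda_\perp^{1/2}\Omega_2\Omega_1^+(\Omega_2\Omega_1^+)^\top\Lambda_\perp^{1/2} \big\|_2 \le \|\Lambda_\perp\|_2 + \|\Lambda_\perp^{1/2}\Omega_2\Omega_1^+\|_2^2 .
\]
Using $\|\Lambda_\perp\|_2=\lambda_{k+1}=\|A-A_k\|_2$ and $\|\Lambda_\perp^{1/2}\Omega_2\Omega_1^+\|_2^2\le\|\Lambda_\perp\|_2\,\|\Omega_2\Omega_1^+\|_2^2$, this delivers the deterministic inequality $\|A-\tilde A\|_2\le\|A-A_k\|_2\,(1+\|\Omega_2\Omega_1^+\|_2^2)$.

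\textbf{Probabilistic step.} It then remains to show $\|\Omega_2\Omega_1^+\|_2^2\le \tfrac{n}{(1-\epsilon)m}$ with the advertised probability, and I would do this factor by factor via $\|\Omega_2\Omega_1^+\|_2\le\|\Omega_2\|_2\,\|\Omega_1^+\|_2$. For the denominator, $\Omega_1\Omega_1^\top=\tfrac{n}{m}\sum_{j=1}^m \mathbf{V}_k(i_j,:)^\top\mathbf{V}_k(i_j,:)$ is an average of $m$ i.i.d.\ rank-one PSD terms with mean $I_k$, each of norm at most $\tfrac{n}{m}\max_i\|\mathbf{V}_k(i,:)\|_2^2=\tfrac{\mu k}{m}$ by the definition of the coherence $\mu$. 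A matrix Chernoff bound then yields $\lambda_{\min}(\Omega_1\Omega_1^\top)\ge 1-\epsilon$, hence $\|\Omega_1^+\|_2^2\le(1-\epsilon)^{-1}$, with probability at least $1-\delta$ exactly once $m\ge 2\epsilon^{-2}\mu k\log(k/\delta)$. For the numerator, orthonormality of $\mathbf{V}_\perp$ gives $\|\Omega_2\|_2^2=\|\mathbf{V}_\perp^\top\mathbf{S}\mathbf{S}^\top\mathbf{V}_\perp\|_2\le\|\mathbf{S}\mathbf{S}^\top\|_2\le n/m$. Multiplying the two factors and substituting into the structural bound gives the claim, and a union bound over the (at most three) failure events accounts for the probability $1-3\delta$.

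\textbf{Main obstacle.} I expect the crux to be the probabilistic control of $\sigma_{\min}(\Omega_1)$, i.e.\ certifying that the sampled rows of $\mathbf{V}_k$ span a well-conditioned subspace; this is precisely where the coherence enters, since a large $\mu$ (energy of $\mathbf{V}_k$ concentrated on few coordinates) lets uniform sampling miss the informative rows and renders $\Omega_1$ rank-deficient, so the matrix Chernoff inequality and its sample size $m=\Omega(\mu k\log k)$ are both essential and essentially tight. The structural step is by comparison routine, its one delicate point being to extract the sharp additive form $\|\Lambda_\perp\|_2+\|\Lambda_\perp^{1/2}\Omega_2\Omega_1^+\|_2^2$ --- by keeping the two blocks together and invoking the triangle inequality on PSD matrices --- rather than the lossy $(1+\|\Omega_2\Omega_1^+\|_2)^2$ that a naive splitting would produce.
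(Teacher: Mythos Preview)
The paper does not supply its own proof of this statement: the theorem is quoted verbatim as ``Lemma 8 for $q=1$'' from Gittens and Mahoney and left unproved, as befits a review article. So there is no in-paper argument to compare against.

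That said, your proposal is essentially the proof given in the cited reference. The two-stage split into a deterministic structural bound $\|A-\tilde A\|_2\le\|A-A_k\|_2\bigl(1+\|\Omega_2\Omega_1^+\|_2^2\bigr)$ via the projector identity $\tilde A=A^{1/2}\Pi A^{1/2}$, followed by matrix Chernoff on $\Omega_1\Omega_1^\top$ to control $\sigma_{\min}(\Omega_1)$ through the coherence $\mu$, is exactly the Gittens--Mahoney machinery. Your identification of the well-conditioning of $\Omega_1$ as the crux is also on point.

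One small wrinkle worth tightening: the claim $\|\Omega_2\|_2^2\le\|\mathbf{S}\mathbf{S}^\top\|_2\le n/m$ is only deterministic for sampling \emph{without} replacement (where $\mathbf{S}^\top\mathbf{S}=(n/m)I_m$). With replacement, repeated indices can inflate $\|\mathbf{S}\mathbf{S}^\top\|_2$, and one needs a separate high-probability argument for this factor; this is precisely why the statement carries probability $1-3\delta$ rather than $1-\delta$, and your parenthetical ``union bound over the (at most three) failure events'' should be made to cover this case explicitly rather than only the $\Omega_1$ event.
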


	Guarantees independent of the coherence can be obtained for more advanced sampling methods. Perhaps the most well known method is that of leverage scores, where one draws $m$ samples independently by selecting (with replacement) the $i$-th column with probability $p_i = \|\mathbf{V}_k(i,:)\|_2^2/k$, called leverage scores.
	\begin{theorem}[Lemma 5 for $q=1$ in~\cite{gittens2016revisiting}] 
	\label{thm:lev_score_Nystrom}
		Let $\epsilon\in (0,1)$ and $\delta\in (0,1)$. Consider $m$ columns drawn i.i.d. with replacement from such a probability distribution. Then:
		$$ 
		\| A - \tilde{A}\|_2 \leq \| A - A_k\|_2 + \epsilon^2 \| A - A_k\|_*
		$$	
		holds 
		with probability at least $0.8 - 2 \delta$ provided that $m \geq \O(\epsilon^{-2}k \log(k/\delta))$.
	\end{theorem}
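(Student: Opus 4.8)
The plan is to reduce the spectral-norm reconstruction error to a question about how faithfully the sampled columns capture the dominant $k$-dimensional eigenspace of $A$, and then to dispatch that question with two matrix-concentration estimates. It is convenient to realise the sample by a sketching matrix $S\in\Rbb^{n\times m}$ whose $t$-th column is $e_{i_t}/\sqrt{m\,p_{i_t}}$, with $i_t$ the index drawn at step $t$; this rescaling is cost-free because the Nyström approximation $\tilde A=\mathbf{C}B^{+}\mathbf{C}^{\top}=AS(S^{\top}AS)^{+}S^{\top}A$ is invariant under $S\mapsto SD$ for any invertible diagonal $D$, so I may argue with the rescaled $S$ and transfer the conclusion to the stated i.i.d. scheme. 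Fixing the eigendecomposition $A=\mathbf{V}\Lambda\mathbf{V}^{\top}$ with $\mathbf{V}=[\mathbf{V}_k\,|\,\mathbf{V}_{k,\perp}]$ and $\Lambda=\mathrm{diag}(\Lambda_k,\Lambda_{k,\perp})$, I introduce the two blocks $\Omega_1=\mathbf{V}_k^{\top}S\in\Rbb^{k\times m}$ and $\Omega_2=\mathbf{V}_{k,\perp}^{\top}S$, which record how the sample aligns with the top and with the tail eigenspaces respectively.

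First I would prove a \emph{deterministic} structural bound. Writing $A=Y^{\top}Y$ with $Y=\Lambda^{1/2}\mathbf{V}^{\top}$, a short computation gives $\tilde A=Y^{\top}\Pi Y$ with $\Pi$ the orthogonal projector onto $\mathrm{range}(YS)$, so $A-\tilde A=Y^{\top}(I-\Pi)Y\succeq0$ and $\|A-\tilde A\|_2=\|(I-\Pi)\Lambda^{1/2}\|_2^2$. Working in the eigenbasis, $YS$ has blocks $\Lambda_k^{1/2}\Omega_1$ and $\Lambda_{k,\perp}^{1/2}\Omega_2$; when $\Omega_1$ has full row rank the subspace spanned by $YS\,\Omega_1^{+}$ is contained in $\mathrm{range}(YS)$, and replacing $\Pi$ by the projector onto this smaller subspace can only increase the residual. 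Collecting the contributions of the two column blocks of $\Lambda^{1/2}$ then yields
\[
\|A-\tilde A\|_2 \;\le\; \|\Lambda_{k,\perp}\|_2 + \bigl\|\Lambda_{k,\perp}^{1/2}\,\Omega_2\,\Omega_1^{+}\bigr\|_2^2 \;=\; \|A-A_k\|_2 + \bigl\|\Lambda_{k,\perp}^{1/2}\,\Omega_2\,\Omega_1^{+}\bigr\|_2^2 .
\]
This isolates the unavoidable residual $\|A-A_k\|_2=\lambda_{k+1}$ and reduces the theorem to showing that the cross term is at most $\epsilon^2\|A-A_k\|_*=\epsilon^2\trace{\Lambda_{k,\perp}}$ with the stated probability.

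The next step controls the two factors of the cross term. Since $\mathbb{E}[SS^{\top}]=I$, the Gram matrix $\Omega_1\Omega_1^{\top}=\mathbf{V}_k^{\top}SS^{\top}\mathbf{V}_k$ is an average of $m$ i.i.d. rank-one terms with mean $I_k$, and the leverage-score weights bound each term's spectral norm by $\O(k/m)$. A matrix Chernoff inequality then gives $\|\Omega_1\Omega_1^{\top}-I_k\|_2\le\epsilon$—so that $\Omega_1$ has full row rank and $\|\Omega_1^{+}\|_2^{2}=\|(\Omega_1\Omega_1^{\top})^{-1}\|_2\le(1-\epsilon)^{-1}$—precisely when $m\ge\O(\epsilon^{-2}k\log(k/\delta))$, on an event of probability at least $1-\delta$. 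On this event I write $\Omega_1^{+}=\Omega_1^{\top}(\Omega_1\Omega_1^{\top})^{-1}$, so that it remains to bound $\|\Lambda_{k,\perp}^{1/2}\Omega_2\Omega_1^{\top}\|_2$.

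The crucial point—and the step I expect to be the main obstacle—is that one must \emph{not} bound the cross term by $\|\Lambda_{k,\perp}^{1/2}\Omega_2\|_2\,\|\Omega_1^{+}\|_2$: the Frobenius norm $\|\Lambda_{k,\perp}^{1/2}\Omega_2\|_F^2$ concentrates around $\trace{\Lambda_{k,\perp}}$, which would only produce an $\O(1)$ factor rather than the required $\epsilon^2$. The extra two powers of $\epsilon$ come from the \emph{orthogonality of the eigenspaces}: the matrix $\Lambda_{k,\perp}^{1/2}\Omega_2\Omega_1^{\top}=\Lambda_{k,\perp}^{1/2}\mathbf{V}_{k,\perp}^{\top}SS^{\top}\mathbf{V}_k$ has expectation $\Lambda_{k,\perp}^{1/2}\mathbf{V}_{k,\perp}^{\top}\mathbf{V}_k=0$, hence is a sum of i.i.d. \emph{zero-mean} matrices whose size is set by a variance, not a mean. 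A direct second-moment computation, in which the leverage-score normalisation cancels the top-eigenspace factor through $\|\mathbf{V}_k(i,:)\|_2^2/p_i=k$, gives
\[
\mathbb{E}\bigl\|\Lambda_{k,\perp}^{1/2}\,\Omega_2\,\Omega_1^{\top}\bigr\|_F^2 \;=\; \frac{k}{m}\,\trace{\Lambda_{k,\perp}},
\]
so Markov's inequality forces $\|\Lambda_{k,\perp}^{1/2}\Omega_2\Omega_1^{\top}\|_2^2\le 5\,(k/m)\,\trace{\Lambda_{k,\perp}}\le\epsilon^2\trace{\Lambda_{k,\perp}}$ (once $m\ge\O(\epsilon^{-2}k)$) with probability at least $0.8$, while a matrix Bernstein inequality yields the same conclusion at the high-probability level $1-\delta$. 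Combining this with $\|\Omega_1^{+}\|_2^2\le(1-\epsilon)^{-1}$ and absorbing the harmless $(1-\epsilon)^{-1}$ into the constant of the sample bound controls the cross term, and a union bound over the subspace-embedding and concentration failure events together with the constant-probability Markov step delivers the claimed probability $0.8-2\delta$.
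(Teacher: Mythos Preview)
The paper does not supply its own proof of this statement: it is quoted verbatim as Lemma~5 (case $q=1$) of Gittens--Mahoney, and the surrounding text only \emph{uses} the bound. Your sketch is essentially the Gittens--Mahoney argument itself: the deterministic structural inequality $\|A-\tilde A\|_2\le\|A-A_k\|_2+\|\Lambda_{k,\perp}^{1/2}\Omega_2\Omega_1^{+}\|_2^2$ via the projection viewpoint, a matrix Chernoff bound giving $\|\Omega_1^{+}\|_2^2\le(1-\epsilon)^{-1}$ under leverage-score sampling with $m\ge\O(\epsilon^{-2}k\log(k/\delta))$, and a second-moment/Markov estimate on the zero-mean cross term $\Lambda_{k,\perp}^{1/2}\Omega_2\Omega_1^{\top}$ yielding the $\epsilon^2\|A-A_k\|_*$ contribution and the constant $0.8$ in the probability. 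There is nothing to contrast; your route is the standard one the citation points to.
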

	Computing leverage scores exactly is computationally prohibitive since it necessitates a partial SVD decomposition of $A$, which we are trying to avoid in the first place. Nevertheless, it is possible to approximate all leverage scores with a multiplicative error guarantee in time roughly $\O(e k \log(e))$ if $A$ has $O(e)$ non-zero entries.
	(see Algorithms 1 to 3 in~\cite{gittens2016revisiting}). Many variants of the above exist~\cite{kumar2009sampling,kumar2012sampling}, but to the best of our knowledge, the fastest current Nystr\"om algorithm utilizes ridge leverage scores with a complex recursive sampling scheme and runs in time nearly linear in $n$~\cite{musco2017recursive}.\\
	
	\noindent\textbf{Nystr\"om for spectral clustering.}  
	Though initially conceived for low-rank approximation, Nystr\"om's method can also be used to accelerate spectral clustering. The key observation is that $U_k$, the tailing $k$ eigenvectors of the graph representative matrix $R$, can be interpreted as the top $k$ eigenvectors of the PSD matrix $A = \|R\|_2 I - R$. As such, the span of the $k$ top eigenvectors of $\tilde{A}_k$ obtained by running Algorithm 3 on $A$ is an approximation of the span of the exact spectral embedding. 
	Different variants of this idea have been considered for the acceleration of spectral clustering~\cite{fowlkes_spectral_2004, theeramunkong_approximate_2009,li2011time,bouneffouf2015sampling,mohan2017exploiting,li_towards_2017}.
	
	Following our taxonomy, we hereby focus on the case where we have at our disposal $n$ points $p_i$ in dimension $d$, and the similarity graph has yet to be formed. The case where the graph is known is deferred to Section~\ref{sec:spectral_embedding}. 
	
	
	%
	In this case, we cannot run Algorithm 3 on $A = \|R\|_2 I - R$ as the graph, and \emph{a fortiori} its representative matrix $R$, has not yet been formed. What we \emph{can} have access to \emph{efficiently} is $B=s(K(\sf{S},\sf{S}))$ and $\bf{C} = \textit{s}(K(:,\sf{S}))$ as these require only a partial computation of the kernel and cost only $\O(dnm)$. Note that $s$ is a sparsification function that is applied on a subset of the kernel matrix.
	The following pseudo-code exemplifies how Nyström-based techniques can be used to approximate the first $k$ eigenvectors $U_k$ associated with the normalized Laplacian matrix (i.e., here $R = L_n$):
\mybox{
		\textbf{Algorithm 3b. Nystr\"om for spectral clustering~\cite{li2011time}}\\
		\textbf{Input:} The set of points $\sf{P}$, the number of desired clusters $k$, a sampling set $\sf{S}$ of size $m\geq k$
		\begin{enumerate}[itemsep=-0.5ex]
			\item Compute the sub-matrices $B = s(K(\mathsf{S},\mathsf{S}))\in \Rbb^{m\times m}$ and ${\bf{C}} = s(K(:,\mathsf{S})) \in \Rbb^{n\times m}$, where $s$ is a sparsification function.
			\item Let $D_r = \text{diag}(B \mathbf{1})$ be the $m\times m$ degree matrix.
			\item Compute the top $k$ eigenvalues $\mathbf{\Sigma}_k$ and eigenvectors $\mathbf{Q}_k$ of $D_r^{-\sfrac{1}{2}} B D_r^{-\sfrac{1}{2}}$.
			\item Set $ \tilde{\mathbf{Q}}_k = \mathbf{C} D_r^{-\sfrac{1}{2}} \mathbf{Q}_k \mathbf{\Sigma}_k^{-1}$.
			\item Let $D_l = \text{diag}( \tilde{\mathbf{Q}}_k \mathbf{\Sigma}_k \tilde{\mathbf{Q}}_k^\top \mathbf{1})$ be the $n\times n$ degree matrix.
			\item Compute $\tilde{U}_k$ obtained by orthogonalizing $D_l^{-\sfrac{1}{2}} \tilde{\mathbf{Q}}_k$.
		\end{enumerate}
		\textbf{Output:} $\tilde{U}_k$, an approximation of the spectral embedding $U_k$.
}
	%
	This algorithm runs in $\O(nm \max(d,k))$ time, which is small when $m$ depends mildly on the other parameters of interest. 
	Nevertheless, the algorithm (and others like it) suffers from several issues:
	\begin{itemize}
	\item Alg. 3b attempts in fact to apply Nystr\"om on $A=2I - L_n=I+D^{-1/2} s(K) D^{-1/2}$, via the exact computation of two submatrices of $K$. It makes two strong (and uncontrolled) approximations. First of all, the sparsification step (step 1 in Alg. 3b) is  applied to the sub-matrices $K(\mathsf{S},\mathsf{S})$ and $K(:,\mathsf{S})$, deviating from the correct sparsification procedure that takes into account the entire kernel matrix $K$. Second, the degree matrix $D$ is never exactly computed as knowing it exactly would entail computing exactly $s(K)$, which is precisely what we are trying to avoid.  Existing methods thus rely on heuristic approximations of the degree in order to bypass this difficulty (see steps 2 and 5 of Alg. 3b). 
	\item Since we don't have direct access to the kernel matrix, we cannot utilize advanced sampling methods such as leverage scores to draw the sampling set $\sf{S}$. This is particularly problematic if (due to sparsification), matrices $B$ and $\bf{C}$ are sparse, as for sparse matrices uniform sampling is known to perform poorly~\cite{mohan2017exploiting}. Techniques that rely on distances between columns do not fair much better. Landmark-based approaches commonly perform better in simple problems but suffer when the clusters are non-convex~\cite{bouneffouf2015sampling}. We refer the reader to the work by Mohan et al.~\cite{mohan2017exploiting} for more information on landmark-based methods. The latter work also describes an involved sampling scheme that is aimed at general (i.e., non-convex) clusters.
	 
    \end{itemize}
	For the reasons highlighted above, the low-rank approximation guarantees accompanying the classical Nystr\"om method cannot be directly used here. $\emph{A fortiori}$, it is an open question how much the quality of the spectral clustering solution is affected by using the centroids obtained by running $k$-means on $\tilde{U}_k$. 
	%
	\\
	
	\noindent\textbf{Column sampling.} Akin in spirit to Nystr\"om methods, an alternative approach to accelerating spectral clustering was inspired by column sampling low-rank approximation techniques~\cite{drineas2006fast, deshpande2006matrix}. 
	
	An instance of such algorithms was put forth under the name of cSPEC (column sampling spectral clustering) by Wang et al.~\cite{theeramunkong_approximate_2009}. Let $\mathbf{C} = U_{C} \mathbf{\Sigma}_C \mathbf{V}_C^\top$ be the singular value decomposition of the $n\times m$ matrix $\mathbf{C}=s(K(:,\sf{S}))$. Then, matrices 
	$$ \tilde{\mathbf{\Sigma}} = \sqrt{ \frac{n}{m}} \, \mathbf{\Sigma}_C \quad \text{and} \quad \tilde{U} = \mathbf{C} \mathbf{V}_C \mathbf{\Sigma}_C^{+}$$
	are interpreted as an approximation of the actual eigenvalues and eigenvectors of $K$ and thus $U_k$ can be substituted by the first $k$ columns of $\tilde{U}$. This algorithm runs in $\O(ndm+nm^2)$. 

	Authors in~\cite{chen_large_2011} propose a hybrid method, between column sampling and the representative-based methods discussed in Section~\ref{subsec:rep_points}, where they propose the following approximate factorization of the data matrix:
	\begin{align}
	(p_1|\ldots|p_n) \simeq \mathbf{F} \mathbf{Z} \in\mathbb{R}^{d\times n},
	\end{align}
	where $\mathbf{F}\in\mathbb{R}^{d\times m}$ concatenates the feature vectors of $m$ sampled points and $\mathbf{Z}\in\mathbb{R}^{m\times n}$ represents all unsampled points as approximate linear combinations of the representatives, computed via sparse coding techniques~\cite{lee_efficient_2007}\footnote{Authors in~\cite{perner_fast_2009} have a very similar proposition as~\cite{chen_large_2011}, adding a projection phase at the beginning to reduce the dimension $d$ (see Section~\ref{subsubsec:feature_selection}). Similar ideas may also be found in~\cite{vladymyrov_fast_2017}.}. The SVD of $\tilde{\mathbf{D}}^{-1/2}\mathbf{Z}$, with $\tilde{\mathbf{D}}$ the row-sum of $\mathbf{Z}$, is then computed to obtain an approximation $\tilde{U}_k$ of $U_k$. The complexity of their algorithm is also $\O(ndm+nm^2)$. 
	
	In these methods, the choice of the sample set $\sf{S}$ is of course central and has been much debated. Popular options are uniformly at random or via better-taylored probability distributions, via a first $k$-means (with $k=m$) pass on $\sf{P}$, or via other selective sampling methods. 
	Also, as with most extensions of Nystr\"om's method to spectral clustering, column sampling methods for spectral clustering do not come with end-to-end approximation guarantees on $U_k$.
	
	In the world of low-rank matrix approximation the situation is somewhat more advanced. Recent work in column sampling utilizes adaptive sampling with leverage scores in time $\O(e + n \text{poly}(k))$, or uniformly i.i.d. after preconditioning by a fast randomized Hadamard transform~\cite{woodruff2014sketching,drineas2018lectures}. Others have also used a correlated version called volume sampling to obtain column indices~\cite{deshpande2006matrix}. Nevertheless, this literature extends beyond the scope of this review and we invite the interested reader to consider the aforementioned references for a more in depth perspective.

	\subsection{Random Fourier features}
	\label{subsec:RFF}
	Out of several sketching techniques one could \emph{a priori} use to accelerate spectral clustering, we focus on random Fourier features (RFF)~\cite{rahimi_random_2008}: a method that samples in the Fourier space associated to the original feature space. Even though RFFs have originally been developed to approximate a kernel matrix $K$ in time linear in $n$ instead of the quadratic time necessary for its exact computation, they can in fact be used to  obtain an approximation $\tilde{U}_k$ of the exact spectral embedding $U_k$. 
	
	Let us denote by $\kappa$ the RBF kernel, i.e.,  $\kappa(\mathbf{t})=\exp(-\mathbf{t}^2/\sigma^2)$, whose Fourier transform is:
	\begin{align}\label{eq:DefFourier}
	\hat{\kappa}(\mathbf{\omega})=\int_{\mathbb{R}^d}\kappa(\mathbf{t})\exp^{-i\mathbf{\omega}^\top \mathbf{t}}\mbox{d}\mathbf{t}. 
	\end{align}
	The above takes real values as 
	$\kappa$ is symmetric. One may write:
	\begin{equation}
	\kappa(p,\mathbf{q})=\kappa(p-\mathbf{q})=\frac{1}{Z}\int_{\mathbb{R}^d}\hat{\kappa}(\mathbf{\omega})\exp^{i\mathbf{\omega}^\top(p-\mathbf{q})}\mbox{d}\mathbf{\omega},
	\end{equation}
	where, in order to ensure that $\kappa(p, p)=1$, the normalization constant is set to $Z=\int_{\mathbb{R}^d}\hat{\kappa}(\mathbf{\omega})\mbox{d}\mathbf{\omega}$.
	According to Bochner's theorem, and due to the fact that $\kappa$ is positive-definite,  $\hat{\kappa}/Z$ is a valid
	probability density function. 
	$\kappa(p,\mathbf{q})$ may thus be interpreted as the expected value of $\exp^{i\mathbf{\omega}^\top(p-\mathbf{q})}$ 
	provided that $\mathbf{\omega}$ is drawn from $\hat{\kappa}/Z$:
	\begin{align}
	\kappa(p,\mathbf{q})=\mathbb{E}_{\mathbf{\omega}} \left(\exp^{i\mathbf{\omega}^\top(p-\mathbf{q})}\right)
	\end{align}
	Drawing $\omega$ from the distribution $\hat{\kappa}/Z$ is equivalent to drawing independently each of its $d$ entries  according to the normal law of mean $0$ and variance $2/\sigma^2$. Indeed: $\hat{\kappa}(\mathbf{\omega})=\pi^{d/2}\sigma^d\exp(-\sigma^2\mathbf{\omega}^2/4)$ and  $Z=\int_{\mathbb{R}^d}\hat{\kappa}(\mathbf{\omega})\mbox{d}\mathbf{\omega}=(2\pi)^d$, leading to $$\frac{\hat{\kappa}(\mathbf{\omega})}{Z}=\left(\frac{\sigma}{2\sqrt{\pi}}\right)^d\exp^{-\sigma^2\mathbf{\omega}^2/4}.$$ 
	In practice, we draw independently $m$ such vectors $\omega$ to obtain the set of sampled frequencies  $\Omega=(\mathbf{\omega}_1,\ldots,\mathbf{\omega}_m).$
	For each data point $p_i$, and given this set of samples $\Omega$, we define the associated random Fourier feature vector:
	\begin{align}
	\mathbf{\psi}_i=\frac{1}{\sqrt{m}} [\cos(\mathbf{\omega}_1^\top p_i)|\cdots|\cos(\mathbf{\omega}_m^\top p_i)|
	\sin(\mathbf{\omega}_1^\top p_i)|\cdots|\sin(\mathbf{\omega}_m^\top p_i)]^\top\in\mathbb{R}^{2m},
	\end{align}
	and call $\mathbf{\Psi}=\left(\mathbf{\psi}_1|\cdots|\mathbf{\psi}_n\right)\in\mathbb{R}^{2m\times n}$ the RFF matrix. Other embeddings are possible in the RFF framework, but this one was shown to be the most appropriate to the Gaussian kernel~\cite{sutherland_error_2015}. 
	As $m$ increases, $\mathbf{\psi}_i^\top\mathbf{\psi}_j$ concentrates around its expected value $\kappa(p_i,p_j)$:
	$\mathbf{\psi}_i^\top\mathbf{\psi}_j\simeq \kappa(p_i,p_j)$. Proposition 1 of~\cite{sutherland_error_2015} states the tightness of this concentration: it shows that the approximation starts to be valid with high probability for $m\geq \O(d\log{d})$. 
	The Gaussian kernel matrix is thus well approximated as $K\simeq \mathbf{\Psi}^\top\mathbf{\Psi}$. With such a low-rank approximation $\mathbf{\Psi}$ of $K$, one can: estimate the degrees\footnote{an approximation of the degree $d_i$ of node $i$ is $\psi_i^\top\bar{\psi}$ where $\bar{\psi}=\sum_{j}\psi_j$. All degrees can thus be estimated in time $\mathcal{O}(nm^2)$.}, degree-normalize $\mathbf{\Psi}$ to obtain a low-rank approximation of the normalized Laplacian $L_n$ and perform an SVD to directly obtain an approximation $\tilde{U}_k$ of the spectral embedding $U_k$. The total cost to obtain this approximation is $\mathcal{O}(ndm + nm^2)$. These ideas were developed in Refs.~\cite{chitta_efficient_2012, wu_scalable_2018} for instance.
	
	As in Nyström methods however, the concentration guarantees of RFFs for $K$ do not extend to the degree-normalized case; moreover, the sparsification step 1b of spectral clustering is ignored. Note that improving over RFFs in terms of efficiency and concentration properties is the subject of recent research (see for instance~\cite{le2013fastfood}). 
	
	\subsection{The paradigm of representative points}
	\label{subsec:rep_points}
	The methods detailed here sample in the original feature space and directly obtain a control on the misclustering rate due to the sampling process. They are based on the following framework:
	\begin{enumerate}
	    \item Sample $m$ so-called representatives. 
	    \item Run spectral clustering on the representatives. 
	    \item Lift the solution back to the entire dataset. 
	\end{enumerate}
	Let us illustrate this with the example of KASP:
	\mybox{
		\textbf{Algorithm 4. KASP: $k$-means-based approximate spectral clustering}~\cite{yan_fast_2009}\\
		\noindent\textbf{Input:} A set of $n$ points $\sf{P}=(p_1, p_2, \ldots, p_n)$ in dimension $d$, a number of desired clusters $k$, and a number of representatives $m$
		\begin{enumerate}[itemsep=-0.2ex]
			\item Perform $k$-means with $k=m$ on $\sf{P}$ and obtain:
			\begin{enumerate}[topsep=-1cm,itemsep=-0.2ex]
				\item the cluster centroids $\sf{Y}=(y_1 , \ldots, y_m)$ as the $m$ representative points.
				\item a correspondence table to associate each $p_i$ to its nearest representative
			\end{enumerate}
			\item Run spectral clustering on $\sf{Y}$: obtain a $k$-way cluster membership for each $y_i$.
			\item Lift the cluster membership to each $p_i$ by looking up the cluster membership of its   representative in the correspondence table.
		\end{enumerate}
		\noindent\textbf{Output:} $k$ clusters
	}
	The complexity of KASP is bounded by\footnote{It is in fact $\O(mdnt)$ for step 1, and bounded by $\O(dm^2+m^2k + mk^2)$ for step 2. As $n\geq m$ and $m\geq k$, the total complexity is bounded by $\O(mdnt + m^3)$.} $\O(mdnt+ m^3)$. For a summary of the analysis given in~\cite{yan_fast_2009}, let us consider the cluster memberships given by exact spectral clustering on $\sf{P}$ as well as the memberships given by exact spectral clustering on $\tilde{\sf{P}}=(p_1+\epsilon_1, \ldots, p_n+\epsilon_n)$ where the $\epsilon_i$ are any small perturbations on the initial points. Let us write $L$ (resp. $\tilde{L}$) the Laplacian matrix of the similarity graph on $\sf{P}$ (resp. $\tilde{\sf{P}}$). The analysis concentrates on the study of the misclustering rate $\rho$:
	\begin{align}
	\rho=\frac{\text{$\#$ of points with different memberships}}{n}.
	\end{align}
	The main result, building upon preliminary work in~\cite{NIPS2008_3480}, stems from a perturbation approach and reads:
	\begin{theorem}
		\label{thm:KASP}
		Under assumptions of Theorem 3 in~\cite{yan_fast_2009}, $\rho$ verifies: $\rho\leq \O\left(\frac{ k}{g_0^2} \|L-\tilde{L}\|_F\right)$ where $g_0$ is a value depending on the spectral gap. Also, under the assumptions of Theorem 6 in~\cite{yan_fast_2009}, one has, with high probability:
		\begin{align}
		\|L-\tilde{L}\|_F\leq \O\left(\sigma_{\epsilon}^{(2)} + \sigma_{\epsilon}^{(4)}\right)
		\end{align}
		with $\sigma_{\epsilon}^{(2)}$ and $\sigma_{\epsilon}^{(4)}$ the second and fourth moments of the perturbation's norms  $\|\epsilon_i\|$.
	\end{theorem}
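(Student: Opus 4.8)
The two displayed inequalities are essentially imports of Theorems~3 and~6 of~\cite{yan_fast_2009}, so my plan is to reconstruct the two independent arguments that underlie them and then chain the results. The overarching philosophy is perturbative: regard the labels produced by exact spectral clustering on $\sf{P}$ as ground truth, bound how far the embedding computed on the perturbed set $\tilde{\sf{P}}$ can drift, and then show that a drift of bounded size can only flip the labels of a controlled fraction of points.

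\textbf{First inequality (from the Laplacian perturbation to $\rho$).} The embedding is the invariant subspace $\spanning{U_k}$ spanned by the $k$ lowest eigenvectors of $L$, and $\spanning{\tilde{U}_k}$ is the analogous subspace of $\tilde{L}$. The first step is a Davis--Kahan $\sin\Theta$ estimate~\cite{davis1963rotation}: the principal angles between these two subspaces, equivalently the leakage $\sum_{i\le k,\,j>k}(\tilde{u}_i^\top u_j)^2$ highlighted in the perturbation discussion of Section~\ref{subsec:graph_cut}, are controlled by $\|L-\tilde{L}\|_F$ divided by the eigenvalue gap separating $\{\lambda_1,\dots,\lambda_k\}$ from the rest of the spectrum, and it is this gap that the quantity $g_0$ encodes. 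The second, and genuinely delicate, step converts this subspace bound into a statement about discrete labels. Under the well-separation hypotheses inherited from Theorem~3 of~\cite{yan_fast_2009} --- roughly, that rows of $U_k$ in distinct ground-truth clusters are separated by a margin dominating the within-cluster spread --- a node can be misassigned only when the perturbation of its embedding row exceeds half that margin. Bounding the count of such large-perturbation rows by the aggregate $\min_{O^\top O = I}\|U_k-\tilde{U}_k O\|_F$ (the orthogonal alignment $O$ absorbing the rotational ambiguity of eigenvectors) yields a bound of the form $\rho = \O(\frac{k}{g_0^2}\|L-\tilde{L}\|_F)$, the factor $k$ accumulating across the $k$ embedding coordinates and the power of $g_0$ stemming from the Davis--Kahan denominator. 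I expect this label-counting step to be the main obstacle: it is where every clusterability assumption enters, and reconciling the \emph{linear} dependence on $\|L-\tilde{L}\|_F$ with the $g_0^{-2}$ factor requires following the precise definition of $g_0$ and the exact accounting of~\cite{yan_fast_2009,NIPS2008_3480} rather than the crude Chebyshev-type counting that would naively produce a squared norm.

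\textbf{Second inequality (bounding $\|L-\tilde{L}\|_F$).} Here I would work entrywise. Writing $W(i,j)=\kappa(\|p_i-p_j\|)$ and $\tilde{W}(i,j)=\kappa(\|p_i+\epsilon_i-p_j-\epsilon_j\|)$ for the RBF kernel $\kappa$, a second-order Taylor expansion of $\kappa$ about $\|p_i-p_j\|$ shows that $|W(i,j)-\tilde{W}(i,j)|$ is governed, to leading order, by $\|\epsilon_i-\epsilon_j\|$ and $\|\epsilon_i-\epsilon_j\|^2$, using the smoothness and boundedness of $\kappa$ and its derivatives. Aggregating the squared entrywise deviations into $\|W-\tilde{W}\|_F$ and passing from $\|\epsilon_i-\epsilon_j\|$ to the individual norms $\|\epsilon_i\|$ by the triangle inequality, the linear term sums to the second moment $\sigma_{\epsilon}^{(2)}$ and the quadratic term to the fourth moment $\sigma_{\epsilon}^{(4)}$ of the perturbation norms. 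Each diagonal entry of $D-\tilde{D}$ is a row sum of the same entry perturbations over the node's bounded number of sparsified neighbors, so $\|D-\tilde{D}\|_F$ is of the same order as $\|W-\tilde{W}\|_F$ and introduces no new term; hence $\|L-\tilde{L}\|_F=\|(D-\tilde{D})-(W-\tilde{W})\|_F\le\O(\sigma_{\epsilon}^{(2)}+\sigma_{\epsilon}^{(4)})$, the ``with high probability'' qualifier coming from a concentration argument on these sums when the $\epsilon_i$ are random. Chaining this estimate into the first inequality then delivers the advertised control of the misclustering rate $\rho$.
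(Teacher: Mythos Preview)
The paper does not actually prove this theorem: it is stated purely as an import from~\cite{yan_fast_2009} (with the sole comment that the result ``stems from a perturbation approach''), and no argument beyond that one-line attribution appears. So there is no paper-proof to compare against; your task was really to reconstruct the original argument, and your sketch is consistent with the perturbation-theoretic flavour the paper alludes to: Davis--Kahan for the subspace drift followed by a margin/counting argument for the labels, and a Taylor expansion of the kernel entries plus concentration for the Laplacian perturbation.

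One point worth flagging in your first half: you correctly sense a tension in reconciling a \emph{linear} dependence on $\|L-\tilde{L}\|_F$ with a $g_0^{-2}$ denominator, and you defer to the exact bookkeeping of~\cite{yan_fast_2009,NIPS2008_3480}. That is the right instinct, but it leaves the crux of the first bound unexplained --- a naive Chebyshev count on row perturbations would give $\rho \lesssim \|U_k-\tilde U_k O\|_F^2/\text{margin}^2$, which via Davis--Kahan becomes quadratic in $\|L-\tilde L\|_F$ with a $g_0^{-2}$ factor, not linear. Since the theorem as stated hides all constants inside $\O(\cdot)$ and $g_0$ is only loosely described as ``a value depending on the spectral gap'', the precise powers are not recoverable from the statement alone; your honest acknowledgement that this step hinges on the exact definitions in the cited source is appropriate, but be aware that your sketch does not independently establish the claimed exponents.
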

	Combining both bounds, one obtains an upper bound on the misclustering rate that depends on the second and fourth moments of the perturbation's norms $\|\epsilon_i\|$. The ``collapse'' of points  onto the $m$ representative points, interpreted as a perturbation on the original points, should thus tend to minimize these two moments, leading the authors to propose \emph{distortion-minimizing} algorithms, such as KASP. A very similar algorithm, eSPEC, is described in~\cite{theeramunkong_approximate_2009}.

	\subsection{Other methods}
	\subsubsection{Approximate nearest neighbour search algorithms}
	The objective here is to approximate the nearest neighbour graph efficiently. Even though these methods are not necessarily based on sampling, we include them in the discussion as they are frequently used in practice.
	
	Given the feature vectors $p_i\in\mathbb{R}^d$ and a query point $\mathbf{q}\in\mathbb{R}^d$, the exact nearest neighbour search (exact NNS) associated to $\sf{P}$ and $\bf{q}$ is $p^*=\text{argmin}_{p\in\sf{P}}\;\text{dist}(\mathbf{q},p)$ where \emph{dist} stands for any distance. Different distances are possible depending on the choice of kernel $\kappa$. We will here consider the Euclidean norm as it enters the definition of the classical RBF kernel. Computing the exact NNS costs $\O(dn)$. The goal of the approximate NNS field of research is to provide faster algorithms that have the following control on the error. 
	\begin{definition}
		Point $p^*$ is an $\epsilon$-approximate nearest neighbor of query $\mathbf{q}\in\mathbb{R}^d$, if 
		$$\forall p\in\sf{P}\qquad\text{dist}(\mathbf{q}, p^{*})\leq(1 +\epsilon)\,\text{dist}(\mathbf{q}, p).$$ 
		For $\epsilon= 0$, this reduces to exact NNS. 
	\end{definition}
	Extensions of this objective to the $k$-nearest neighbour goal are considered in the NNS literature. A $k$-nearest neighbour graph can then be constructed simply by running an approximate $k$-NNS query for each object $p_i$. 
	Thus, approximate NSS algorithms are interesting candidates to approximate the adjacency matrix of the nearest-neighbour affinity graph, that we need in step 1 of spectral clustering. Many algorithms exist, their respective performances depending essentially on the dimension $d$ of the feature vectors. According to~\cite{avrithis_high-dimensional_2016}, randomized $k$-$d$ forests as implemented in the library FLANN~\cite{flann_pami_2014} are considered state-of-the-art for dimension of around $100$, whereas methods based on Balanced Box Decomposition (BBD)~\cite{arya_optimal_1998, anagnostopoulos_low-quality_2015} are known to perform well for $d$ roughly smaller than $100$. In high dimensions, to avoid the curse of dimensionality, successful approaches are for instance based on hashing methods (such as Locality Sensitive Hashing (LSH)~\cite{andoni_near-optimal_2006}, Product Quantization (PQ)~\cite{kalantidis_locally_2014}) or $k$-$d$ generalized random forests~\cite{avrithis_high-dimensional_2016}. Finally, proximity graph methods, that sequentially improve over a first coarse approximation of the $k$-NN graph (or other graph structures such as navigable graphs) have received a large attention recently and are becoming state-of-the-art in regimes where quality of approximation primes (see for instance~\cite{navigable, dong_efficient_2011, fu_fast_2017, aumuller2017ann}). 
	Such tools come with various levels of guarantees and computation costs, the details of which are not in the scope of this review. 
	
	Experimentally, to obtain an approximate $k$-NN graph with a typical recall rate\footnote{The recall rate for a node is the number of correctly identified $k$-NN divided by $k$. The recall rate for a $k$-NN graph is the average recall rate over all nodes.} of $0.9$, these algorithms are observed to achieve a complexity of $\O(dn^\alpha)$  with $\alpha$ close to $1$ ($\alpha\simeq1.1$ in~\cite{dong_efficient_2011} for instance). 
	
	\subsubsection{Feature selection and feature projection}
	\label{subsubsec:feature_selection}
	Some methods work on reducing the factor $d$ of the complexity $\O(dn^2)$ of the kernel computation via feature selection, 
	i.e., the sampling of features deemed more useful for the underlying clustering task, or feature projection, i.e., the projection on usually random subspaces of dimension $d'< d$. Feature selection methods are usually designed to \emph{improve} the classification by removing features that are too noisy or useless for the classification. We thus do not detail further these methods as they are not approximation algorithms \emph{per se}. The interested reader will find some entries in the literature via references~\cite{dash_feature_2009, he_laplacian_2006, zhao_spectral_2007, cai_unsupervised_2010}. Projection methods use random projections of the original points $\sf{P}$ on spaces of dimension $d'\sim\log{n}$ in order to take advantage of the Johnson-Lindenstrauss lemma of norm conservation: the kernel computed from the projected features in dimension $d'$ is thus an approximation of the true kernel with high probability. We refer to the works~\cite{perner_fast_2009, hunter_compressive_2010} for more details. 
	
	\section{Sampling given the similarity graph}
	\label{sec:spectral_embedding}

	We now suppose that the similarity graph is either given (e.g., in cases where the original data \emph{is} a graph) or has been well approximated (by approximate k-NN search for instance) and concentrate on sampling-based methods that aim to reduce the cost of computing the first $k$ eigenvectors of $L$. 

	These methods predominantly aim to approximate $L$ by a smaller matrix $\tilde{L}$ of size $m$. The eigen-decomposition is done in $\Rbb^m$ which can be significantly cheaper when $m \ll n$. In addition, each method comes with a fast way of lifting vectors from $\Rbb^m$ back to $\Rbb^n$ (this is usually a linear transformation). After lifting, the eigenvectors of $\tilde{L}$ are used as a proxy for those of $L$.
	
	 Unlike the previous section where a strong approximation guarantee of the exact embedding $U_k$ by an efficiently computed  $\tilde{U}_k$ was a distant and difficult goal to achieve in itself; we will see in this Section that the knowledge of the similarity graph not only enables to obtain such strong approximation guarantees, but also enables to control how the error on $U_k$ transfers as an error on the $k$-means cost. 
	
	 To be more precise, recall Eq.~\eqref{eq:def_k_means_cost} defining the $k$-means cost $f(\mathsf{C}; \mathsf{X})$ associated to the $n$ points $\sf{X}=(x_1, \ldots, x_n)$ and a centroid set $\sf{C}$.
	Now, suppose that we have identified a set of $n$ points $\tilde{\sf{X}}=(\tilde{x}_1|\ldots|\tilde{x}_n)$ that are meant to approximate the exact spectral embedding $\sf{X}$.  Moreover, let $\mathsf{C}^*$ (resp. $\tilde{\mathsf{C}}^*$) be the  optimal set of $k$ centroids minimizing  the $k$-means cost on ${\sf{X}}$ (resp. $\tilde{\sf{X}}$).  
	 We will see that several (not all) approximation methods of this Section achieve an end-to-end approximation guarantee of the form:
    $$
        \left| f(\mathsf{C^*}; \mathsf{X})^{\sfrac{1}{2}} - f(\mathsf{\tilde{C}^*}; \mathsf{X})^{\sfrac{1}{2}} \right| \leq \epsilon
    $$
    for some small $\epsilon$ with -at least- constant probability.  Such an end-to-end guarantee is indeed more desirable than a simple guarantee on the distance between $U_k$ and $\tilde{U}_k$: it informs us on the approximation quality of the attained clustering.


	\subsection{Nystr\"om-based methods}
	\label{subsec:Nystrom2}
	Once again, Nystr\"om-based methods are applicable. Let us concentrate on the choice $R=L_n$ to illustrate the main ideas. As explained in Section~\ref{subsec:Nystrom}, the  tailing $k$ eigenvectors  of $L_n$, $U_k$, can be interpreted as the top $k$ eigenvectors of the PSD  matrix $A=2I-L_n$. As such, the span of the top-$k$ eigenvectors of $\tilde{A}_k$, $\spanning{\tilde{U}_k}$,  obtained by running Algorithm 3 on $A$ should approximate the span of $U_k$. Now, how does ones goes from Nystr\"om theorems such as Theorem~\ref{thm:lev_score_Nystrom} to error bounds on the $k$-means cost function?\\
	
	
    \noindent The first step towards an end-to-end guarantee relies on the following result:
    
    \begin{lemma}[see the proof of Theorem 6 in~\cite{boutsidis2015spectral}]
    Denote by $\tilde{\mathsf{C}}^*$ the optimal centroid set obtained by solving $k$-means on the lines of $\tilde{U}_k$. It holds that
			\begin{align}
			\left| f(\mathsf{C^*}; \mathsf{X})^{\sfrac{1}{2}} - f(\mathsf{\tilde{C}^*}; \mathsf{X})^{\sfrac{1}{2}} \right| \leq 2 \|{\bf{E}}\|_F,
			\label{eq:kmeanscost1}
			\end{align}
			where 
			$
			{\bf{E}}=U_k U_k^\top - \tilde{U}_k\tilde{U}_k^\top.
			$
			\label{lemma:kmeans_end2end}
	\end{lemma}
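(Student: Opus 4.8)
The plan is to relate the difference of $k$-means costs to the perturbation of the projection operators $U_k U_k^\top$ and $\tilde{U}_k \tilde{U}_k^\top$. The crucial observation is that, up to a change of orthonormal basis in $\mathbb{R}^k$, the optimal $k$-means cost on a point set is a function only of the \emph{span} of the embedding, and the span is captured exactly by the rank-$k$ orthogonal projector $P = U_k U_k^\top$. So I would first rewrite the $k$-means cost in a form that exposes this projector. Recall that for a fixed cluster assignment encoded by an indicator-type matrix $\bf{C}$ with $\bf{C}^\top \bf{C} = I$ (as in Eq.~\eqref{eq:X}), the best centroids are cluster means, and the resulting cost equals $\|X - \bf{C}\bf{C}^\top X\|_F^2$ where $X = U_k$ is the matrix whose rows are the embedded points $x_i$. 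Thus $f(\mathsf{C}^*;\mathsf{X}) = \min_{\bf{C}} \| U_k - \bf{C}\bf{C}^\top U_k \|_F^2$, the minimum being over all valid partition matrices.

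\noindent First I would treat $\sqrt{f(\mathsf{C};\mathsf{X})} = \| U_k - \bf{C}\bf{C}^\top U_k\|_F$ as a \emph{distance} in Frobenius norm from the point configuration to the set of configurations realizable by a $k$-partition, and exploit the reverse triangle inequality. The key quantity is how much the cost changes when we replace the true embedding $U_k$ by the approximate embedding $\tilde{U}_k$ while keeping the \emph{same} partition $\bf{C}$. For any fixed $\bf{C}$, the map $Z \mapsto \|Z - \bf{C}\bf{C}^\top Z\|_F = \|(I - \bf{C}\bf{C}^\top) Z\|_F$ is $1$-Lipschitz in $Z$ with respect to $\|\cdot\|_F$, because $I - \bf{C}\bf{C}^\top$ is an orthogonal projector and hence has operator norm at most $1$. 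This controls the change in cost for a fixed partition; a short argument then transfers the control to the \emph{optimal} partitions $\mathsf{C}^*$ and $\tilde{\mathsf{C}}^*$, since the optimum over $\bf{C}$ of a family of $1$-Lipschitz functions is itself $1$-Lipschitz.

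\noindent The main obstacle, and the heart of the argument, is that $\tilde{U}_k$ and $U_k$ do not live in a common coordinate frame: the approximate embedding is only defined up to a $k\times k$ orthonormal rotation, so comparing $U_k$ and $\tilde{U}_k$ entrywise is meaningless, whereas comparing the projectors $U_k U_k^\top$ and $\tilde{U}_k \tilde{U}_k^\top$ is rotation-invariant. To bridge this, I would introduce the optimal orthonormal alignment $\bf{O}$ minimizing $\|U_k - \tilde{U}_k \bf{O}\|_F$ (a Procrustes rotation) and use the standard fact that this aligned distance is controlled by the projector difference: specifically $\|U_k - \tilde{U}_k \bf{O}\|_F \leq c\, \|U_k U_k^\top - \tilde{U}_k \tilde{U}_k^\top\|_F = c\,\|{\bf{E}}\|_F$ for a modest constant $c$. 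Because the $k$-means cost is invariant under replacing $\tilde{U}_k$ by $\tilde{U}_k \bf{O}$ (rotating all embedded points does not change inter-point distances or the optimal clustering), I may substitute the aligned version at no cost.

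\noindent Putting the pieces together: applying the $1$-Lipschitz property to the aligned embeddings gives $|\sqrt{f(\mathsf{C}^*;\mathsf{X})} - \sqrt{f(\tilde{\mathsf{C}}^*;\mathsf{X})}| \leq \|U_k - \tilde{U}_k \bf{O}\|_F$, and combining with the projector bound above yields the claimed factor. I expect the constant bookkeeping to be where one must be careful --- tracking the precise constant (the statement claims $2$) requires the sharp version of the Procrustes-to-projector inequality, namely that the best aligned distance satisfies $\|U_k - \tilde{U}_k\bf{O}\|_F \leq \sqrt{2}\,\|{\bf{E}}\|_F$ together with a factor that may double when passing through the reverse triangle inequality on both optimal partitions. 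The genuinely delicate step is not the Lipschitz estimate but verifying that optimizing over partitions commutes correctly with the rotation alignment, so that the single rotation $\bf{O}$ serves simultaneously for both $\mathsf{C}^*$ and $\tilde{\mathsf{C}}^*$; this is what makes the bound hold uniformly rather than only for a fixed clustering.
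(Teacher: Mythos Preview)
Your approach is correct but differs from the paper's. The paper avoids Procrustes alignment entirely by exploiting the identity $\|(I-\mathbf{C}\mathbf{C}^\top)X\|_F = \|(I-\mathbf{C}\mathbf{C}^\top)XX^\top\|_F$, valid because $X=U_k$ has orthonormal columns; this lets one replace $X$ by the projector $XX^\top$ directly, write $XX^\top = \tilde{X}\tilde{X}^\top + \mathbf{E}$, and peel off $\|\mathbf{E}\|_F$ twice via the triangle inequality and the optimality of $\tilde{\mathbf{C}}^*$. Your route---align $\tilde{U}_k$ to $U_k$ via the Procrustes rotation $\mathbf{O}$, use the $1$-Lipschitz property of $Z\mapsto\|(I-\mathbf{C}\mathbf{C}^\top)Z\|_F$ twice (once for each optimal partition), and invoke $k$-means rotation invariance---is equally valid and is in fact the argument the paper uses for the more general Lemma~\ref{lemma:kmeans_end2end2}. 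Regarding your constant bookkeeping: the sharp Procrustes--projector inequality is $\min_{\mathbf{O}}\|U_k - \tilde{U}_k\mathbf{O}\|_F \leq \|\mathbf{E}\|_F$ with constant $1$, not $\sqrt{2}$ (since the principal cosines $\sigma_i\in[0,1]$ satisfy $\sum\sigma_i \geq \sum\sigma_i^2$, one gets $2k-2\sum\sigma_i \leq 2k-2\sum\sigma_i^2$); combined with the two applications of the Lipschitz bound this gives exactly $2\|\mathbf{E}\|_F$, matching the claim. The paper's projector trick is slightly slicker here because it needs no auxiliary rotation and no separate Procrustes inequality, while your argument has the advantage of generalizing immediately to the setting where $\tilde{U}_k$ has $m\geq k$ columns.
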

    This means that the error made by considering the optimal $k$-means solution based on $\tilde{U}_k$ (instead of $U_k$) is controlled by the Frobenius norm of the projector difference $\mathbf{E}=U_k U_k^\top - \tilde{U}_k\tilde{U}_k^\top$. Furthermore, since\footnote{Based on three arguments: (i) for any two matrices $M_1$ and $M_2$ of rank $r_1$ and $r_2$ it holds that $\text{rank}(M_1+M_2)\leq r_1 + r_2$, (ii) for any matrix $M$ or rank $r$, $\|M\|_F\leq\sqrt{r}\|M\|_2$, and (iii) both $U_k$ and $\tilde{U}_k$ are of rank $k$.} $\|{\bf{E}}\|_F\leq\sqrt{2k}\|{\bf{E}}\|_2$ and $\|{\bf{E}}\|_2=\|\sin(\Theta(U_k, \tilde{U}_k))\|_2$, we can apply the Davis-Kahan $\sin\Theta$ perturbation theorem (see for instance Section VII of~\cite{bhatia_matrix_1997}) and, provided that $\sigma_k-\tilde{\sigma}_{k+1}>0$, obtain:
    $$\|{\bf{E}}\|_F\leq\sqrt{2k}\|{\bf{E}}\|_2\leq \sqrt{2k} \, \frac{\|A-\tilde{A}\|_2}{\sigma_k-\tilde{\sigma}_{k+1}},$$
    where $\{\sigma_i\}$ (resp. $\{\tilde{\sigma}_i\}$) are the singular values of $A$ (resp. $\tilde{A}$) ordered decreasingly\footnote{Note that, in our setting, $A=2I - L_n$ and $\sigma_k=2-\lambda_k$.}. 
    The final bound is obtained by combining the above with the leverage score sampling bound given by Theorem~\ref{thm:lev_score_Nystrom}: 
	%
	%
    \begin{theorem}
       Let $\tilde{U}_k$ be the eigenvectors obtained by running Alg.~3 on $A=2I-L_n$ (with the leverage score sampling scheme for the $m$ samples $S$ of step 1). Denote by $\tilde{\mathsf{C}}^*$ the optimal centroid set obtained by solving $k$-means on the lines of $\tilde{U}_k$. Then, for some constant $C>1$, we have
		\begin{align}
    		\left| f(\mathsf{C^*}; \mathsf{X})^{\sfrac{1}{2}} - f(\mathsf{\tilde{C}^*}; \mathsf{X})^{\sfrac{1}{2}} \right| \leq  2 \frac{\sqrt{2k}}{\sigma_k-\tilde{\sigma}_{k+1}}\left(\sigma_{k+1}(A) + \frac{Ck\log{(k/\delta)}}{m}\sum_{j=k+1}^n \sigma_j\right) \notag 
		\end{align}
		with probability at least $0.8 - 2\delta$.
		\label{theorem:nystrom_kmeans}
	\end{theorem}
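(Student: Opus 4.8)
The plan is simply to chain together, in the order they were assembled in the preceding discussion, the three ingredients already at hand, and then to convert the sample-complexity hypothesis of Theorem~\ref{thm:lev_score_Nystrom} into the explicit dependence on $m$ that appears in the statement. First I would invoke Lemma~\ref{lemma:kmeans_end2end}, which reduces the end-to-end quantity $| f(\mathsf{C}^*;\mathsf{X})^{1/2} - f(\tilde{\mathsf{C}}^*;\mathsf{X})^{1/2} |$ to $2\|\mathbf{E}\|_F$ with $\mathbf{E}=U_kU_k^\top - \tilde{U}_k\tilde{U}_k^\top$. Next I would apply the rank bound $\|\mathbf{E}\|_F\leq\sqrt{2k}\,\|\mathbf{E}\|_2$ (valid because $\mathbf{E}$ has rank at most $2k$), followed by the Davis--Kahan $\sin\Theta$ theorem, which under the assumption $\sigma_k-\tilde{\sigma}_{k+1}>0$ gives $\|\mathbf{E}\|_2 \leq \|A-\tilde{A}\|_2/(\sigma_k-\tilde{\sigma}_{k+1})$. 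Combining these yields $| f(\mathsf{C}^*;\mathsf{X})^{1/2} - f(\tilde{\mathsf{C}}^*;\mathsf{X})^{1/2} | \leq 2\sqrt{2k}\,\|A-\tilde{A}\|_2/(\sigma_k-\tilde{\sigma}_{k+1})$, so that everything now hinges on controlling $\|A-\tilde{A}\|_2$.

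The second stage is to insert the leverage-score Nyström guarantee. Theorem~\ref{thm:lev_score_Nystrom} gives $\|A-\tilde{A}\|_2\leq\|A-A_k\|_2+\epsilon^2\|A-A_k\|_*$ with probability at least $0.8-2\delta$, provided $m\geq\O(\epsilon^{-2}k\log(k/\delta))$. I would then identify the two matrix norms with spectral quantities of $A$: since $A_k$ is the best rank-$k$ approximation of $A$, we have $\|A-A_k\|_2=\sigma_{k+1}(A)$ and $\|A-A_k\|_*=\sum_{j=k+1}^{n}\sigma_j$. The only genuine bookkeeping step is to trade the abstract accuracy $\epsilon$ for the sample size $m$: the sampling bound is available precisely when $\epsilon^2$ is of order $k\log(k/\delta)/m$, so I would set $\epsilon^2 = Ck\log(k/\delta)/m$ for a suitable absolute constant $C>1$, turning the Nyström bound into $\|A-\tilde{A}\|_2\leq\sigma_{k+1}(A)+\frac{Ck\log(k/\delta)}{m}\sum_{j=k+1}^n\sigma_j$. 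Substituting this into the inequality of the first stage produces the claimed bound verbatim, and the probability $0.8-2\delta$ is inherited unchanged because every step other than the application of Theorem~\ref{thm:lev_score_Nystrom} is deterministic.

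The part that requires the most care is not any computation but the consistency of the perturbation setup. One must ensure that the $\tilde{U}_k$ returned by Algorithm~3 really spans the top-$k$ eigenspace of the \emph{same} matrix $\tilde{A}$ whose singular values $\tilde{\sigma}_i$ enter the Davis--Kahan denominator, and that the gap $\sigma_k-\tilde{\sigma}_{k+1}$ is strictly positive so that the $\sin\Theta$ theorem applies; this is what makes the angle $\Theta(U_k,\tilde{U}_k)$ well defined and justifies $\|\mathbf{E}\|_2=\|\sin(\Theta(U_k,\tilde{U}_k))\|_2$. Since $A=2I-L_n$ has $\sigma_i=2-\lambda_i$, a positive gap amounts to requiring that the approximate and exact spectra do not cross at the $k$-th level, which is exactly the regime in which the spectral embedding is well separated; I would carry this along as a standing hypothesis. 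Everything else is a direct substitution, so the theorem follows immediately once these three previously established facts are composed.
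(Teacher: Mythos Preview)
Your proposal is correct and follows essentially the same approach as the paper: the paper's argument is precisely the chain Lemma~\ref{lemma:kmeans_end2end} $\to$ rank bound $\|\mathbf{E}\|_F\le\sqrt{2k}\,\|\mathbf{E}\|_2$ $\to$ Davis--Kahan $\to$ Theorem~\ref{thm:lev_score_Nystrom}, with the same substitution $\epsilon^2 = Ck\log(k/\delta)/m$ and the identifications $\|A-A_k\|_2=\sigma_{k+1}(A)$, $\|A-A_k\|_*=\sum_{j>k}\sigma_j$. Your closing caveat about the gap hypothesis $\sigma_k-\tilde\sigma_{k+1}>0$ and about $\tilde{U}_k$ spanning the top-$k$ eigenspace of the same $\tilde{A}$ also mirrors the paper's own acknowledged limitations of this derivation.
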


    Examining the above bound one notices that $2 \sqrt{2k} \frac{\sigma_{k+1}(A)}{\sigma_k-\tilde{\sigma}_{k+1}}$ is independent of the number of samples. The incompressibility of this error term emanates from $A$ being (in general) different from its best low-rank approximation. On the other hand, all remaining error terms can be made independent of $k$ and $n$ by setting $$m=\O\left(k\sqrt{k}\log{k}\sum_{j=k+1}^n \frac{\sigma_j}{\sigma_k-\tilde{\sigma}_{k+1}}\right).$$
    This end-to-end guarantee is not satisfactory for several reasons. First of all, it relies on the assumption $\sigma_k>\tilde{\sigma}_{k+1}$, which is not necessarily true. Moreover, the Davis-Kahan theorem could in theory guarantee $\|\mathbf{E}\|_2\leq {\|A_k-\tilde{A}_k\|_2/\sigma_k}$ and  $\|\mathbf{E}\|_2\leq {\|A-\tilde{A}_k\|_2}/{\sigma_k}$, which are stronger than the bound depending on $\|A-\tilde{A}\|_2$ that we used. Unfortunately, Nyström approximation theorems do not give controls on 
    $\|A_k-\tilde{A}_k\|_2$ nor on $\|A-\tilde{A}_k\|_2$, impeding tighter end-to-end bounds. \\

	\subsection{Graph coarsening}
	
	Inspired by the algebraic multi-grid, researchers realized early on that a natural way to accelerate spectral clustering is by graph coarsening~\cite{hendrickson1995multi,karypis1998fast,dhillon2007weighted}. Here, instead of solving the clustering problem directly on $G$, one may first reduce it to a coarser graph $G_\c$ of dimension $m \ll n$ using a multi-level graph coarsening procedure. The expensive eigen-decomposition computation is done at a lower cost on the small dimension and the final spectral embedding is obtained by inexpensively lifting and refining the result. 
	
	In the notation of~\cite{loukas2018spectrally}, coarsening involves a sequence of $\c+1$ graphs  
	\begin{align}
	G = G_0 = (V_0, E_0, W_0) \quad G_1 = (V_1, E_1, W_1) \quad \cdots \quad G_{\c} = (V_{\c}, E_{\c}, W_{\c})
	\end{align}
	of decreasing size
	$n = n_0  >  n_{1} > \cdots > n_{\c} = m$, where each vertex of $G_{\ell}$ represents one of more vertices of $G_{\ell-1}$. 
	%
	%
	To express coarsening in algebraic form, we suppose that $L(G_{0})=L$ is the combinatorial Laplacian associated with $G$. We then obtain $L(G_\c)$ by applying the following repeatedly
	\begin{align}
	L(G_{\ell}) = P_\ell^{\mp} L(G_{\ell-1}) P_\ell^+,
	\end{align}
	where $P_\ell \in \Rbb^{n_\ell \times n_{\ell-1}}$ is a matrix with more columns than rows, $\ell = 1, 2, \ldots, \c$ is the level of the reduction and symbol ${\mp}$ denotes the transposed pseudoinverse. An eigenvector $\tilde{u} \in \Rbb^{m}$ of $L(G_\c)$ is lifted back to $\Rbb^n$ by backwards recursion $$\tilde{u}_{\ell-1} = P_{\ell} \tilde{u}_{\ell},$$ where $\tilde{u}_{\c} = \tilde{u}.$
	
	Matrices $P_1, P_2, \ldots, P_\c$ are determined by the transformation performed at each level.   
	Specifically, one should define for each level a surjective map $\varphi_{\ell}: V_{\ell-1} \rightarrow V_{\ell}$ between the original vertex set $V_{\ell-1}$ and the smaller vertex set $V_{\ell}$. We refer to the set of vertices $V_{\ell-1}^{(r)} \subseteq V_{\ell-1}$ mapped onto the same vertex $v_r'$ of $V_{\ell}$ as a \textit{contraction set}: $$V_{\ell-1}^{(r)} = \{v \in V_{\ell-1} : \varphi_{\ell}(v) = v'_r \}$$ 
	%
	It is easy to deduce from the above that {contraction sets} induce a partitioning of $V_{\ell-1}$ into $n_{\ell}$ subgraphs, each corresponding to a single vertex of $V_{\ell}$.
	%
	
	Then, for any $v'_r \in V_{\ell}$ and $v_i \in V_{\ell-1}$, matrices $P_{\ell} \in \Rbb^{n_{\ell} \times n_{\ell-1}}$  and $P^+_{\ell} \in \Rbb^{n_{\ell-1} \times n_{\ell}}$ are given by:
	\begin{align}
	P_{\ell}(r,i) = 
	\begin{cases} 
	\frac{1}{|V_{\ell-1}^{(r)}|} & \text{if } v_i \in V_{\ell-1}^{(r)} \\
	0       & \text{otherwise} 
	\end{cases}
	\quad \text{and} \quad
	P_{\ell}^+(i,r) = 
	\begin{cases} 
	1 \hspace{6mm} & \text{if } v_i \in V_{\ell-1}^{(r)} \\
	0       & \text{otherwise}. 
	\end{cases}
	\notag
	\end{align}
	The preceding construction is the only one that guarantees that every $L(G_{\ell})$ will be the combinatorial Laplacian associated with $G_\ell$~\cite{loukas2018graph}.  
	
	Note that from a computational perspective the reduction is very efficient and can be carried out in linear time: each coarsening level entails multiplication by a sparse matrix, meaning that $\O(e)$ and $\O(n)$ operations suffice, respectively, to coarsen $L$ and lift any vector (such as the eigenvectors of $L(G_\c)$) from $\Rbb^{m}$ back to $\Rbb^n$. 

	\subsubsection{Coarsening for spectral clustering}
	
	Using coarsening effectively boils down to determining for each $\ell$ how to partition $G_{\ell-1}$ into $n_\ell$ contraction sets $V_\ell^{(1)}, \ldots, V_\ell^{(n_{\ell})}$, such that, after lifting, the first $k$ eigenvectors $\tilde{U}_k$ of $L(G_{\c})$ approximate the spectral embedding $U_k$ derived from $L$. Alternatively, one may also solve the $k$-means problem in the small dimension and only lift the resulting cluster assignments~\cite{dhillon2007weighted}. This scheme is computationally superior but we will not discuss it here as it does not come with any guarantees.
	
	Perhaps the most \textit{simple} (and common) method of forming contraction sets is by the \emph{heavy edge matching heuristic}---originally developed in the multi-grid literature and first considered for graph partitioning in~\cite{karypis1998fast}. This method is derived based on the intuition that, the larger the weight of an edge, the less likely it will be that the vertices it connects will reside in different clusters. We should therefore aim to contract pairs of vertices connected by a heavy edge (i.e., of large weight) first.
	Let us consider this case further. By focusing on edges, we basically constrain ourselves by enforcing that every contraction set $V_\ell^{(r)}$ contains either two nodes connected by an edge, or a single node, signifying that said node is chosen to remain as is in the coarser graph. As such, we can reformulate the problem of selecting contraction sets at each level as that of selecting the largest number of edges (to attain the largest reduction), while also striving to make the cumulative sum of selected edge weights as large as possible (giving preference to heavy edges). This is exactly the \emph{maximum weight matching problem}, which can be approximated in linear time~\cite{duan2014linear}.
	
	A plethora of numerical evidence motivates the use of matching-based coarsening methods, such as the heavy-edge heuristic, for accelerating spectral clustering~\cite{karypis1998fast,dhillon2007weighted,safro2015advanced}. From a theoretical perspective, the approximation quality of matching-based methods was characterized in~\cite{loukas2018spectrally}. Therein, the matching was constructed in the following randomized manner: 
	\mybox{
		\textbf{Algorithm 5. Randomized edge contraction (one level)~\cite{loukas2018spectrally}}\\
		\textbf{Input:} A graph $G = (\sf{V}, \sf{E})$
		\begin{enumerate}[itemsep=-0.2ex]
			\item Associate with each $e_{ij} \in \sf{E}$ a probability $\textit{p}_{ij}>0$. 
			\item While $|E|>0$: 
			\begin{enumerate}[topsep=-1cm,itemsep=-0.2ex]
				\item Draw a sample $e_{ij}$ from $\sf{E}$ with probability $\propto \textit{p}_{ij}$.
				\item Remove from $\sf{E}$ both $e_{ij}$ as well as all edges sharing a common endpoint with it. 
				\item Construct contraction set $(v_i,v_j)$.
			\end{enumerate}   
		\end{enumerate}
		\textbf{Output:} Contraction sets
	}
	The following approximation result is known:
	%
	\begin{theorem}[Corollary 5.1 in~\cite{loukas2018spectrally}]
    Consider a graph with bounded degrees $d_i \ll n$ and $\lambda_k \leq \min_{e_{ij} \in E} \left\lbrace \frac{d_i + d_j}{2}\right\rbrace $. Suppose that the graph is coarsened by Algorithm~5, using a \emph{heavy-edge potential} such that $\textit{p}_{ij} \propto w_{ij}$. For sufficiently large $n$, a single level, and $\delta > 0$, 
    \begin{align}
     \left| f(\mathsf{\tilde{C}}^*; \mathsf{X})^{\sfrac{1}{2}} - f(\mathsf{C^*}; \mathsf{X})^{\sfrac{1}{2}} \right| 
        &= \O\left( \sqrt{\frac{1 - \frac{m}{n}}{\delta} \frac{\sum_{\ell=2}^k \lambda_\ell }{\lambda_{k+1} - \lambda_{k}} } \right) \notag
    \end{align}
    with probability at least $1- \delta$. Above, $\mathsf{\tilde{C}}^*$ is the optimal $k$-means solution when using the lifted eigenvectors of $L_\c$ as a spectral embedding. 
    \end{theorem}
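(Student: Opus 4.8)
The plan is to reduce the end-to-end $k$-means guarantee to a subspace perturbation estimate and then to bound that perturbation in expectation over the random contraction. By Lemma~\ref{lemma:kmeans_end2end}, the quantity we must control is $2\|\mathbf{E}\|_F$ with $\mathbf{E} = U_k U_k^\top - \tilde{U}_k \tilde{U}_k^\top$. Taking $\tilde{U}_k$ with orthonormal columns (the lifted eigenvectors being orthonormalized, as in the coarsening construction), $\mathbf{E}$ is a difference of two rank-$k$ orthogonal projectors, so that $\|\mathbf{E}\|_F = \sqrt{2}\,\|\sin\Theta(U_k,\tilde{U}_k)\|_F$. Hence it suffices to bound the principal angles between the exact low-frequency eigenspace $\spanning{U_k}$ of $L$ and the lifted eigenspace $\spanning{\tilde{U}_k}$ produced by a single coarsening level.

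First I would identify the structural perturbation induced by the contraction. Writing $\Pi = P^+ P$ for the orthogonal projector onto the coarsenable subspace $\text{range}(P^\top)$ ($P$ the single-level coarsening matrix), a direct computation shows that $\Pi$ averages each coordinate over its contraction set, so that $I - \Pi$ isolates the within-set variation. The lifted eigenvectors $\tilde{U}_k$ minimize the Laplacian Rayleigh quotient \emph{restricted} to $\text{range}(\Pi)$, whereas $U_k$ minimizes it over all of $\Rbb^n$. A restricted spectral approximation argument (the machinery of~\cite{loukas2018spectrally}) then yields a Davis--Kahan-type bound of the form
\begin{align}
\|\sin\Theta(U_k, \tilde{U}_k)\|_F^2 \;\leq\; \frac{C_1}{\lambda_{k+1} - \lambda_k}\sum_{\ell=2}^{k} \|(I - \Pi)u_\ell\|_2^2 \notag
\end{align}
for an absolute constant $C_1$. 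The single power of the gap (rather than the square one would get from naive Davis--Kahan) appears because the natural perturbation quantity here is a \emph{Dirichlet energy} that already scales like an eigenvalue, not an operator-norm displacement of $L$.

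Next I would take expectations over the randomized edge contraction. Contracting an edge $e_{ij}$ averages $u_\ell(i)$ and $u_\ell(j)$, contributing $\tfrac12 (u_\ell(i)-u_\ell(j))^2$ to $\|(I-\Pi)u_\ell\|_2^2$. With the heavy-edge potential $\textit{p}_{ij}\propto w_{ij}$, and invoking the bounded-degree hypothesis together with $\lambda_k \leq \min_{e_{ij}}\{(d_i+d_j)/2\}$ to keep the per-edge leakage uniform, the marginal probability that $e_{ij}$ is eventually contracted is proportional to $w_{ij}$, while the expected number of contracted edges equals $n-m$. Aggregating gives
\begin{align}
\mathbb{E}\sum_{\ell=2}^{k}\|(I-\Pi)u_\ell\|_2^2 \;\leq\; C_2\left(1 - \tfrac{m}{n}\right)\sum_{\ell=2}^{k}\sum_{e_{ij}} w_{ij}\big(u_\ell(i)-u_\ell(j)\big)^2 = C_2\left(1 - \tfrac{m}{n}\right)\sum_{\ell=2}^{k}\lambda_\ell, \notag
\end{align}
where the last equality uses $\sum_{e_{ij}}w_{ij}(u_\ell(i)-u_\ell(j))^2 = u_\ell^\top L u_\ell = \lambda_\ell$. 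Markov's inequality then promotes this expectation bound to one holding with probability at least $1-\delta$ (supplying the factor $1/\delta$); combining with the two displays and taking square roots yields the claimed bound on $2\|\mathbf{E}\|_F$.

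The step I expect to be the main obstacle is the perturbation inequality of the second paragraph. Because coarsening is not a small symmetric additive perturbation of $L$ but a rank-reducing, structural modification, off-the-shelf Davis--Kahan does not apply; one must instead argue, via restricted spectral approximation, that the constrained subspace $\text{range}(\Pi)$ still carries almost all of $\spanning{U_k}$ and that the residual leakage, weighted by the gap, controls the principal angles. Securing the single power of the gap---and verifying that the bounded-degree assumption and $\lambda_k \leq \min_{e_{ij}}\{(d_i+d_j)/2\}$ are precisely what make the sequential, removal-based matching probabilities comparable to the idealized $\propto w_{ij}$---is the delicate part; the remaining expectation and Markov steps are routine.
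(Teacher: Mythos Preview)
The paper does not prove this theorem; it is quoted as Corollary~5.1 of~\cite{loukas2018spectrally} and the argument lives entirely in that reference. There is therefore nothing in the present text to compare your sketch against line by line.

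That said, your plan is structurally the right one and is consistent with the pattern the paper \emph{does} spell out elsewhere. The reduction of the end-to-end $k$-means discrepancy to $2\|\mathbf{E}\|_F$ via Lemma~\ref{lemma:kmeans_end2end} is exactly the device the paper deploys for the Nystr\"om guarantee (Theorem~\ref{theorem:nystrom_kmeans}) and, in its generalized form, for the projection-based methods (Lemma~\ref{lemma:kmeans_end2end2}). Your remaining three ingredients---a restricted spectral approximation inequality giving a gap-weighted bound on $\|\sin\Theta(U_k,\tilde U_k)\|_F^2$ in terms of the leakage $\sum_\ell\|(I-\Pi)u_\ell\|_2^2$, an expectation computation over the randomized matching that converts this leakage into $(1-m/n)\sum_\ell\lambda_\ell$, and a Markov step supplying the $1/\delta$---are precisely the machinery of~\cite{loukas2018spectrally}. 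You are also right that the second ingredient is the substantive one: coarsening is not an additive Hermitian perturbation, so off-the-shelf Davis--Kahan does not apply, and obtaining the \emph{single} power of the gap is exactly what the restricted spectral approximation framework buys. The hypotheses $d_i\ll n$ and $\lambda_k\le\min_{e_{ij}}\{(d_i+d_j)/2\}$ enter there, in controlling the sequential matching probabilities and the per-edge leakage, not in the $k$-means reduction. One small caution: the lifted vectors $P_\ell^{+}\tilde u$ are not orthonormal in general, so you should either orthonormalize before invoking Lemma~\ref{lemma:kmeans_end2end} or pass through Lemma~\ref{lemma:kmeans_end2end2} instead.
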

    %
	%
	We deduce that coarsening works better when the spectral clustering problem is easy (as quantified by the weighted gap $ \sum_{\ell=2}^k \lambda_\ell / (\lambda_{k+1} - \lambda_{k}))$ and the achieved error is linear on the reduction ratio $1 - m/n$. 
	
	There also exist more {advanced} techniques for selecting contraction sets that come with stronger guarantees w.r.t. the attained reduction and quality of approximation, but feature running time that is not smaller than that of spectral clustering~\cite{loukas2018graph}. In particular, these work also with the normalized Laplacian and can be used to achieve multi-level reduction. Roughly, their strategy is to identify and contract sets  $\sf{S} \subset V$ for which $x(i) \approx x(j)$ for all vectors $x\in U_k$ and $v_i,v_j \in \sf{S}$. This strategy ensures that the best partitionings of $G$ are preserved by coarsening. We will not expand on these methods here as they do not aim to improve the running time of spectral clustering.   
	
	\subsection{Other approaches}
	
	In the following, we present two additional approaches for approximately computing spectral embeddings. The former can be interpreted as a sampling-based method (but in a different manner than the techniques discussed so far), whereas the latter is only vaguely linked to sampling. Nevertheless, we find that both techniques are very interesting and merit a brief discussion. 
	
	\subsubsection{Spectral sparsification} 
	%
	This approach is best suited for cases when the input of spectral clustering is directly a graph\footnote{When one starts from a set of points, it is preferable to sparsify the graph by retaining a constant number of nearest neighbors for each point. The resulting nearest neighbor graph has already $O(n)$ edges, which is the smallest possible.}.
	Differently from the methods discussed earlier, here the aim is to identify a matrix $\tilde{L}$ of the same size as $L$ but with fewer entries. Additionally, it should be ensured that 
	\begin{align}
	\frac{1}{1 + \epsilon} x^\top L x \leq x^\top \tilde{L} x \leq (1 + \epsilon) x^\top L x
	\end{align}
	for some small constant $\epsilon > 0$~\cite{spielman2011spectral}. 
	Most fast algorithms for spectral sparsification entail sampling $\O(n \log{n})$ edges from the total edges present in the graph. Different sampling schemes are possible~\cite{spielman2011graph,5671167}, but the most popular ones entail sampling edges with replacement based on their effective resistance. It should be noted that though computing all effective resistances exactly can be computationally prohibitive, the effective resistance of edges can be approximated in nearly linear-time on the number of edges based on a Johnson-Lindenstrauss argument~\cite{spielman2011graph}. 
	
	There are different ways to use sparsification in order to accelerate spectral clustering. The most direct one is to exploit the fact that the eigenvalues $\tilde{\lambda}_k$ and eigenvectors $\tilde{U}_k$ of $\tilde{L}$ approximate, respectively, the eigenvalues and eigenvectors of $L$ up to multiplicative error. This yields the same flavor of guarantees as in graph coarsening and ensures that the computational complexity of the partial eigen-decomposition will decrease when $e = \omega(n\log{n})$. A variation of this idea was considered in~\cite{wang2017towards}, though the latter did not provide a complete error and complexity analysis. 
	Alternative approaches are also possible. We refer the interested reader to~\cite{vishnoi2013lx} for a rigorous argument that invokes a Laplacian solver. 
	
	Despite these exciting developments, we should mention that the overwhelming majority of graph sparsification algorithms remain in the realm of theory. That is, we are currently not aware of any practical and competitive implementation and thus retain a measure of skepticism with regards to their utility in the setting of spectral clustering.    
	
	\subsubsection{Random eigenspace projection} 
	\label{subsec:projection-based}
	There also exists approaches that do not explicitly rely on sampling. The key starting point here is that, with regards to spectral clustering, one does not need the eigenvectors exactly---any rotation of $U_k$ suffices (indeed, $k$-means is an algorithm based on distances and rotations conserve distances).
	Even more generally, consider $\tilde{U}_k \in \Rbb^{n\times m}$ with $m \geq k$ and denote:
	$$ \epsilon = \min_{\mathbf{Q} \in \mathcal{Q}} \| U_k I_{k \times m} \mathbf{Q} - \tilde{U}_k \|_F, $$
	where $\mathcal{Q}$ is the space of $m \times m$ unitary matrices  and $I_{k \times m}$ consists of the first $k$ rows of an $m\times m$ identity  matrix. \\

	The following lemma (which is a generalization of Lemma~\ref{lemma:kmeans_end2end}) shows how $\epsilon$ can be used to provide control on the $k$-means error: 
	\begin{lemma}[Lemma 3.1 in~\cite{martin2018fast}]
		\label{lemma:kmeans_end2end2}
	Let $\tilde{\mathsf{C}}^*$ be the optimal solution of the $k$-means problem on $\tilde{U}_k$. It holds that\footnote{\textbf{A remark on notation}. Note that, here,  the lines $\tilde{X}$ of $\tilde{U}_k$ are points in dimension $m\geq k$, such that the optimal centroid set $\mathsf{\tilde{C}^*}$ minimizing the $k$-means cost on $\tilde{X}$ is a set of $k$ points in dimension $m\geq k$. In this context, the notation $f(\mathsf{\tilde{C}^*}; \mathsf{X})$ is ill-defined: it is a sum of distances between points that do not necessarily have the same dimension. We abuse notations and give the following meaning to $f(\mathsf{\tilde{C}}; \mathsf{X})$. 
	First, consider the matrix form of the $k$-means cost, as used in the proofs of Lemmas~\ref{lemma:kmeans_end2end} and~\ref{lemma:kmeans_end2end2}:
	$f(\mathsf{C}; \mathsf{X}) = \| X - \mathbf{C} \mathbf{C}^\top X \|^2_F$, 
	where $\mathbf{X}=(x_1|\ldots|x_n)^\top\in\mathbb{R}^{n\times k}$ and $\mathbf{C}\in \Rbb^{n\times k}$ is the (weighted) cluster indicator matrix associated to the Voronoi tesselation of $\sf{X}$ given $\sf{C}$: $
		\mathbf{C}_{i\ell} = 
		\frac1{\sqrt{s_\ell}}$ if data point $i$ belongs to cluster $\ell$, and 
		$0$ otherwise; 
	where $s_\ell$ is the size of cluster $\ell$. Now, let $\tilde{\mathbf{C}}\in \Rbb^{n\times k}$ be the cluster indicator matrix associated to the Voronoi tesselation of $\tilde{\sf{X}}$ given $\tilde{\sf{C}}$. One writes: 
	$f(\tilde{\mathsf{C}}; \mathsf{X})=\| X - \tilde{\mathbf{C}} \tilde{\mathbf{C}}^\top X \|^2_F.$
	}
			\begin{align}
		\left| f(\mathsf{C^*}; \mathsf{X})^{\sfrac{1}{2}} - f(\mathsf{\tilde{C}^*}; \mathsf{X})^{\sfrac{1}{2}} \right| \leq 2 \epsilon.
		\label{eq:kmeanscost2}
		\end{align}
			%
	\end{lemma}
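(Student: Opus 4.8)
The plan is to mirror the argument of Lemma~\ref{lemma:kmeans_end2end}, replacing the projector difference $\mathbf{E}$ by the rotation-aware quantity $\epsilon$. The central tool is the matrix form of the $k$-means cost recalled in the footnote: for a cluster indicator matrix $\mathbf{C}$ one has $f(\mathsf{C};\mathsf{X}) = \|X - \mathbf{C}\mathbf{C}^\top X\|_F^2$, where $X = U_k \in \Rbb^{n\times k}$. The key algebraic observation is that $\mathbf{C}\mathbf{C}^\top$ is an orthogonal projector onto the column space of $\mathbf{C}$, so $I - \mathbf{C}\mathbf{C}^\top$ is also a projector and hence has operator norm $1$. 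I would first establish the one-sided inequality $f(\mathsf{\tilde{C}}^*;\mathsf{X})^{1/2} \leq f(\mathsf{C}^*;\mathsf{X})^{1/2} + 2\epsilon$, and then obtain the reverse inequality by a symmetric argument, so that together they yield the absolute-value bound in Eq.~\eqref{eq:kmeanscost2}.

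For the main inequality, I would start from $f(\mathsf{\tilde{C}}^*;\mathsf{X})^{1/2} = \|(I - \tilde{\mathbf{C}}^*(\tilde{\mathbf{C}}^*)^\top) U_k\|_F$ and insert the near-rotation $U_k I_{k\times m}\mathbf{Q}$ of $\tilde{U}_k$ that achieves the minimum in the definition of $\epsilon$. Writing $\mathbf{V} = U_k I_{k\times m}\mathbf{Q}$, the triangle inequality on the Frobenius norm splits the cost into $\|(I - \tilde{\mathbf{C}}^*(\tilde{\mathbf{C}}^*)^\top)\tilde{U}_k\|_F$ plus $\|(I - \tilde{\mathbf{C}}^*(\tilde{\mathbf{C}}^*)^\top)(\mathbf{V} - \tilde{U}_k)\|_F$. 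The second term is bounded by $\|\mathbf{V} - \tilde{U}_k\|_F = \epsilon$ using submultiplicativity and the fact that the projector has operator norm $1$. For the first term, optimality of $\tilde{\mathbf{C}}^*$ for the $k$-means problem \emph{on} $\tilde{U}_k$ lets me replace $\tilde{\mathbf{C}}^*$ by the (generally suboptimal for $\tilde{U}_k$) indicator $\mathbf{C}^*$, after which I reverse the insertion of $\mathbf{V}$ and pay another $\epsilon$, landing on $\|(I - \mathbf{C}^*(\mathbf{C}^*)^\top)U_k\|_F = f(\mathsf{C}^*;\mathsf{X})^{1/2}$. Collecting the two $\epsilon$ contributions gives the factor of $2$.

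I expect the main subtlety to be the handling of the rotation $\mathbf{Q}$ and the dimension mismatch between $\tilde{U}_k \in \Rbb^{n\times m}$ and $U_k \in \Rbb^{n\times k}$: unlike in Lemma~\ref{lemma:kmeans_end2end}, the embeddings live in different-dimensional spaces, so the clean identity $f(\mathsf{C};\mathsf{X}) = \|(I-\mathbf{C}\mathbf{C}^\top)XX^\top\|_F$ used there (exploiting that $XX^\top = U_k U_k^\top$ is a projector) is no longer directly available. The device that makes the argument go through is precisely the rotational invariance noted before the lemma statement---$k$-means depends only on pairwise distances, which are preserved by the unitary $\mathbf{Q}$ and by padding with $I_{k\times m}$---so that $\mathbf{V}$ serves as a genuine $n\times m$ stand-in for $U_k$ that has the \emph{same} optimal clustering cost as $U_k$ while living in the same ambient dimension as $\tilde{U}_k$. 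Once this bookkeeping is in place, the estimate reduces to the same projector-norm manipulation as in Lemma~\ref{lemma:kmeans_end2end}, and setting $m = k$, $\mathbf{Q} = I$ recovers the earlier result, confirming that this is indeed a strict generalization.
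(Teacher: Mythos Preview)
Your proposal is correct and matches the paper's argument essentially step for step: define the optimal rotation $\mathbf{Q}^*$, set $\mathbf{E} = \tilde{U}_k - U_k I_{k\times m}\mathbf{Q}^*$, then chain the triangle inequality, the projector bound $\|I-\mathbf{C}\mathbf{C}^\top\|_2\leq 1$, and the optimality of $\tilde{\mathbf{C}}^*$ on $\tilde{U}_k$, picking up one $\epsilon$ on the way in and one on the way out. One small simplification: you do not need a ``symmetric argument'' for the reverse inequality, since $f(\mathsf{C}^*;\mathsf{X})\leq f(\tilde{\mathsf{C}}^*;\mathsf{X})$ holds trivially by optimality of $\mathsf{C}^*$ for $\mathsf{X}$, so only the direction you sketched is nontrivial.
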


	There exists (at least) two approaches to efficiently compute  $\tilde{U}_k$ while controlling $\epsilon$~\cite{boutsidis2015spectral,tremblay2016compressive} (see also related work in~\cite{halko_finding_2011}). We will consider here a simple variant of the one proposed in~\cite{tremblay2016compressive} and further analyzed in~\cite{martin2018fast}: Let $R \in \Rbb^{
		n\times m}$ be a random Gaussian matrix with centered i.i.d. entries, each having variance $\frac{1}{m}$. Further, suppose that we project $R$ onto $\spanning{U_k}$ by multiplying each one of its columns by an ideal projector $P_k$ defined as  
	\begin{equation}
	P_k = U \left(
	\begin{array}{cc}
	\mathbf{I}_k & 0\\
	0 & 0
	\end{array}
	\right) U^\top.
	\end{equation}
	%
	%
	\begin{theorem}[\cite{tremblay2016compressive, martin2018fast}]
		Let $\tilde{\mathsf{C}}^*$ be the optimal solution of the $k$-means problem on the lines of $\tilde{U}_k = P_k R$. For every $\delta \geq 0$, one has
		\begin{align} \displaystyle
		\left| f(\mathsf{C^*}; \mathsf{X})^{\sfrac{1}{2}} - f(\mathsf{\tilde{C}^*}; \mathsf{X})^{\sfrac{1}{2}} \right| \leq 2 \sqrt{\frac{k}{m}} \, (\sqrt{k}+\delta),
		\end{align}
		with probability at least $1-\exp(-\delta^2/2)$.
	\end{theorem}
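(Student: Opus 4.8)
The plan is to reduce the statement to the generalized end-to-end guarantee of Lemma~\ref{lemma:kmeans_end2end2} and then to estimate the alignment error $\epsilon$ for the concrete embedding $\tilde{U}_k = P_k R$. Recall that Lemma~\ref{lemma:kmeans_end2end2} gives
\[
\left| f(\mathsf{C^*}; \mathsf{X})^{\sfrac{1}{2}} - f(\mathsf{\tilde{C}^*}; \mathsf{X})^{\sfrac{1}{2}} \right| \leq 2\epsilon,
\qquad
\epsilon = \min_{\mathbf{Q}\in\mathcal{Q}} \left\| U_k I_{k\times m}\mathbf{Q} - \tilde{U}_k\right\|_F ,
\]
so it suffices to prove that $\epsilon \leq \sqrt{k/m}\,(\sqrt{k}+\delta)$ with probability at least $1-\exp(-\delta^2/2)$. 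Everything then reduces to a concentration statement for the singular values of a small Gaussian matrix.

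First I would simplify $\epsilon$. Since the ideal projector factors as $P_k = U_k U_k^\top$, we may write $\tilde{U}_k = U_k \mathbf{G}$ with $\mathbf{G} = U_k^\top R \in \Rbb^{k\times m}$; by rotational invariance of the Gaussian law together with $U_k^\top U_k = I_k$, the entries of $\mathbf{G}$ are i.i.d.\ $\mathcal{N}(0,1/m)$. Using that $U_k$ has orthonormal columns (hence preserves the Frobenius norm) and that $\{ I_{k\times m}\mathbf{Q} : \mathbf{Q}\in\mathcal{Q}\}$ is exactly the set of $k\times m$ matrices with orthonormal rows, one obtains
\[
\epsilon = \min_{\mathbf{Y}\mathbf{Y}^\top = I_k} \left\| \mathbf{Y} - \mathbf{G}\right\|_F .
\]
This is an orthogonal Procrustes problem. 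Writing the polar factorization $\mathbf{G} = \mathbf{H}\mathbf{W}$ with $\mathbf{H} = (\mathbf{G}\mathbf{G}^\top)^{\sfrac{1}{2}}$ and $\mathbf{W}\mathbf{W}^\top = I_k$, the feasible choice $\mathbf{Y}=\mathbf{W}$ already yields
\[
\epsilon \le \left\| I_k - \mathbf{H}\right\|_F = \sqrt{\sum_{i=1}^k (1-\sigma_i)^2},
\]
where $\sigma_1,\dots,\sigma_k$ are the singular values of $\mathbf{G}$ (this choice is in fact optimal, but only the upper bound is needed).

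It then remains to control how far the $\sigma_i$ deviate from $1$. Setting $\mathbf{A} = \sqrt{m}\,\mathbf{G}$, a $k\times m$ matrix with i.i.d.\ standard Gaussian entries and singular values $s_i = \sqrt{m}\,\sigma_i$, we have $\sqrt{m}\,\epsilon \le \sqrt{\sum_i (\sqrt m - s_i)^2}$, and $\mathbb{E}[\mathbf{A}\mathbf{A}^\top]=m\,I_k$ makes the $s_i$ concentrate around $\sqrt m$. The quickest route applies the two-sided Davidson--Szarek bounds, which place every $s_i$ in $[\sqrt m - \sqrt k - \delta,\ \sqrt m + \sqrt k + \delta]$, so that $\max_i (\sqrt m - s_i)^2 \le (\sqrt k + \delta)^2$ and hence $\epsilon \le \sqrt{k/m}\,(\sqrt k + \delta)$. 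To recover the stated single-sided probability $1-\exp(-\delta^2/2)$---rather than the $1 - 2\exp(-\delta^2/2)$ produced by a union bound over both tails---I would instead observe that $\mathbf{A} \mapsto \sqrt{\sum_i (\sqrt m - s_i)^2}$ is the Frobenius distance from $\mathbf{A}$ to the set of $k\times m$ matrices all of whose singular values equal $\sqrt m$, hence $1$-Lipschitz, and apply Gaussian concentration to its upper tail only, which is all we need since we seek an upper bound on $\epsilon$.

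The main obstacle is precisely this last step: pinning down the expectation of the Lipschitz functional so that the deviation term absorbs cleanly into $\sqrt{k}\,\delta$ and produces the exponent $-\delta^2/2$ with the advertised constant. This needs a sharp estimate of $\mathbb{E}\sqrt{\sum_i (\sqrt m - s_i)^2}$---equivalently a lower bound on the expected nuclear norm $\mathbb{E}\|\mathbf{A}\|_*$---rather than the crude two-sided tail estimate. The algebraic reduction (Procrustes plus norm preservation) is routine; essentially all the work lies in the Gaussian singular-value concentration and the bookkeeping of its constants.
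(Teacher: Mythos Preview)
The paper does not supply its own proof of this theorem; it is stated with attribution to~\cite{tremblay2016compressive,martin2018fast}, immediately after Lemma~\ref{lemma:kmeans_end2end2}, and the surrounding discussion makes clear that the intended argument is exactly the two-step one you outline: invoke Lemma~\ref{lemma:kmeans_end2end2} to reduce to bounding $\epsilon$, then use Gaussian singular-value concentration for $\mathbf{G}=U_k^\top R$. Your reduction is correct in every detail---the rotational invariance giving $\mathbf{G}$ i.i.d.\ $\mathcal{N}(0,1/m)$ entries, the identification of $\{I_{k\times m}\mathbf{Q}:\mathbf{Q}\in\mathcal{Q}\}$ with the Stiefel manifold of $k\times m$ row-orthonormal matrices, and the Procrustes step yielding $\epsilon\le\sqrt{\sum_i(1-\sigma_i)^2}$---and the Davidson--Szarek bound is the standard tool for the last step.

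The one place where you correctly flag a difficulty is the probability constant: the straightforward two-sided tail gives $1-2\exp(-\delta^2/2)$ rather than $1-\exp(-\delta^2/2)$. Your proposed fix via Gaussian concentration of the $1$-Lipschitz functional $\mathbf{A}\mapsto\mathrm{dist}_F(\mathbf{A},\sqrt{m}\,\mathrm{St}(k,m))$ is sound in principle, but as you note it requires $\mathbb{E}\sqrt{\sum_i(\sqrt m-s_i)^2}\le k$, which does \emph{not} follow from the crude bound $\mathbb{E}s_i\ge\sqrt m-\sqrt k$ (that only gives $\mathbb{E}\|\mathbf{A}\|_*\ge k\sqrt m-k^{3/2}$, whereas one needs roughly $k\sqrt m-k^2/(2\sqrt m)$, and the former is weaker whenever $m>k/4$). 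So this step is genuinely incomplete. In practice the factor of $2$ in the failure probability is immaterial, and it is plausible that the cited references either carry it or absorb it; for a self-contained argument you should either accept $1-2\exp(-\delta^2/2)$ or locate the sharper nuclear-norm lower bound in the literature rather than leave it as an exercise.
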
 
	This result means that for an ideal projector $P_k$, dimension $m = \O(k^2)$ suffices to guarantee good approximation (since the error becomes independent of $k$ and $n$)! A similar argument also holds when the entries of $R$, instead of being Gaussian, are selected i.i.d. from $\{-\sqrt{3},0,+\sqrt{3}\}$ with probabilities $\{1/6, 2/3, 1/6\}$, respectively~\cite{achlioptas2003database}. This construction has the benefit of being sparser and, moreover, is reminiscent of sampling. It should be noted that in~\cite{tremblay2016compressive}, $m=\O(\log{n})$ was deemed enough because one only wanted that the distance between two lines of $U_k$ was approximated by the distance between the same two lines of $\tilde{U}_k$. There was in fact no end-to-end control on the $k$-means error. 
	
	The discussion so far assumed that $P_k$ is an ideal projector onto $\spanning{U_k}$. However, in practice one does not have access to this projector as we are in fact in the process of \emph{computing} $U_k$. One may choose to approximate the action of $P_k$ by an application of a matrix function $h$ on $L$~\cite{tremblay_accelerated_2016, ramasamy_compressive_2015}. Assuming a point $\lambda_*$ in the interval $[\lambda_k, \lambda_{k+1})$ is known, one may select a polynomial~\cite{shuman2011chebyshev} or rational function~\cite{isufi2017autoregressive,7131465} that approximates the ideal low-pass response, i.e., $h(\lambda) = 1$ if $\lambda \leq \lambda_*$ and $h(\lambda) = 0$, otherwise. The approximated projector $\tilde{P}_k = h(L)$ can be designed to be very close to $P_k$. For instance, in the case of Chebychev polynomials of order $\c$ using the arguments of \cite[Lemma 1]{laurent2000adaptive} it is easy to prove that w.h.p. using $h(L)$ instead of $P_k$ does not add more than $\O(\c^{-\c}\sqrt{n})$ error in~\eqref{eq:kmeanscost2}. Furthermore, the operation $\tilde{P}_k R$ can conveniently be computed in $\O(m\c e)$ time via this polynomial approximation.
	
	The last ingredient needed for this approximation is $\lambda_*$, i.e., a point in the interval $[\lambda_k,\lambda_{k+1})$. Finding efficiently a valid $\lambda_*$ is difficult. An option is to rely on eigencount techniques~\cite{di2016efficient,paratte2016fast,puy_random_2016} to find one in%
	\footnote{\emph{Proof sketch}: Given $\lambda\in(0, \lambda_n]$, denote by $j$ the largest integer such that $\lambda_j\leq \lambda$ and by $P_j$ the orthogonal projector on $U_j$. Let $R \in \Rbb^{n\times m'}$ be a random Gaussian matrix with centered i.i.d. entries, each having variance $\frac{1}{m'}$ and denote by $\hat{j}=\|P_j R\|_F$. Relying on Theorem 4.1 (and the following discussion in Section 4.2) of~\cite{puy_random_2016} with $\sf{E}_\lambda=\mathbf{0}$, one has with prob. at least $1-\epsilon$ that $(1-\delta)j\leq\hat{j}\leq(1+\delta)j$ for all $j=1,\ldots, n$ provided $m'\geq \frac{1}{\delta^2}\log{\frac{n}{\epsilon}}$. Setting $\delta=1/(2k+3)$, gives w.h.p. that $\frac{2k+2}{2k+3}j\leq\hat{j}\leq\frac{2k+4}{2k+3}j$ for all $j=1, \ldots, n$ provided $m'\geq \O(k^2\log{n})$. This implies that w.h.p. for every $j\leq k+1$ it must be that $\text{round}(\hat{j})=j$, whereas when $j> k+1$ we have $\text{round}(\hat{j})>k+1$. Note that $\text{round}(\hat{j})$ is the closest integer to $\hat{j}$. By dichotomy on $\lambda\in(0, \lambda_n]$, one thus finds a $\lambda_*$ in time $\O(\c k^2(\log{n})(e + n \log(\lambda_n/(\lambda_{k+1} - \lambda_k))))$.} 
    $\O(\c k^2(\log{n})(e + n \log(\lambda_n/(\lambda_{k+1} - \lambda_k))))$ time, which features similar complexity as the Lanczos method (see discussion in Section~\ref{subsec:computational_complexity}). 
    Another option is to content oneself with values of $\lambda_*$ known only to be close to the interval $[\lambda_k,\lambda_{k+1})$, but thereby loosing the end-to-end guarantee~\cite{tremblay2016compressive}. 

	\section{Sampling in the spectral feature space}
	\label{sec:kmeans}
	Having computed (or approximated) the spectral embedding $\sf{X}=(x_1, x_2, \ldots, x_n)$, what remains is to solve the $k$-means problem on $\sf{X}$, in order to obtain $k$ centroids together with the associated $k$ classes obtained after Voronoi tessellation.
	
	The usual heuristic used to solve the $k$-means problem, namely the Lloyd-Max algorithm, is already very efficient as it runs in $\O(nk^2t)$ time as seen in Section~\ref{subsec:computational_complexity}. Nonetheless, 
	this section considers ways to accelerate $k$-means even further. In the following, we classify the relevant literature in five categories and point towards representative references for each case. In our effort to provide depth (as well as breadth) of presentation, the rest of the section details only methods that belong to the first and last categories.     
	
	\begin{itemize}
	    \item \textbf{Exact acceleration of Lloyd-Max.} There exists exact accelerated Lloyd-Max algorithms, some of them based on avoiding unnecessary distance calculations using the triangular inequality~\cite{hamerly_accelerating_2015, newling2016fast}, or on optimized data organization~\cite{kanungo_efficient_2002}, and others concentrating on clever initializations~\cite{arthur_k-means++:_2007, ostrovsky_effectiveness_2006}. The latter concern sampling and are discussed in Section~\ref{subsec:clever_initialization}.
	    
	    \item \textbf{Approximate acceleration of Lloyd-Max.} Approximately accelerating the Lloyd-Max algorithm has also received attention for instance via approximate nearest neighbour methods~\cite{philbin_object_2007}, via cluster closure~\cite{wang2015fast}, or via applying Lloyd-Max hierarchically (in the large $k$ context)~\cite{nister_scalable_2006}. An approach involving sampling is introduced in~\cite{sculley_web-scale_2010}: it is based on mini-batches sampled uniformly at random from $X$. We will not discuss further this method as it does not come with guarantees on the cost of the obtained solution.
	
    	\item \textbf{Methods involving sampling in Fourier.} There are a few sampling-based heuristics to solve the $k$-means problem, that are different from the Lloyd-Max algorithm. For instance, the work in~\cite{keriven_compressive_2017} proposes to sample in the frequency domain to obtain a sketch from which one may recover the centroids with an orthogonal matching pursuit algorithm specifically tailored to this kind of compressive learning task~\cite{gribonval_compressive_2017}. These methods are reminiscent of the RFF sketching approach introduced in Section~\ref{subsec:RFF}. We will not discuss them further. 
	
    	\item \textbf{Methods involving sampling features.} Similarly to ideas presented in Section~\ref{subsubsec:feature_selection} but here specific to the $k$-means setting, some works reduce the ambient dimension of the vectors, either by selecting a limited number of features~\cite{boutsidis_unsupervised_2009, altschuler_greedy_2016}, or by embedding all points in a lower dimension using random projections~\cite{boutsidis_randomized_2015, Cohen:2015, makarychev_performance_2018}. The tightest results to day are a $(1+\epsilon)$ multiplicative error on the $k$-means cost $f$ either by randomly selecting $\O(\epsilon^{-2}k\log{k})$ features or by projecting them on a random space of dimension $\O(\epsilon^{-2}\log{(k/\epsilon)})$ (sublinear in $k$!). The sampling result is useless in the spectral clustering setting as the ambient dimension of the spectral features is already $k$. The projection result could in principle be applied in our setting, to reduce the cost of the $k$-means step to $\O(tnk\log{k})$. We will nevertheless not discuss it further in this review.
	
    	\item \textbf{Methods involving sampling points.} Finally, the last group of existing methods are the ones that solve $k$-means on a subset $\sf{S}$ of $\sf{X}$, before lifting back the result on the whole dataset. We classify such methods in two categories. In Section~\ref{subsec:graph_agnostic}, we detail methods that are graph-agnostic, meaning that they apply to \textit{any} $k$-means problem; and in Section~\ref{subsec:graph_based} we discuss methods that explicitly rely on the fact that the features $x$ were in fact obtained from a known graph. We argue that the latter are better suited to the spectral clustering problem.
	\end{itemize}
	
	
	\subsection{Clever initialization of the Lloyd-Max algorithm}
	\label{subsec:clever_initialization}
	
	Recall that the $k$-means objective on $\sf{X}$ is to find the $k$ centroids $\sf{C}=(c_1, \ldots, c_k)$ that minimize the following cost function:
	\begin{align}
	f(\sf{C}; \sf{X}) = \sum_{x\in\sf{X}} ~\min_{c\in\sf{C}}\|x-c\|^2.
	\end{align}
	and that  $\sf{C}^*=\argmin_{\sf{C}} ~f(\sf{C}; \sf{X})$ is the optimal solution attaining cost $f^*=f(\sf{C}^*; \sf{X})$. Recall also that the Lloyd-Max algorithm (see Algorithm 2) converges to a local minimum of $f$, that we will denote by $\sf{C}_\text{lm}$, for which the cost function equals  $f_\text{lm}=f(\sf{C}_\text{lm}; \sf{X})$. It is crucial to note that the initialization of centroids $\sf{C}_\text{ini}$ in the first step of the Lloyd-Max algorithm, which usually is done by randomly selecting $k$ points in $\sf{X}$, is what determines the distance $|f^*-f_\text{lm}|$ to the optimal. As such, significant efforts have been devoted to smartly selecting $\sf{C}_\text{ini}$ by various sampling schemes.  
	
	As usual, we also face here the usual trade-off between sampling \textit{effectively} and \textit{efficiently}. The fastest sampling method is of course uniformly at random, but it does not come with any guarantee on the quality of the local minimum $\sf{C}_\text{lm}$ it leads to. An alternative sampling scheme, called $k$-means++ initialization, is based on the following more general \texttt{$D^2$-sampling} algorithm.
	\mybox{
		\textbf{Algorithm 6: \texttt{$D^2$-sampling}.} \\
		\textbf{Input}: $\sf{X}$, $m$ the number of required samples
		\begin{enumerate}[itemsep=-0.2ex]
			\item Initialize $\sf{B}$ with any $x$ chosen uniformly at random from $\sf{X}$.
			\item Iterate the following steps until $\sf{B}$ contains $m$ elements:
			\begin{enumerate}[topsep=-1cm,itemsep=-0.2ex]
				\item Compute $d_i=\min_{b\in\sf{B}} \|x_i-b\|^2$.
				\item Define the probability of sampling $x_i$ as $d_i/\sum_i d_i$.
				\item Sample $x_\text{new}$ from this probability distribution and add it to $\sf{B}$.
			\end{enumerate}
		\end{enumerate}
		\textbf{Output:} $\sf{B}$ a sample set of size $m$.
	}
	\noindent $k$-means++ initialization boils down to running Alg. 6 with $m=k$ to obtain a set of $k$ initial centroids. Importantly, when the Lloyd-Max heuristic is run with this initialization, the following guarantee holds:
	\begin{theorem}[\cite{arthur_k-means++:_2007}]
		For any set of data points, the cost $f_\text{lm}$ obtained after Lloyd-Max initialized with $k$-means++ is controlled in expectation: $\mathbb{E}(f_\text{lm}) \leq 8(\log k + 2)f^*$ .
	\end{theorem}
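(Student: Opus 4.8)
The plan is to bound not the post-Lloyd--Max cost directly, but the cost of the initial seeding produced by $D^2$-sampling (Algorithm~6 run with $m=k$). Since every iteration of the Lloyd--Max algorithm reassigns each point to its closest centroid and then recomputes centroids as cluster means---two operations that can only decrease $f$---we have the deterministic inequality $f_\text{lm} = f(\sf{C}_\text{lm}; \sf{X}) \leq f(\sf{C}_\text{ini}; \sf{X})$, where $\sf{C}_\text{ini}$ is the random set of $k$ centers returned by the seeding. Hence it suffices to prove the in-expectation bound $\mathbb{E}\,[f(\sf{C}_\text{ini}; \sf{X})] \leq 8(\ln k + 2)\,f^*$ for the seeding alone; the Lloyd--Max refinement is then free.

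Next I would fix an optimal partition $\sf{A}_1, \ldots, \sf{A}_k$ of $\sf{X}$ with centroids $\mu(\sf{A}_j)$, write $\phi^*(\sf{A}) = \sum_{x\in\sf{A}}\|x-\mu(\sf{A})\|^2$ for the optimal cost restricted to a cluster $\sf{A}$ so that $f^* = \sum_j \phi^*(\sf{A}_j)$, and establish two single-cluster lemmas. The first handles the very first center, drawn uniformly: for any set $\sf{A}$ and a center $a_0$ chosen uniformly from $\sf{A}$, the bias--variance identity $\sum_{a\in\sf{A}}\|a-a_0\|^2 = \phi^*(\sf{A}) + |\sf{A}|\,\|a_0-\mu(\sf{A})\|^2$ averages to $\mathbb{E}_{a_0}\big[\sum_{a\in\sf{A}}\|a-a_0\|^2\big] = 2\,\phi^*(\sf{A})$. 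The second handles a center added by $D^2$-sampling: conditioned on the newly drawn center landing in an optimal cluster $\sf{A}$ not yet containing any chosen center, the expected contribution of $\sf{A}$ to the updated cost is at most $8\,\phi^*(\sf{A})$. I would prove this by expanding the $D^2$ sampling probabilities restricted to $\sf{A}$, using the triangle inequality to compare each point's distance to the sampled center against its distance to a fixed target, and controlling the cross terms with a power-mean (Cauchy--Schwarz) step.

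The technical core is an inductive accounting argument separating \emph{covered} optimal clusters (those already containing a chosen center) from \emph{uncovered} ones. I would formulate an inductive claim of the shape: if at some stage the chosen centers cover a subfamily $\mathcal{H}$ of optimal clusters while $u$ clusters remain uncovered, and we are to add $t \geq u$ further centers by $D^2$-sampling, then the expected final cost is at most $(\phi_{\mathcal{H}} + 8\,\phi^*_{\mathcal{U}})(1+H_t) + \tfrac{u-t}{u}\,\phi_{\mathcal{U}}$, where $\phi_{\mathcal{H}},\phi_{\mathcal{U}}$ are the current costs on covered and uncovered clusters, $\phi^*_{\mathcal{U}}$ the optimal cost on the uncovered ones, and $H_t = \sum_{i=1}^t 1/i$ is the $t$-th harmonic number. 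The induction runs on $t$ by conditioning on whether the next $D^2$-sampled center falls into a covered or an uncovered cluster, invoking the second single-cluster lemma in the latter case. Unwinding the recursion from the initial state---one uniformly random center placed (handled by the first lemma), so $t=k-1$ centers remain---and bounding $H_{k-1} \leq 1 + \ln k$, the constant bookkeeping yields $\mathbb{E}\,[f(\sf{C}_\text{ini}; \sf{X})] \leq 8(\ln k + 2)\,f^*$, as claimed.

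The main obstacle is the inductive covered/uncovered step, and within it the passage that generates the harmonic factor $H_t$: the probability that the next sample lands in a given uncovered cluster is itself proportional to that cluster's \emph{current} contribution to the potential, so the probabilities and the potential are coupled, and one must bound a weighted sum of per-cluster optimal costs by the total uncovered potential through a convexity/power-mean inequality. Verifying that this step produces exactly the factor $1+H_t$ (rather than something weaker) is the delicate part; the two single-cluster lemmas and the reduction to the seeding cost are comparatively routine.
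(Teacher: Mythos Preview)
The paper does not supply a proof of this theorem: it is quoted verbatim as a result of Arthur and Vassilvitskii, with only the citation given. So there is no in-paper argument to compare against; your proposal is effectively a reconstruction of the original proof from that reference, and as such it is on target. The reduction to the seeding cost via the monotonicity of Lloyd--Max, the two single-cluster lemmas (factor $2$ for the uniform first pick, factor $8$ for a $D^2$-sample landing in a fresh optimal cluster), and the covered/uncovered induction producing the harmonic factor are exactly the ingredients of the Arthur--Vassilvitskii argument, and your identification of the coupling between sampling probabilities and potential as the delicate point is accurate.

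One slip to fix: in your inductive claim you write ``we are to add $t \geq u$ further centers''. The induction in the original runs over $t \leq u$ (one cannot hope to cover more uncovered clusters than there are remaining draws, and with $t>u$ the residual term $\tfrac{u-t}{u}\,\phi_{\mathcal{U}}$ would become negative, which is not what the argument establishes). The instantiation you actually use at the end, $t=u=k-1$, is consistent with $t\leq u$, so this is a typo in the statement of the hypothesis rather than a flaw in the strategy.
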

	In terms of computation cost, \texttt{$D^2$-sampling} with $m=k$ runs in $\O(nkd)$, that is, $\O(nk^2)$ in our setting of a spectral embedding $\sf{X}$ in dimension $k$. This work inspired other initialization techniques that come with similar guarantees and are in some cases faster~\cite{bahmani_scalable_2012, bachem_fast_2016}. The interested reader is referred to the review~\cite{celebi_comparative_2013} for further analyses on the initialization of $k$-means.

	\subsection{Graph agnostic sampling methods: coresets}
	\label{subsec:graph_agnostic} 
    The rest of Section~\ref{sec:kmeans}, considers sampling methods that fall in the following framework: (i)~sample a subset $\sf{S}$ of $\sf{X}$, (ii)~solve $k$-means on $\sf{S}$, (iii)~lift the result back on the whole dataset $\sf{X}$.  Section~\ref{subsec:graph_agnostic}, focuses on coresets: general sampling methods designed for any arbitrary $k$-means problem; whereas in Section~\ref{subsec:graph_based}, we will take into account the specific nature of the spectral features encountered in spectral clustering algorithms. 
	
	\subsubsection{Definition}
	Let $\sf{S}\subset\sf{X}$ be a subset of $\sf{X}$ of size $m$. To each element $s\in\sf{S}$ associate a weight $\omega(s)\in\mathbb{R}^+$. Define the estimated $k$-means cost associated to the weighted set $\sf{S}$ as:
	\begin{align}
	\label{eq:est_cost}
	\tilde{f}(\sf{C}; \sf{S}) = \sum_{s\in\sf{S}} \omega(s)\min_{c\in\sf{C}}\|s-c\|^2.
	\end{align}
	\begin{definition}[Coreset]
		Let $\epsilon\in (0,\frac{1}{2})$. The weighted subset $\sf{S}$ is a $\epsilon$-coreset for $f$ on $\sf{X}$ if, \emph{for every set $\sf{C}$}, the estimated cost is equal to the exact cost up to a relative error:
		\begin{align} \label{eq:coresets}
		\forall \sf{C}\qquad\left|\frac{\tilde{f}(\sf{C};\sf{S})}{f(\sf{C}; \sf{X})}-1\right|\leq\epsilon. 
		\end{align}
	\end{definition}
	This is the so-called ``strong'' coreset definition\footnote{A weaker version of this definition exists in the literature where the $\epsilon$-approximation is only required for $\sf{C}^*$.}, as the $\epsilon$-approximation is required for all $\sf{C}$. 
	The great interest of finding a coreset $\sf{S}$ comes from the following fact.
	Writing  $\tilde{\sf{C}}^*$ the set minimizing $\tilde{f}$, the following inequalities hold
	\begin{align*}
	(1-\epsilon) f(\sf{C}^*; \sf{X}) \leq  (1-\epsilon) f(\tilde{\sf{C}}^*; \sf{X})  \leq \tilde{f}(\tilde{\sf{C}}^*; \sf{S})\leq \tilde{f}(\sf{C}^*; \sf{S})\leq (1+\epsilon) f(\sf{C}^*; \sf{X}).
	\end{align*}
	The first inequality comes from the fact that $\sf{C}^*$ is optimal for $f$, the second and last inequality are justified by the coreset property of $\sf{S}$, and the third inequality comes from the optimality of $\tilde{\sf{C}}^*$ for $\tilde{f}$. This has two consequences:
	\begin{enumerate}
		\item First of all, since $\epsilon<\frac{1}{2}$:
		$$f(\sf{C}^*; \sf{X})\leq f(\tilde{\sf{C}}^*; \sf{X})\leq  (1+4\epsilon)f(\sf{C}^*; \sf{X}),$$
		meaning that $\tilde{\sf{C}}^*$ is a well controlled approximation of $\sf{C}^*$ with a multiplicative error on the cost.
		\item Estimating $\tilde{\sf{C}}^*$ can be done using the Lloyd-Max algorithm on the weighted subset\footnote{Generalizing Algorithm 2 to a weighted set is straightforward: in step 2b, instead of computing the center of each component, compute the weighted barycenter of each component.} $\sf{S}$, thus reducing the computation time from $\mathcal{O}(nk^2)$ to $\mathcal{O}(mk^2)$. 
	\end{enumerate}
	Coreset methods for $k$-means thus follow the general procedure:
	\mybox{
		\textbf{Algorithm 7. Coresets to avoid $k$-means on $\sf{X}$. }\\
		\noindent\textbf{Input:} $\sf{X}$, sampling set size $m$, and number of clusters $k \leq m $.
		\begin{enumerate}[itemsep=-0.2ex]
			\item Compute a weighted coreset $\sf{S}$ of size $m$ using a  \texttt{coreset-sampling} algorithm. 
			\item Run the Lloyd-Max algorithm on the weighted set $\sf{S}$ to obtain the set of $k$ centroids $\tilde{\sf{C}}$.
			\item ``Closest-centroid lifting": classify the whole dataset $\sf{X}$ based on the Voronoi cells of $\tilde{\sf{C}}$.
		\end{enumerate}
		\noindent\textbf{Output:} A set of $k$ centroids $\sf{C} = (c_1 , \ldots, c_k)$.
	}
	
	Coreset methods compete with one another on essentially two levels: the coreset size $m$ should be as small as possible in order to decrease the time of Lloyd-Max on $\sf{S}$, and the coreset itself should be sampled efficiently (at least faster than running $k$-means on the whole dataset!), which turns out in fact to be a strong requirement. The reader interested in an overview of coreset construction techniques is referred to the recent review~\cite{munteanu_coresets-methods_2017}. 
	
	\subsubsection{An instance of coreset-sampling algorithm}
	
	We focus on a particular coreset algorithm proposed in~\cite{bachem_practical_2017} that builds upon results developed in~\cite{langberg_universal_2010, feldman_unified_2011}: it is not state-of-the-art in terms of coreset size, but has the advantage of being easy to implement and fast enough to compute. It reads:
	\mybox{
		\textbf{Algorithm 8: a coreset sampling algorithm~\cite{bachem_practical_2017}}.\\
		\textbf{Input}: $\sf{X}$, $m$ the number of required samples, $t$ an iteration number
		\begin{enumerate}[itemsep=-0.5ex]
			\item Repeat $t$ times: draw a set of size $k$ using \texttt{$D^2$-sampling}. Out of the $t$ sets obtained, keep the set $\sf{B}$ that minimizes $f(\sf{B}; \sf{X})$. 
			\item $\alpha\leftarrow 16(\log{k}+2)$
			\item For each $b_\ell\in\sf{B}$, define $\sf{B}_\ell$ the set of points in $\sf{X}$ in the Voronoi cell of $b_\ell$
			\item Set $\phi = \frac{1}{n} f(\sf{B};\sf{X})$.
			\item For each $b_\ell\in\sf{B}$ and each $x\in\sf{B}_\ell$, define
			$$s(x) = \frac{\alpha}{\phi} \|x-b_\ell\|^2+\frac{2\alpha}{\phi|\sf{B}_\ell|}\sum_{x'\in\sf{B}_\ell} \|x'-b_\ell\|^2 + \frac{4n}{|\sf{B}_\ell|}$$
			\item Define the probability of sampling $x_i$ as $p_i=s(x_i)/\sum_x s(x)$
			\item $\sf{S}\leftarrow$ sample $m$ nodes i.i.d. with replacement from $p$ and associate to each sample $s$ the weight $\omega_s=\frac{1}{mp_s}$. 
		\end{enumerate}
		\textbf{Output:} A weighted set $\sf{S}$ of size $m$.
	}
	\noindent Theorem 2.5 of~\cite{bachem_practical_2017} states:
	\begin{theorem}
		Let $\epsilon\in(0, 1/4)$ and $\delta\in(0,1)$. Let $\sf{S}$ be the output of Alg.~8 with $t=\O(\log{1/\delta})$. Then, with probability at least $1-\delta$, $\sf{S}$ is a $\epsilon$-coreset provided that:
		\begin{align}
		\label{eq:min_m_coreset}
		m=\Omega\left(\frac{k^4\log{k}+k^2\log{1/\delta}}{\epsilon^2}\right).
		\end{align}
	\end{theorem}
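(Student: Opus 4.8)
The plan is to recognize Algorithm~8 as an instance of the \emph{importance-sampling} (sensitivity) framework for coresets developed in~\cite{langberg_universal_2010, feldman_unified_2011}, and to verify its three hypotheses for the $k$-means cost. The generic guarantee of that framework states that sampling $m$ points i.i.d.\ with probabilities $p_x \propto s(x)$ and assigning weights $\omega_x = 1/(m p_x)$ produces, with probability at least $1-\delta$, an $\epsilon$-coreset in the sense of Eq.~\eqref{eq:coresets}, as soon as $m$ exceeds a threshold governed by (i) the \emph{total sensitivity} $\bar{S} = \tfrac{1}{n}\sum_{x} s(x)$, where $s(x)$ must upper-bound the per-point sensitivity $\sigma(x) = \sup_{\sf{C}} \min_{c\in\sf{C}}\|x-c\|^2 / f(\sf{C};\sf{X})$, and (ii) the pseudo-dimension $\dim$ of the family of functions $x \mapsto \min_{c\in\sf{C}}\|x-c\|^2$ indexed by the query set $\sf{C}$. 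First I would observe that, by the choice of weights $\omega_x = 1/(m p_x)$, the estimator $\tilde f(\sf{C};\sf{S})$ of Eq.~\eqref{eq:est_cost} is unbiased for every \emph{fixed} $\sf{C}$; the real content of the theorem is upgrading this pointwise control to the uniform (``strong coreset'') statement that holds simultaneously for all $\sf{C}$.

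The first verification concerns the reference set $\sf{B}$ built in step~1. Running \texttt{$D^2$-sampling} with $m=k$ is precisely $k$-means++ seeding, so by the guarantee of~\cite{arthur_k-means++:_2007} one has $\mathbb{E}[f(\sf{B};\sf{X})] \leq 8(\log k + 2)\,f^*$. I would then apply Markov's inequality to deduce that a single draw achieves $f(\sf{B};\sf{X}) \leq 16(\log k + 2)\,f^*$ with probability at least $\tfrac{1}{2}$, and amplify this to probability $1-\delta/2$ by keeping the best of $t = \O(\log 1/\delta)$ independent repetitions (this is exactly why the iteration count $t$ appears). Consequently $\phi = f(\sf{B};\sf{X})/n$ is, with high probability, an $\O(\log k)$-factor proxy for the optimal per-point cost, which is the quantity the scores in step~5 are calibrated against.

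Next I would show that $s(x)$ dominates $n\,\sigma(x)$, which is the technical heart. Following the Feldman--Langberg decomposition, for any query $\sf{C}$ and any $x$ lying in the Voronoi cell $\sf{B}_\ell$ of $b_\ell$, the relaxed triangle inequality gives $\|x-c\|^2 \leq 2\|x-b_\ell\|^2 + 2\|b_\ell-c\|^2$, and $\|b_\ell-c\|^2$ is in turn bounded by the average cost of the points of $\sf{B}_\ell$ plus a term controlled by $f(\sf{C};\sf{X})$ itself. Dividing through by $f(\sf{C};\sf{X})$ and using the bi-criteria bound on $\sf{B}$ to lower-bound the denominator produces exactly the three contributions in $s(x)$: a local term $\propto \|x-b_\ell\|^2/\phi$, a cell-average term, and the cardinality term $4n/|\sf{B}_\ell|$. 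Summing over $x$ then telescopes cleanly: $\sum_x \|x-b_{\ell(x)}\|^2 = n\phi$ cancels the $1/\phi$ factors, while $\sum_\ell \sum_{x\in\sf{B}_\ell} 1/|\sf{B}_\ell| = k$, so that $\bar{S} = \tfrac{1}{n}\sum_x s(x) = \O(\alpha + k) = \O(k)$, the $\log k$ being absorbed into $\alpha = 16(\log k + 2)$.

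The main obstacle, and the step on which I would spend most effort, is the uniform convergence over the continuum of query sets $\sf{C}$: turning the per-$\sf{C}$ unbiasedness into the strong guarantee of Eq.~\eqref{eq:coresets}. Here I would invoke the generalization bound of~\cite{feldman_unified_2011} for the range space induced by $k$-means cost functions, whose crucial feature exploited in~\cite{bachem_practical_2017} is that one can obtain a \emph{dimension-free} complexity estimate (the pseudo-dimension $\dim$ enters only polynomially through $k$, not through the ambient dimension), at the cost of a cruder—hence non-size-optimal—exponent in $k$. Plugging $\bar{S} = \O(k)$ together with this complexity estimate into the generic sample-size requirement, and tracking the $\log k$ and $\log(1/\delta)$ contributions separately, yields the stated bound $m = \Omega\big((k^4\log k + k^2\log(1/\delta))/\epsilon^2\big)$ of Eq.~\eqref{eq:min_m_coreset}. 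A concluding union bound, allotting failure probability $\delta/2$ to the bi-criteria step and $\delta/2$ to the concentration step, gives the overall success probability $1-\delta$.
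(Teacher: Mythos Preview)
The paper does not actually prove this theorem: it is quoted verbatim as ``Theorem 2.5 of~\cite{bachem_practical_2017}'' and only accompanied by the remark that ``the coreset sampling strategy underlying this algorithm relies on the concept of sensitivity~\cite{langberg_universal_2010}.'' There is therefore no in-paper proof to compare against; your proposal is, in effect, a reconstruction of the argument of~\cite{bachem_practical_2017}, and it follows the correct route---bi-criteria seeding via repeated $k$-means++ and Markov amplification, the Feldman--Langberg sensitivity upper bound yielding the three-term score $s(x)$, a bound on the total sensitivity, and a uniform-convergence step over all centroid sets via a pseudo-dimension argument. This is precisely the mechanism the paper alludes to, so your approach is consistent with what little the paper says and with the cited source.

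One small point worth tightening: your total-sensitivity computation reads $\bar{S}=\O(\alpha+k)=\O(k)$, but the first two terms of $s(x)$ each carry a factor $\alpha=16(\log k+2)$, so after summation the total sensitivity is $\O(\alpha k)=\O(k\log k)$ rather than $\O(k)$; this is exactly the $\log k$ that survives into the final bound $m=\Omega(k^4\log k/\epsilon^2)$, so the discrepancy is only in the bookkeeping, not in the conclusion.
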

	The computation cost of running this coreset sampling algorithm, running Lloyd-Max on the weighted coreset, and lifting the result back to $\sf{X}$ is dominated, when\footnote{To be precise, the statement holds if $n\geq\O\left(\frac{k^4}{\epsilon^2}\frac{\log{k}}{\log{1/\delta}}\right)$} $n\gg k$, by step 1 of Alg.~6 and thus sums up to $\O(n k^2\log{1/\delta})$. 
	
	\begin{remark}
		\emph{The coreset sampling strategy underlying this algorithm relies on the concept of sensitivity~\cite{langberg_universal_2010}. Many other constructions of coresets for $k$-means are possible~\cite{munteanu_coresets-methods_2017} with better theoretical bounds then Eq.~\eqref{eq:min_m_coreset}. Nevertheless, as the coreset line of research has been essentially theoretical, practical implementations of coreset-sampling algorithms are scarce. A notable exception is for instance the work in~\cite{frahling_fast_2008} that proposes a scalable hybrid coreset-inspired algorithm for $k$-means. Other exceptions are the sampling algorithms based on the farthest-first procedure, a variant of \texttt{$D^2$-sampling} that chooses each new sample to be $\argmax_i d_i$ instead of drawing it according to a probability proportional to $d_i$. Once $\sf{S}$ of size $m$ is drawn, then $\forall s\in\sf{S}$, each weight $\omega_s$ is set to be the cardinal of the Voronoi cell associated to $s$. Authors in~\cite{ros_protras:_2018} show that such weighted sets computed by different variants of the farthest-first algorithm are  $\epsilon$-coresets, but for values of $\epsilon$ that can be very large. For a fixed $\epsilon$, the number of samples necessary to have a $\epsilon$-coreset with this type of algorithm is unknown.}
	\end{remark}

	%
	

	\subsection{Graph-based sampling methods}
	\label{subsec:graph_based}
	
	The methods discussed so far in this section are graph agnostic both for the sampling procedure \emph{and} the lifting: 
	they do not take into account that, in spectral clustering, $\sf{X}$ are in fact spectral features of a known graph. 
	
	A recent line of work~\cite{tremblay2016compressive, martin2018fast, gadde_active_2014, gadde_active_2016-1} based on Graph Signal Processing (GSP)~\cite{shuman_emerging_2013-1,sandryhaila_big_2014} leverages this additional knowledge for accelerating both the sampling and the lifting steps. 
	For the purpose of the following discussion, define by $z_\ell\in\mathbb{R}^n$ the ground truth indicator vector of cluster $\ell$, i.e., $z_\ell(i)=1$ if node $i$ is in cluster $\ell$, and $0$ otherwise. The goal of spectral clustering is, of course, to recover $\{z_\ell\}_{\ell=1,\ldots,k}$. 
	
	Broadly, GSP-based methods can be summarized in the following general methodology~\cite{tremblay2016compressive}:
	
	
	\mybox{
		\textbf{Algorithm 9. Graph-based sampling strategies to avoid $k$-means on $\sf{X}$. }\\
		\textbf{Input}: $\sf{X}$, $m$ the number of required samples, $k$ the number of desired clusters
		\begin{enumerate}[itemsep=-0.2ex]
			\item Choose the random sampling strategy. Either:
			\begin{enumerate}[topsep=-1cm,itemsep=-0.2ex]
				\item \textbf{uniform (i.i.d.)} Draw $m$ i.i.d. samples uniformly. 
				\item \textbf{leverage score (i.i.d.)} Compute $\forall x_i, \pp_i^*=\|U_k^\top\delta_i\|^2 / k$. Draw $m$ i.i.d. samples from $p^*$. (optional:) set the weight of each sample $s$ to $1/\pp_s^*$.  
				\item \textbf{DPP} Sample a few times independently from a DPP with kernel $K_k=U_k U_k^\top$. (optional:) set the weight of each sample $s$ to $1/\pi_s$.  
			\end{enumerate}
			\item Run the Lloyd-Max algorithm on the (possibly weighted) set $\sf{S}$ to obtain  the $k$ reduced cluster indicator vectors $z^r_\ell\in\mathbb{R}^m$.
			\item Lift each reduced indicator vector $\{z^r_\ell\}_{\ell=1,\ldots, k}$ to the full graph either with
			\begin{enumerate}[topsep=0pt,itemsep=-0.2ex]
				\item \textbf{Least-square} Solve Eq.~\eqref{eq:unbiased_solution} with $y\leftarrow z^r_\ell$.
				\item \textbf{Tikhonov} Solve Eq.~\eqref{eq:SSL_solution} with $y\leftarrow z^r_\ell$.
			\end{enumerate}
			In both cases, $P_\sf{S}$ should be set to $\frac{1}{N}I_m$ if uniform sampling was chosen, to $\text{diag}(\pp^*_{s_1}, \ldots \pp^*_{s_m})$ if leverage score sampling was chosen, and to $\text{diag}(\pi_{s_1}, \ldots \pi_{s_m})$ if DPP sampling was chosen. 
			\item Assign each node $j$ to the cluster $\ell$ for which $\hat{z}_\ell(j)/\|\hat{z}_\ell\|_2$ is maximal.
		\end{enumerate}
		\textbf{Output:} A partition of $\sf{X}$ in $k$ clusters 
	}
	
	To aid understanding, let us start by a high-level description of Algorithm 9. 
	The indicator vectors $z_\ell$ are interpreted as graph signals that are (approximately) bandlimited on the similarity graph $G$ (see Section~\ref{subsec:GSP} for a precise definition). As such, there is no need to measure these indicator vectors everywhere: one can take advantage of generalized Shannon-type sampling theorems to select the set $\sf{S}$ of $m$ nodes to measure (step 1). Then $k$-means is performed on $\sf{S}$ to obtain the indicator vectors $z_\ell^r\in\mathbb{R}^m$ on the sample set $\sf{S}$ (step 2). These reduced indicator vectors are interpreted  as noisy measurements of the global cluster indicator vectors $z_\ell$ on $\sf{S}$. The solutions $z_\ell^r$ are lifted back to $\sf{X}$ as $\hat{z}_\ell$ via solving an inverse problem taking into account the bandlimitedness assumption or via label-propagation on the graph structure reminiscent of semi-supervised learning techniques (step 3). As the lifted solutions $\hat{z}_\ell$ do not have a binary structure as true indicator vectors should have, an additional assignment step is necessary: assign each node $j$ to the class $\ell$ for which $\frac{\hat{z}_\ell(j)}{\|\hat{z}_\ell\|_2}$ is maximal (step 4). 
	
	The rest of this section is devoted to the discussion of the three sampling schemes as well as the two lifting procedures considered in this framework. To this end, we will first introduce a few graph signal processing (GSP) concepts in Section~\ref{subsec:GSP} before discussing in Section~\ref{subsec:graph_sampling} several examples of graph sampling theorems appropriate to the spectral clustering context.
	
	\subsubsection{A brief introduction to graph signal processing (GSP)}
	\label{subsec:GSP}
	Denote by $U=(u_1|\ldots|u_n)\in\mathbb{R}^{n\times n}$ the matrix of orthonormal eigenvectors of the Laplacian matrix $L$, with the columns ordered according to their associated sorted eigenvalues: $0=\lambda_1\leq\lambda_2\leq\ldots\leq\lambda_n$.  
	In the GSP literature~\cite{shuman_emerging_2013-1,sandryhaila_big_2014}, these eigenvectors are interpreted as graph Fourier modes for two main reasons:
	\begin{itemize}
		\item By analogy to the ring graph, whose Laplacian matrix is exactly the (symmetric) double derivative discrete operator, and is thus diagonal in the basis formed by the classical 1D discrete Fourier modes. 
		\item A variational argument stemming from the Dirichlet form can be exploited to express eigenvectors $u_i$ of $L$ as the basis of minimal variation $x^\top L x = \frac{1}{2}\sum_{ij} W_{ij}\left[x(i)-x(j)\right]^2$ on $G$ and eigenvalues $\lambda_i$ as a sum of local variations of $u_i$, i.e., a generalized graph frequency.
	\end{itemize}
	A \emph{graph signal} $z\in\mathbb{R}^n$ is a signal that is defined on the nodes of a graph: its $i$-th component is associated to node $i$. Given the previous discussion, the graph Fourier transform of $z$, denoted by $\tilde{z}$, is its projection on the graph Fourier modes: $\tilde{z}=U^\top z\in\mathbb{R}^n$. The notion of graph filtering naturally follows as a multiplication in the Fourier space. More precisely, define a real-valued filter function $h(\lambda)$ defined on $[0, \lambda_n]$. The signal $x$ filtered by $h$ reads $Uh(\Lambda)U^\top x$, where we use the convention $h(\Lambda)=\text{diag}(h(\lambda_1), h(\lambda_2), \ldots, h(\lambda_n))$. In the following, we will use the following notation for graph filter operators:
	\begin{align}
	\label{eq:filtering_shorthand}
	h(L) = Uh(\Lambda)U^\top.
	\end{align}
	For more details on the graph Fourier transform and filtering, their various definitions and interpretations, we refer the reader to~\cite{TREMBLAY2018299}. 
	
	Of interest for the discussion in this paper, one may define bandlimited graph signals as linear combinations of the first few low-frequency Fourier modes. Writing  $U_k=(u_1|\ldots|u_k)\in\mathbb{R}^{n\times k}$, we have the formal definition:
	\begin{definition}[$k$-bandlimited graph signal] A graph signal $z \in \mathbb{R}^{n}$ is $k$-bandlimited if $z \in \spanning{U_k}$, i.e.,  $\exists~\mathbf{\alpha}\in\mathbb{R}^k$ such that $z = U_k\mathbf{\alpha}$.
	\end{definition}
	To grasp why the notion of $k$-bandlimitedness lends itself natually to the approximation of spectral clustering, consider momentarily a graph with $k$ disconnected components and $z_\ell\in\mathbb{R}^n$ the indicator vector of component $\ell$. It is a well known property of the (combinatorial) Laplacian that $\{z_\ell\}_{\ell=1,\ldots,k}$ form a set of orthogonal eigenvectors of $L$ associated to eigenvalue $0$: that is, the set of indicator vectors $\{z_\ell\}_{\ell=1,\ldots,k}$ form a basis of $\spanning{U_k}$. Understanding arbitrary graphs with block structure as a perturbation of the ideal disconnected component case, the indicator vectors $\{z_\ell\}_{\ell=1,\ldots,k}$ of the blocks should live close to $\spanning{U_k}$ (in the sense that the difference between any $z_\ell$ and its orthogonal projection onto $\spanning{U_k}$ is small). This in turn implies that every $z_\ell$ should be approximately $k$-bandlimited. 
	
	As we will see next, the bandlimitedness assumption is very useful because it enables us to make use of generalized versions of Nyquist-Shannon sampling theorems, taking into account the graph. 
	
	\subsubsection{Graph sampling theorems}
	\label{subsec:graph_sampling}

	The periodic sampling paradigm of the Shannon theorem for classical bandlimited signals does not apply to graphs without specific regular structure. In fact, a number of sampling schemes have been recently developed with the purpose of generalizing sampling theorems to graph signals~\cite{sakiyama_eigendecomposition-free_2018, chen_discrete_2015, tsitsvero_signals_2016, puy_random_2016} (see~\cite{LORENZO2018261} for a review of existing schemes). 
	
	Let us introduce some notations. 
	Sampling entails selecting a set $\sf{S}=(s_1,\ldots,s_m)$ of $m$ nodes of the graph. To each possible sampling set, we associate a measurement matrix $M=(\mathbf{\delta}_{s_1}|\mathbf{\delta}_{s_2}|\ldots|\mathbf{\delta}_{s_m})^{\top} \in\mathbb{R}^{m\times n}$ where $\mathbf{\delta}_{s_i}(j)=1$ if $j=s_i$, and 0 otherwise. Now, consider a 
	$k$-bandlimited signal $z\in\spanning{U_k}$. The measurement of $z$ on $\sf{S}$ reads:
	\begin{align}
	\label{eq:system_rec}
	    y = M z + \mathbf{n} \in\mathbb{R}^m,
	\end{align}
	where $\mathbf{n}$ models measurement noise. 
	The sampling question boils down to: how should we sample $\sf{S}$ such that one can recover any bandlimited $z$ given its  measurement $y$? There are three important components to this question: (i)~how many samples $m$ do we allow ourselves ($m=k$ being the strict theoretical minimum)? (ii)~how much does it cost to sample? (iii)~how do we in practice recover $z$ from $y$ and how much does that inversion cost?
	
	There are a series of works that propose greedy algorithms to find the ``best" set $\sf{S}$ of minimal size $m=k$ that embed all $k$-bandlimited signals (see for instance~\cite{tremblay_graph_2017} and references therein). These algorithms cost $\mathcal{O}(nk^4)$ and are thus not competitive in our setting\footnote{It takes longer to find a good sample than to run $k$-means on the whole dataset!}. Moreover, in our case, we don't really need to be that strict on the number of samples and can allow more than $k$ samples. A better choice is to use random graph sampling techniques. In the following we consider two types of independent sampling (uniform and leverage-score sampling) as well as a more involved method based on determinantal point processes. \\

	\noindent\textbf{Independent sampling.} In the i.i.d. setting, one defines a discrete probability distribution $p\in\mathbb{R}^n$ over the node set $\sf{V}$. The sampling set $\sf{S}$ is then generated by drawing $m$ nodes independently with replacement from $p$. At each draw, the probability to sample node $i$ is denoted by $\pp_i$. We have $\sum_i \pp_i=1$ and write $P=\text{diag}(p)$. Under this sampling scheme, the following Restricted Isometry Property holds for the associated measurement matrix $M$~\cite{puy_random_2016}. 
	
	\begin{theorem}
		For any $\delta, \epsilon \in (0,1)$, with probability at least $1-\delta$:
		\begin{equation}
		(1-\epsilon)\|z\|^2_2 \leq \frac{1}{m}\|MP^{-1/2}z\|^2_2\leq(1 +\epsilon)\|z\|^2_2
		\end{equation}
		for all $z\in\spanning{U_k}$ provided that 
		\begin{equation}
		\label{eq:m_min}
		m\geq \frac{3}{\epsilon^2}(\nu^k_p)^2\log{\frac{2k}{\delta}}
		\end{equation}
		where $\nu^k_p$ is the so-called graph weighted coherence:
		\begin{equation}
		\nu^k_p =  \max_i \left\{\pp_i^{-1/2}\|U_k^\top\delta_i\|_2\right\}.
		\end{equation}
	\end{theorem}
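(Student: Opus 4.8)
The plan is to recognize that, although the restricted isometry property must hold for the entire infinite family of $k$-bandlimited signals, it is in fact equivalent to a single spectral statement about a $k \times k$ random matrix, which can then be controlled by a matrix concentration inequality rather than by a covering/net argument. First I would parametrize: since every $z \in \spanning{U_k}$ can be written $z = U_k \alpha$ with $\alpha \in \mathbb{R}^k$ and $\|z\|_2 = \|\alpha\|_2$ (because $U_k$ has orthonormal columns), the claimed two-sided bound reads
$$(1-\epsilon)\|\alpha\|_2^2 \leq \tfrac1m \, \alpha^\top U_k^\top P^{-1/2} M^\top M P^{-1/2} U_k \, \alpha \leq (1+\epsilon)\|\alpha\|_2^2 \quad \text{for all } \alpha,$$
which is precisely the assertion that every eigenvalue of the $k\times k$ matrix $\Theta := \tfrac1m U_k^\top P^{-1/2} M^\top M P^{-1/2} U_k$ lies in $[1-\epsilon, 1+\epsilon]$, i.e. $\|\Theta - I_k\|_2 \leq \epsilon$.

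Next I would expose the independence structure. Writing $M^\top M = \sum_{j=1}^m \delta_{s_j}\delta_{s_j}^\top$, we obtain $\Theta = \tfrac1m \sum_{j=1}^m X_j$ with $X_j := \pp_{s_j}^{-1} (U_k^\top \delta_{s_j})(U_k^\top \delta_{s_j})^\top$, a sum of $m$ i.i.d. rank-one positive semidefinite matrices. The two crucial facts are: (i) the mean is the identity, since $\mathbb{E}[X_j] = \sum_i \pp_i \, \pp_i^{-1} (U_k^\top\delta_i)(U_k^\top\delta_i)^\top = U_k^\top \big(\sum_i \delta_i\delta_i^\top\big) U_k = U_k^\top U_k = I_k$, so that $\mathbb{E}[\Theta] = I_k$; and (ii) each summand is uniformly bounded in operator norm, $\|X_j\|_2 = \pp_{s_j}^{-1}\|U_k^\top\delta_{s_j}\|_2^2 \leq (\nu^k_p)^2$, directly by the definition of the graph weighted coherence.

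With these two ingredients the result follows from a matrix Chernoff bound (e.g. Tropp). Applied to the independent positive semidefinite summands $\{X_j\}$, whose expectations sum to $m I_k$ (hence $\mu_{\min} = \mu_{\max} = m$) and which satisfy $\lambda_{\max}(X_j) \leq L := (\nu^k_p)^2$ almost surely, it yields, for $\epsilon \in (0,1)$,
$$\mathbb{P}\left\{\|\Theta - I_k\|_2 > \epsilon\right\} \leq 2k \exp\left(-\frac{\epsilon^2 m}{3 (\nu^k_p)^2}\right),$$
after invoking the standard scalar estimates $e^{\epsilon}/(1+\epsilon)^{1+\epsilon} \leq e^{-\epsilon^2/3}$ and $e^{-\epsilon}/(1-\epsilon)^{1-\epsilon} \leq e^{-\epsilon^2/3}$ on the upper and lower eigenvalue tails respectively. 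Finally I would require the right-hand side to be at most $\delta$; solving $2k\exp\!\big(-\epsilon^2 m/(3(\nu^k_p)^2)\big) \leq \delta$ for $m$ gives exactly $m \geq \tfrac{3}{\epsilon^2}(\nu^k_p)^2 \log\tfrac{2k}{\delta}$, the stated sample complexity.

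I expect the main obstacle to be invoking the matrix Chernoff inequality with the correct constant and tail form: the reduction to $\|\Theta - I_k\|_2 \leq \epsilon$ is clean, but one must verify that the uniform norm bound (ii) holds almost surely, handle both the upper and lower eigenvalue tails (the union over the two produces the factor $2k$), and check that the scalar estimate giving the constant $3$ is valid on the whole range $\epsilon \in (0,1)$ rather than only asymptotically as $\epsilon \to 0$.
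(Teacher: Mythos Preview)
Your argument is correct and is exactly the standard route to this result: reduce the RIP over $\spanning{U_k}$ to the spectral-norm deviation $\|\Theta-I_k\|_2\le\epsilon$ of the $k\times k$ empirical matrix $\Theta=\tfrac1m\sum_j X_j$, verify $\mathbb{E}[X_j]=I_k$ and $\|X_j\|_2\le(\nu_p^k)^2$, and invoke Tropp's matrix Chernoff bound. The paper does not supply its own proof of this theorem; it simply quotes the result from Puy et al.\ (reference~\cite{puy_random_2016}), whose proof follows precisely the matrix-Chernoff strategy you outline, so there is nothing to compare against beyond confirming that your derivation matches the cited source.

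One minor remark on the constant: the lower-tail estimate $e^{-\epsilon}/(1-\epsilon)^{1-\epsilon}\le e^{-\epsilon^2/2}$ is actually sharper than $e^{-\epsilon^2/3}$, so when you take the union bound over both tails the constant $3$ is dictated by the upper tail alone; your statement that both tails use $e^{-\epsilon^2/3}$ is harmless (it is a valid common upper bound) but slightly obscures where the $3$ originates.
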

	This property is important as it says, in a nutshell, that any two different bandlimited signals will be identifiable post-sampling provided the number of samples is large enough. The concept of large enough depends on $(\nu^k_p)^2$: a measure of the interplay between the probability distribution and the norms of the lines of $U_k$. In the uniform i.i.d. case since $\pp_i=1/n$, one has $(\nu^k_p)^2=n\max_i \|U_k^\top\delta_i\|^2_2$, which stays under control only for very regular graphs, but can be close to $n$ in irregular graphs such as the star graph. The good news is that there exists an optimal sampling distribution (in the sense that it minimizes the right-hand side of inequality~\eqref{eq:m_min}) that adapts to the graph at hand:
	\begin{align}
	\label{Eq:lev_score}
	\pp_i^* = \frac{\|U_k^\top\delta_i\|_2^2}{k}
	\end{align}
	In fact, in this case, $(\nu^k_{p^*})^2$ matches its lower bound $k$ and the necessary number of samples $m$ to embed all bandlimited signals drops to $\mathcal{O}(k\log{k})$. The distribution $p^*$ is also referred to by the name ``leverage scores'' in parts of the literature (see discussion in Section~\ref{subsec:Nystrom})~\cite{drineas2018lectures}. As such, i.i.d. sampling under $p^*$ will be referred to as leverage score sampling.\\
	
	\noindent Now, for lifting, there are several options.
	\begin{itemize}
		\item If one uses the unbiased decoder
		\begin{align}
		\label{eq:unbiased_solution}
		\hat{z} &= \argmin_{\mathbf{w}\in\spanning{U_k}} \|P^{-1/2 }_{\sf{S}}(M\mathbf{w}-y)\|^2
		\end{align}
		where $P^{-1/2 }_{\sf{S}}=MP^{-1/2 }M^\top$, 
		then the following reconstruction result holds~\cite{puy_random_2016}:
		\begin{theorem}
			Let $\sf{S}$ be the i.i.d. nodes sampled with distribution $p$ and $M$ be the associated sampling matrix. Let  $\epsilon,\delta\in(0,1)$ and suppose that $m$ satisfies Eq.~\eqref{eq:m_min}. With probability at least $1-\delta$, for all $z\in\spanning{U_k}$ and ${\bf{n}}\in\mathbb{R}^m$, the solution $\hat{z}$ of Eq.~\eqref{eq:unbiased_solution} verifies:
			$$\|\hat{z}-z\|_2\leq\frac{2}{\sqrt{m(1-\epsilon)}}
			\|P^{-1/2}_{\sf{S}}{\bf{n}}\|_2.$$
		\end{theorem}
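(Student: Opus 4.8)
The plan is to obtain the bound as a direct consequence of the Restricted Isometry Property (RIP) established in the preceding theorem, via a standard optimality-of-the-minimizer argument. The one preliminary observation I would make is that the matrix appearing in the RIP and the one appearing in the decoder coincide: since $P^{-1/2}$ is diagonal and $M$ merely selects rows, an entrywise check gives $MP^{-1/2} = P^{-1/2}_{\sf{S}}M$, where $P^{-1/2}_{\sf{S}} = MP^{-1/2}M^\top$ is the $m\times m$ diagonal matrix with entries $\pp_{s_i}^{-1/2}$. Hence the RIP may be read as $(1-\epsilon)\|\mathbf{w}\|_2^2 \leq \frac{1}{m}\|P^{-1/2}_{\sf{S}}M\mathbf{w}\|_2^2$ for all $\mathbf{w}\in\spanning{U_k}$, which is exactly the quantity the decoder of Eq.~\eqref{eq:unbiased_solution} controls.

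The key structural fact is that both the target $z$ and the reconstruction $\hat{z}$ lie in $\spanning{U_k}$, so their difference $\hat{z}-z$ is again $k$-bandlimited and the RIP applies to it. I would therefore start from the lower RIP inequality
$$ \sqrt{m(1-\epsilon)}\,\|\hat{z}-z\|_2 \leq \|P^{-1/2}_{\sf{S}}M(\hat{z}-z)\|_2, $$
and then bound the right-hand side. Writing $M(\hat{z}-z) = (M\hat{z}-y) + (y-Mz) = (M\hat{z}-y) + \mathbf{n}$ and applying the triangle inequality splits it into the decoder residual $\|P^{-1/2}_{\sf{S}}(M\hat{z}-y)\|_2$ plus the noise term $\|P^{-1/2}_{\sf{S}}\mathbf{n}\|_2$.

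The residual is controlled by optimality: because $\hat{z}$ minimizes $\|P^{-1/2}_{\sf{S}}(M\mathbf{w}-y)\|_2$ over $\mathbf{w}\in\spanning{U_k}$ and $z$ is itself a feasible competitor, we have $\|P^{-1/2}_{\sf{S}}(M\hat{z}-y)\|_2 \leq \|P^{-1/2}_{\sf{S}}(Mz-y)\|_2 = \|P^{-1/2}_{\sf{S}}\mathbf{n}\|_2$, the last step using $Mz-y=-\mathbf{n}$. Substituting back yields $\|P^{-1/2}_{\sf{S}}M(\hat{z}-z)\|_2 \leq 2\|P^{-1/2}_{\sf{S}}\mathbf{n}\|_2$, and dividing by $\sqrt{m(1-\epsilon)}$ gives the claim. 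No additional probabilistic overhead is incurred: the entire chain is deterministic once the RIP holds, so the reconstruction inequality holds on the same high-probability event, i.e.\ with probability at least $1-\delta$, and uniformly over all $z$ and $\mathbf{n}$ since the RIP itself is uniform over $\spanning{U_k}$. There is no genuine obstacle here; the only point requiring a moment's care is the identity $MP^{-1/2}=P^{-1/2}_{\sf{S}}M$, which is what lets the RIP speak directly about the decoder's objective.
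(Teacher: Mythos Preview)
Your proof is correct. The paper does not actually supply a proof of this theorem; it simply cites the original reference~\cite{puy_random_2016}. Your argument---apply the lower RIP bound to the bandlimited difference $\hat{z}-z$, split via the triangle inequality, and control the decoder residual by optimality since $z$ is a feasible competitor---is precisely the standard proof one finds in that reference, and the identification $MP^{-1/2}=P^{-1/2}_{\sf{S}}M$ is exactly the small bookkeeping step needed to align the RIP statement with the decoder objective.
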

	This means that a noiseless measurement of a $k$-bandlimited signal yields a perfect reconstruction. Also, this quantifies how increasing $m$ reduces the error of reconstruction due to a noisy measurement. Note that this error may be large if there is a significant measurement noise on a node that has a low probability of being sampled. However, by definition, this is not likely to happen.
		
		\item One can also use a label-propagation decoder reminiscent to semi-supervised learning techniques~\cite{bengio_label_2006, chapelle_semi-supervised_2010}:
		\begin{align}
		\label{eq:SSL_solution}
		\hat{z} &= \argmin_{\mathbf{w}\in\mathbb{R}^n} \|P^{-1/2 }_{\sf{S}}(M\mathbf{w}-y)\|^2 + \gamma\; \mathbf{w}^\top g(L)\mathbf{w},
		\end{align}
		where $\gamma$ is a regularization parameter, $g(L)$ a graph filter operator as in Eq.~\eqref{eq:filtering_shorthand} with  $g(\lambda)$ a non-decreasing function.  As $g$ is non-decreasing, the regularization term of Eq.~\eqref{eq:SSL_solution} penalizes high frequency solutions, that is, solutions that are not smooth along paths of the graph. 
		Theorems controlling the error of reconstruction are more involved and we refer the reader to Section 3.3 of~\cite{puy_random_2016} for details.
		\item Other decoders~\cite{bellec2017, pena} are in principle possible, replacing for instance the $\ell_2$ Laplacian-based regularization $\mathbf{w}^\top g(L)\mathbf{w}$ by $\ell_1$-regularizers $\|\nabla \mathbf{w}\|_1$, but they come with an increased computation cost, lesser guarantees, and have not been used for spectral clustering: we will thus not detail them further. 
	\end{itemize}

	Let us discuss the computation costs of the previous sampling and lifting techniques. In terms of sampling time, uniform sampling is obviously the most efficient and runs in $\O(k)$. Leverage score sampling is dominated by the computation of the optimal sampling distribution $p^*$ of Eq.~\eqref{Eq:lev_score}, which takes $\O(nk)$ time\footnote{Note that the complexity is different from the leverage score computation of the Nyström techniques of Sections~\ref{subsec:Nystrom} and~\ref{subsec:Nystrom2} because, here, we suppose $U_k$ known whereas $U_k$ was not known in the previous sections. With $U_k$ known, computing the leverage scores only entails computing the normalized energy of each line of $U_k$.}. In terms of lifting time, solving the decoder of Eq.~\eqref{eq:unbiased_solution} costs $\O(nk + mk^2)$. Solving the decoder of Eq.~\eqref{eq:SSL_solution} costs $\O(et)$ via the conjugate gradient method, where $t$ is the iteration number of the gradient solver (usually around 10 or 20 iterations suffice to obtain good accuracy when $g(L) = L$). 
	
	This discussion calls for a few remarks. First of all, these theorems are valid if we suppose that $z$ is exactly $k$-bandlimited, which is in fact only an approximation if we consider $z$ to be the ground truth indicator vectors of the $k$ clusters to detect in the spectral clustering context. In this case, we can always decompose $z$ as the sum of its orthogonal projection onto $\spanning{U_k}$  and its complement $\mathbf{\beta}$: $z=U_k U_k^\top z + \mathbf{\beta}$. Eq.~\eqref{eq:system_rec} becomes $y=M U_k U_k^\top z + \mathbf{n}$ where $\mathbf{n}$ now represents the sum of a measurement noise and the distance-to-model term $M\mathbf{\beta}$. The aforementioned theorems can then  be applied to $U_k U_k^\top z$. Moreover, note that the decoder of Eq.~\eqref{eq:SSL_solution} is not only faster than the other ones in general, it also does not constrain the solution $\hat{z}$ to be exactly in $\spanning{U_k}$, which is in fact desirable in the spectral clustering context: we thus advocate for the decoder of Eq.~\eqref{eq:SSL_solution}.\\
	
	\noindent\textbf{DPP sampling.} Determinantal Point Processes are a class of correlated random sampling strategies that strive to increase ``diversity" in the samples, based on a kernel $K$ expliciting the similarity between variables. DPP sampling has been used successfully in a number of applications in machine learning (see for instance~\cite{kulesza_determinantal_2012}).
	
	Denote by $[n]$ the set of all subsets of $\{1,2,\ldots,n\}$. An element of $[n]$ could be the empty set, all elements of $\{1,2,\ldots,n\}$ or anything in between. DPPs are defined as follows:
	
	\begin{definition}[Determinantal Point Process~\cite{kulesza_determinantal_2012}] 
		\label{def:DPP} Consider a point process, i.e., a process that randomly draws an element $\sf{S}\in[n]$. It is determinantal if, $\forall\;\sf{A}\subseteq\sf{S}$, 
		$$\mathbb{P}(\sf{A}\subseteq\sf{S}) = \text{det}(K_{\sf{A}}),$$
		where $K\in\mathbb{R}^{n\times n}$, a semi-definite positive matrix $0\preceq K\preceq 1$, is called the marginal kernel; and $K_\sf{A}$ is the restriction of $K$ to the rows and columns indexed by the elements of $\sf{A}$. 
	\end{definition}
	The marginal probability  $\pi_i$ of sampling an element $i$ is thus $K_{ii}$.
	Consider the following projective kernel: 
	\begin{align}
	\label{def:Kk}
	K_k = U_k U_k^\top. 
	\end{align}
	One can show that DPP samples from such projective kernels are necessarily of size $k$. After measuring the $k$-bandlimited signal $z$ on a DPP sample $\sf{S}$, one has the choice between the same decoders as before (see  Eqs.~\eqref{eq:unbiased_solution} and~\eqref{eq:SSL_solution}). For instance:
	\begin{theorem}
		\label{thm:Kk} For all $z\in\spanning{U_k}$, let $y=M z + \mathbf{n} \in\mathbb{R}^k$ be a noisy measurement of $z$ on a DPP sample obtained from kernel $K_k$. The decoder of Eq.~\eqref{eq:unbiased_solution} with $P=\text{diag}(\pi_1, \ldots, \pi_n)$ necessarily enables perfect reconstruction up to the noise level. Indeed, one obtains:
		\begin{align}
		\label{eq:DPP_rec_guarantee}
		\|\hat{z}-z\|_2\leq\frac{1}{\sqrt{\lambda_{\text{min}}\left(U_k^\top M^\top P^{-1}_\sf{S}MU_k\right)}}
		\|P^{-1/2}_{\sf{S}} \mathbf{n}\|_2.
		\end{align}
	\end{theorem}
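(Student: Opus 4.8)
The plan is to reduce the decoder in Eq.~\eqref{eq:unbiased_solution} to a square, invertible linear system and read the error off directly. Since both the signal and the optimization variable are constrained to $\spanning{U_k}$, I would write $z=U_k\alpha$ with $\alpha\in\Rbb^k$ and parametrize the feasible set as $\mathbf{w}=U_k\beta$, $\beta\in\Rbb^k$. Setting $A=P_{\sf{S}}^{-1/2}MU_k$ — and recalling that a DPP with the projective kernel $K_k=U_kU_k^\top$ produces samples of size exactly $m=k$, so that $M$ has $k$ rows and $A\in\Rbb^{k\times k}$ is square — the decoder becomes the unconstrained least-squares problem $\hat\beta=\argmin_{\beta\in\Rbb^k}\|A\beta-P_{\sf{S}}^{-1/2}y\|_2^2$, with $\hat z=U_k\hat\beta$.

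The crucial step is to argue that $A$ is almost surely invertible. Its entries differ from those of $MU_k=U_k(\sf{S},:)$ only by the positive diagonal rescaling $P_{\sf{S}}^{-1/2}=MP^{-1/2}M^\top$, which is indeed positive because $\pi_{s_i}=\|U_k^\top\delta_{s_i}\|_2^2>0$ for any node that can be sampled. Hence it suffices that $U_k(\sf{S},:)$ be invertible. Here I would invoke the defining property of the projective DPP: the probability of drawing a size-$k$ set $\sf{S}$ equals $\det(K_k(\sf{S},\sf{S}))=\det\!\big(U_k(\sf{S},:)U_k(\sf{S},:)^\top\big)=\det\!\big(U_k(\sf{S},:)\big)^2$, using $MU_k=U_k(\sf{S},:)$ and that this block is square. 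Any $\sf{S}$ actually realized by the process therefore has nonzero determinant, so $U_k(\sf{S},:)$ — and with it $A$ — is invertible with probability one.

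With $A$ square and invertible, the least-squares residual vanishes and $\hat\beta=A^{-1}P_{\sf{S}}^{-1/2}y$. Substituting $y=MU_k\alpha+\mathbf{n}$ and using $A\alpha=P_{\sf{S}}^{-1/2}MU_k\alpha$ gives $\hat\beta=\alpha+A^{-1}P_{\sf{S}}^{-1/2}\mathbf{n}$, so that
$$\hat z-z=U_k(\hat\beta-\alpha)=U_k A^{-1}P_{\sf{S}}^{-1/2}\mathbf{n}.$$
In particular the noiseless case $\mathbf{n}=\mathbf{0}$ already yields $\hat z=z$, the claimed perfect reconstruction. Because $U_k$ has orthonormal columns it preserves the Euclidean norm, hence $\|\hat z-z\|_2=\|A^{-1}P_{\sf{S}}^{-1/2}\mathbf{n}\|_2\le\|A^{-1}\|_2\,\|P_{\sf{S}}^{-1/2}\mathbf{n}\|_2$. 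I would then identify $\|A^{-1}\|_2=1/\sigma_{\min}(A)$ and $\sigma_{\min}(A)^2=\lambda_{\text{min}}(A^\top A)=\lambda_{\text{min}}\!\big(U_k^\top M^\top P_{\sf{S}}^{-1}MU_k\big)$, which reproduces exactly the bound~\eqref{eq:DPP_rec_guarantee}.

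The only genuinely delicate point is the almost-sure invertibility of $A$; the rest is bookkeeping with orthonormality and the normal equations. The main obstacle is therefore making precise that sets of zero determinant carry zero probability under the projective DPP, so that $\lambda_{\text{min}}$ in the bound is strictly positive and $A^{-1}$ is well defined — rather than any analytic estimate. A minor accompanying subtlety is checking that $P_{\sf{S}}$ has strictly positive diagonal (ruling out sampled nodes with $\pi_i=0$), which the same DPP marginal argument handles.
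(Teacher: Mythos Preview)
Your proof is correct and follows essentially the same route as the paper: parametrize $z$ and the optimization variable in $\spanning{U_k}$, use the projective-DPP determinant identity to argue that $MU_k$ (equivalently your $A$) is invertible so the least-squares system is solved exactly, and then bound the error via the operator norm of $A^{-1}$. Your treatment is in fact slightly more careful than the paper's in spelling out why the DPP forces $\det(U_k(\sf{S},:))\neq 0$ and why the diagonal of $P_{\sf{S}}$ is strictly positive, but the argument is otherwise the same.
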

	\begin{proof}
		The proof is only  partly in~\cite{tremblay_graph_2017} and we complete it here. Let us write $z=U_k \alpha$. Solving Eq.~\eqref{eq:unbiased_solution} entails computing $\hat{\alpha}\in\mathbb{R}^k$ s.t. $\|P^{-1/2 }_{\sf{S}}(MU_k\hat{\alpha}-y)\|^2$ is minimal. Setting the derivative w.r.t. $\hat{\alpha}$ to $0$, and replacing $y$ by $MU_k\alpha+\mathbf{n}$, yields:
		$$U_k^\top M^\top P^{-1}_\sf{S} MU_k\hat{\alpha} = U_k^\top M^\top P^{-1}_\sf{S} MU_k\alpha + U_k^\top M^\top P^{-1}_\sf{S} \mathbf{n}.$$
		Recall that $\sf{S}$ is a sample from a DPP with kernel $K_k$: $\text{det}(MU_k U_k^\top M^\top)$ is thus strictly superior to $0$, which implies that $MU_k$ is invertible, which in turn implies that $\hat{\alpha} = \alpha + (MU_k)^{-1} \mathbf{n}$. One thus has $\|\hat{z} - z\|_2 = \|\hat{\alpha} - \alpha\|_2=\left\|(MU_k)^{-1} \mathbf{n}\right\|_2=\left\|(P^{-1/2}_\sf{S}MU_k)^{-1} P^{-1/2}_\sf{S}\mathbf{n}\right\|_2$. Using the matrix $2$-norm to bound this error yields 
		$$\|\hat{z} - z\|_2\leq \sqrt{\lambda_{\text{max}}\left[\left(U_k^\top M^\top P^{-1}_\sf{S} MU_k\right)^{-1}\right]} \|P^{-1/2}_\sf{S} \mathbf{n}\|_2,$$
		as claimed.
	\end{proof}
	Several comments are in order:
	\begin{itemize}
		\item The particular choice of kernel $K_k = U_k U_k^\top$ implies that the marginal probability of sampling node $i$,  $\pi_i=\|U_k^\top\delta_i\|^2$, is proportional to the leverage scores $\pp_i^*$. The major difference between the i.i.d. leverage score approach and the DPP approach comes from the negative correlations induced by the DPP. In fact, the probability of jointly sampling nodes $i$ and $j$ in the DPP case is $\pi_i \pi_j - K_{ij}^2=\pi_i \pi_j - (\delta_i^\top U_k U_k^\top \delta_j)^2$. The interaction term $(\delta_i^\top U_k U_k^\top \delta_j)^2$ will be typically large if $i$ and $j$ are in the same cluster, and small if not. In other words, different from the i.i.d. leverage score case where each new sample is drawn regardless of the past, the DPP procedure avoids to sample nodes containing redundant information. 
		\item Whereas the leverage score approach only guarantees a RIP with high probability after $\O(k\log{k})$ samples, the DPP approach has a stronger deterministic guarantee: it enables perfect invertibility (up to the noise level) after precisely $m=k$ samples. The reconstruction guarantee of Eq.~\eqref{eq:DPP_rec_guarantee} is nevertheless not satisfactory: even corrected by the marginal probabilities $P_\sf{S}$, the matrix $U_k^\top M^\top P^{-1}_\sf{S}MU_k$ can still have a very small $\lambda_\text{min}$, such that reconstruction may be quite sensitive to noise. Improving this control is still an open problem. In practice, sampling independently $2$ or $3$ times from a DPP with kernel $K_k$, creates a set $\sf{S}$ of size $2k$ or $3k$ that is naturally more robust to noise.
		\item Whereas independent sampling is straightforward, sampling from a DPP with arbitrary kernel costs in general $\O(n^3)$ (see Alg. 1 of~\cite{kulesza_determinantal_2012} due to~\cite{hough_determinantal_2006}). Thankfully, in the case of a projective kernel such as $K_k$, one can sample a set in $\O(nk^2)$ based on Alg. 3 of~\cite{tremblay_optimized_2018}.
	\end{itemize}

	\section{Perspectives}
	\label{sec::perspectives}
	
	Almost two decades have passed since spectral clustering was first introduced. Since then, a large body of work has attempted to accelerate its computation. \textit{So, has the problem been satisfactorily addressed?} -- or, despite all these works,  is there still room for improvement and further research?  
	
	To answer, we must first define what ``satisfactorily addressed'' would entail.  As we have seen, the prototypical spectral clustering algorithm can be divided in three sub-problems: the similarity graph computation running in $\O(dn^2)$, the spectral embedding computation running in $\O(t(ek + nk^2))$ with an Arnoldi algorithm with implicit restart, and the $k$-means step running in $\O(tnk^2)$. Our criteria for evaluating an approximation algorithm aiming to accelerate one (or more) of these sub-problems are two-fold:
	
	\begin{itemize}
	    \item We ask that the approximation algorithm's computation cost is effectively lighter than the cost of the sub-problem(s) it is supposed to accelerate! The ultimate achievement is an order-of-magnitude improvement w.r.t. $n$ (or $e$), $d$ and/or $k$, especially when the complexity has no hidden constants (i.e., the algorithm is practically implementable). When such a gain is not possible, a gain on the constants of the theoretical cost is also considered worthwhile.
	
	
	\item The algorithm should come with convincing guarantees in terms of the quality of the found solution. Heuristics or partially motivated methods do not cut it. We require that, under \textit{mild assumptions}, the proposed solution is \emph{provably close to} the exact solution. Let us clarify two aspects of this statement further:
	\begin{itemize}
		\item It is difficult to concretely classify assumptions as mild, but a useful rule of thump is checking whether the theoretical results are meaningful for the significant majority of cases where spectral clustering would be used. 
		\item The control of the approximation error comes in different flavors, that we detail here from the tightest to the loosest. The best possible error control in our context is a control over the clustering solution itself, via error measures such as the misclustering rate. This is unfortunately unrealistic in many cases. An excellent  alternative is the multiplicative error --considered as the gold standard in approximation theory-- over the $k$-means cost\footnote{A control in terms of the $k$-means cost is usually considered as $k$-means is the last step of spectral clustering. Nevertheless, recalling the minimum cut perspective of Section~\ref{subsec:graph_cut}, the control should arguably be in terms of \rcut\ or \ncut\ costs.}, ensuring that the cost of the approximation is not larger than $1+\epsilon$ times the cost of the exact solution. Next comes the additive error over the cost: ensuring that the cost difference between approximated and exact solutions is not larger than $\epsilon$. All these error controls are referred to as end-to-end controls, and represent the limit of what we will consider a \emph{satisfactory} error control.
	\end{itemize}
	\end{itemize}

	Reviewing the literature, we were surprised to discover that there are rarely any algorithms meeting fully the proposed criteria: a faster algorithm with end-to-end control over the approximation error under mild assumptions. Let us revisit one by one the different approaches presented in Sections~\ref{sec:from_scratch}, ~\ref{sec:spectral_embedding} and ~\ref{sec:kmeans} examining them in light of our criteria for success. In each category of approximation algorithms, we order the methods according to the power of their error control.\\
	
	\noindent\textbf{Sampling methods in the original feature space [Section~\ref{sec:from_scratch}].}
	\begin{itemize}
		\item \emph{Representative points methods} as described in~\cite{yan_fast_2009, NIPS2008_3480} allow for an end-to-end control on the miss-clustering rate $\rho$, which is unfortunately quite loose. The constants involved in Theorem~\ref{thm:KASP} are in fact undefined --thus potentially large--, which is problematic knowing that $\rho$ is by definition between $0$ an $1$. Also, the theorem's assumptions include independence of the $\epsilon_i$, which is hard to justify in practice. On the other hand, the computation gain of such methods is very appealing.
		\item \emph{Feature projection methods}, where the dimension $d$ of the original feature space is reduced to a dimension $d'\leq d$ based on Johnson-Lindenstrauss arguments, come with a multiplicative error control on the pairwise distances in the original feature space, thus providing a control on the obtained kernel matrix. The impact of this initial approximation on the final clustering result has not been studied. 
		\item \textit{Nystr\"om-inspired methods}~\cite{fowlkes_spectral_2004, li2011time,bouneffouf2015sampling,mohan2017exploiting} can be very efficient in practice especially because they do not need to build the graph. However, precisely because they do not build the graph, these methods cannot exactly perform two key parts of the prototypical spectral clustering algorithm: the $k$-NN sparsification and the exact degree computation. The partial knowledge and sparsity of the kernel matrix also makes sampling difficult, as using leverage scores sampling is not possible anymore, whereas most other sampling schemes do not work very well with sparse matrices and come with weak guarantees. To the extent of our knowledge, there is also no convincing mathematical argument proving that using these methods will yield a clustering that is of similar quality to that produced by the exact spectral clustering algorithm.  
		\item \emph{Sketching methods} such as the Random Fourier Features~\cite{rahimi_random_2008} is yet another way of obtaining a pointwise multiplicative $(1+\epsilon)$ error on the Gaussian kernel computation. RFF enable to compute a provably good low-rank approximation of the kernel. They nevertheless suffer from the same problems as Nystr\"om-based techniques: without building the graph, sparsification and degree-normalization are uncontrolled. In addition, the guarantees on the low-rank approximation of the kernel do not transfer easily to guarantees of approximation of the spectral embedding $U_k$. 
		\item \emph{Approximate nearest neighbours methods} are numerous and varied, and come with different levels of guarantees. Practical implementations of algorithms, however, often set aside theoretical guarantees to gain on efficiency and performances; and comparisons are usually done on benchmarks rather than on theoretical performances. In the best of cases, there is a control on how close the obtained nearest neighbour similarity graph is to the exact one; but with no end-to-end control. 
	\end{itemize}
	
	\noindent\textbf{Spectral embedding approximation methods [Section~\ref{sec:spectral_embedding}].}
	\begin{itemize}
	
		\item \textit{Random eigenspace projection} is a very fast method and has been rigorously analyzed~\cite{martin2018fast,tremblay2016compressive, paratte2016fast, ramasamy_compressive_2015}. It is true that a successful application depends on obtaining a good estimate of the $k$-th eigenvalue, which is very hard when the $k$-th eigenvalue gap is relatively small. Nevertheless, our current understanding of spectral clustering suggests that it only works well when the gap is (at least) moderately large. As such, though there are definitely situations in which random eigenspace projection will fail to provide an acceleration, these correspond to cases where one should not be using spectral clustering in the first place. The same argumentation can also be used in defense of all methods that come with mild gap assumptions (see coarsening, and spectral sparsification). 

		\item \textit{Simple coarsening methods}, such as the heavy-edge matching heuristic~\cite{karypis1998fast}, have nearly-linear complexity, seem to work well in practice, and are accompanied by end-to-end additive error control~\cite{loukas2018spectrally}. Nevertheless, the current analysis of these heuristics only accounts for very moderate reductions ($m \geq n/2$) and thus does not fully prove their success: in real implementations coarsening is used in a multi-level fashion resulting to a drastic decrease in the graph size ($m = \O(n/2^\c) $ for $\c$ levels), whereas the end-to-end control only works for a single level. 
		
		\item \textit{Advanced coarsening methods}, such as local variation methods~\cite{loukas2018graph}, come with much stronger guarantees that allow for drastic size reduction and acceleration. Yet, thus far, all evidence suggests that finding a good enough coarsening is computationally as hard as solving the spectral clustering problem itself. As a consequence, it is at this point unclear whether these methods can be used to accelerate spectral clustering. 
		
		\item \textit{Spectral sparsification techniques} come with excellent guarantees in theory: one may prove that a spectral sparsifier can be computed in nearly-linear time and, moreover, the latter's spectrum will be provably close to the original one. Yet, we have reasons to doubt their practicality. Indeed, current algorithms are very complex, feature impractically large constants, and are only relevant for dense graphs. In addition, spectral sparsifiers, by definition, approximate the entire spectrum of a graph Laplacian matrix. However, spectral clustering only needs an approximation of a tiny fraction of the spectrum. From that perspective, it is reasonable to conclude that without modification current approaches will not yield the best possible approximation.  
	    \item \textit{Nyström-approximation} applied directly to the Laplacian matrix is a good option, especially when combined with leverage score sampling. Nevertheless, an end-to-end error control has only been partially derived and is not yet satisfactory.	    
	\end{itemize}
	
	\noindent\textbf{Sampling to accelerate the $k$-means step [Section~\ref{sec:kmeans}].}
	\begin{itemize}
		\item \emph{Exact methods to accelerate the Lloyd-Max algorithm}, may they be via avoiding unnecessary distance calculations or via a careful initialization are always useful and should be taken into account.
		\item \emph{Coresets} come with the strongest guarantees: the minimum number of samples to guarantee a  $(1+\epsilon)$ multiplicative error on the cost function has been well studied. Nevertheless, practical coreset sampling methods are scarce; and in the best cases, the sampling cost is of the same order of the Lloyd-Max running cost itself.
		\item \emph{Graph-based sampling} comes with strong guarantees, but not over the $k$-means cost: on the reconstruction error based on a $k$-bandlimited model that is only an approximation in practice. Moreover, we interpret the reduced indicator vectors $z_\ell^r$ obtained by running Lloyd-Max on the sampled set $\sf{S}$ as (possibly noisy) measurements of $z_\ell$ on $\sf{S}$. 
		This interpretation currently lacks solid theoretical ground and impedes an end-to-end control of this approximation method. Nevertheless, the leverage-score based sampling allows for a reduction in order of magnitude of the Lloyd-Max running cost. 
		\item \emph{Other methods} to accelerate $k$-means are not always appropriate to the spectral clustering context. Spectral feature dimension reduction is unnecessary in our context where $d=k$, sketching methods appropriate to distributed cases where $n$ is very large are not appropriate neither as the spectral features need centralized data to be computed in any case. 
	\end{itemize}
	
	\noindent\textbf{In practice.} The attentive reader will have remarked that, unsurprisingly, the tighter the error control, the more expensive the computation, and vice versa. Also, although we have put here an emphasis on the approximation error controls, it should not undermine the fact that methods from the whole spectrum are in practice useful, depending on the situation at hand, and specifically on the range of values of $n$, $d$ and $k$. In very large $d$ situations, a first step of random projection (or feature selection if some features are suspected to be too noisy) should be considered. Then, in situations where the exact computation of the proximity graph is too expensive, one may resort either to sketching methods or to  Nystr\"om-type methods to decrease the cost from quadratic to linear in $n$, and directly obtain an approximation of the spectral embedding without any explicit graph construction. 
	These methods, however, do not take into account a sparsity constraint on the proximity graph and are usually rough on the degree correction they make. 
	
	The role of the sparsity constraint is not well understood theoretically, but seems to be important in some practical cases~\cite{von2007tutorial}. In such instances, a better option is to use approximate nearest neighbours methods to create a sparse similarity graph, and work from there. In extremely large data, say $n\geq 10^8$, the only workable methods are the representative-based, with, if possible, a first $k$-means (or compressive $k$-means~\cite{keriven_compressive_2017}) to reduce $n$ to $m$ ; or, in last resort, a uniform random sampling strategy. 
	
	In situations where one has to deal with such a large similarity graph that Arnoldi iterations are too expensive to compute the spectral embedding 
	(either a graph created via approximate nearest neighbours, or if the original data \emph{is} a graph), projection methods such as in~\cite{tremblay2016compressive, boutsidis2015spectral},  coarsening methods such as in~\cite{loukas2018graph}, or  Nyström-based methods are different possibilities. 
	
	Sampling methods to accelerate the last $k$-means step may seem to be a theoretical endeavour given that the Lloyd-Max algorithm 
	is already very efficient. Due to the quadratic term in $k$, it is nevertheless in practice useful when $k$ grows large. In this situation, hierarchical $k$-means~\cite{nister_scalable_2006} is a nice option. Coresets, because they are so stringent on the error control, have a hard time actually accelerating $k$-means, unless hybrid coreset-inspired methods are envisioned~\cite{frahling_fast_2008}. Finally, graph-based methods, because they take into account that spectral features are in fact derived from the graph itself, enable significant acceleration and are well-suited to the spectral clustering context. \\
	 
	\noindent\textbf{Future research.} Different directions of research could be envisioned to improve the state-of-the-art: 
	
	\begin{itemize}
	    \item 	For Nystr\"om-inspired methods in the context of Sec.~\ref{sec:from_scratch} (directly applied on the original data) as well as the other methods based on  computing a low-rank approximation of the kernel matrix $K$,  further work is needed to control both the sparsification and the degree correction, in order to bridge the gap between a provably good low-rank approximation of $K$ to a provably good low-rank approximation of  $R$.
	    
	    \item For Nystr\"om methods in the context of Sec.~\ref{sec:spectral_embedding} (applied on a known or well-approximated similarity graph), it would be interesting to extend Theorem~\ref{thm:lev_score_Nystrom} (for instance) to a control over $\|A_k - \tilde{A}_k\|$ instead of $\|A - \tilde{A}\|$. This would enable a tighter use of Davis-Kahan's perturbation theorem in the discussion of Sec.~\ref{subsec:Nystrom2} and, $\emph{in fine}$, a better end-to-end guarantee.
	
	    \item Projection-based methods of Sec.~\ref{subsec:projection-based} currently necessitate to compute a value $\lambda_*$ known to be in the interval $[\lambda_k, \lambda_{k+1})$. The algorithm used to do so is based on eigencount techniques that turn out to require as much computation time as the Lanczos iterations needed to compute $U_k$ exactly. One should relax this constraint to obtain end-to-end guarantees as a function of the distance between a  coarsely estimated $\lambda_*$ and the target interval.  
	
	    \item The derivation and analysis of randomized multi-level coarsening schemes with end-to-end guarantees is very much an open problem. We suspect that, by utilizing spectrum-dependent sampling-schemes akin to leverage-scores one should be able to achieve results superior to heavy edge-matching in nearly linear time.     
	    
	    \item There is an interesting similarity between coreset techniques and the graph-based sampling strategies discussed in Sec.~\ref{subsec:graph_based} and it would be interesting to investigate this link theoretically, maybe paving the way to coresets for spectral clustering?
	
	\end{itemize}

	Finally, accelerating the prototypical spectral algorithm depicted in Algorithm 1 should not be the sole objective of researchers in this field. Indeed, taking the graph cut point-of-view of Sec.~\ref{subsec:graph_cut}, Algorithm 1 makes three insufficiently motivated choices: (i) To begin with, the sparsification step in Algorithm 1 is not well understood. Apart from the fact that it is always computationally more convenient to work with a sparse similarity graph then a dense one, the precise effect of sparsification on the clustering performance has not been analyzed.
	(ii) As mentioned in Section~\ref{subsubsec:relax_bro}, the relaxation employed by spectral relaxation is not unique. Why should we focus our attention on this one versus another? See for instance~\cite{bresson_multiclass_2013, rangapuram_tight_2014} for recent alternative options. 
	(iii) Finally, the use of $k$-means on the spectral features is not yet fully justified.  Most of the end-to-end guarantees presented here compare the $k$-means cost of the exact solution to the $k$-means cost of the approximate solution. Given that the very use of $k$-means is not theoretically grounded, this choice of guarantee is debatable. Other options, such as a control over the \rcut\ or \ncut\ objectives are possible (as in~\cite{Mohan2017BeyondTN}) and should be further investigated.
	
	\section*{Acknowledgments}
	
	This work was partially supported by the CNRS PEPS I3A (Project RW4SPEC) and the Swiss National Science Foundation (Project DLEG, grant PZ00P2 179981). 
	
	
	

	{\footnotesize
	\bibliographystyle{abbrv}
	\bibliography{references.bib} }
	
\end{document}